\renewcommand\Affilfont{\small\normalfont}
\renewcommand\AB@affilsepx{\hspace{40pt} \protect\Affilfont}
\crefname{app}{Appendix}{Appendices}
\Crefname{app}{Appendix}{Appendices}
\definecolor{tab10-0}{RGB}{31,119,180}
\definecolor{tab10-1}{RGB}{255,127,14}
\definecolor{tab10-2}{RGB}{44,160,44}
\definecolor{tab10-3}{RGB}{214,39,40}
\definecolor{tab10-4}{RGB}{148,103,189}
\definecolor{tab10-5}{RGB}{140,86,75}
\definecolor{tab10-6}{RGB}{227,119,194}
\definecolor{tab10-7}{RGB}{127,127,127}
\definecolor{tab10-8}{RGB}{188,189,34}
\definecolor{tab10-9}{RGB}{23,190,207}
\newcommand{\kat}[1]{{\bf\textcolor{violet}{ Kat: #1}}}
\newcommand{\norm}[1]{\left\lVert#1\right\rVert}
\title{Geometry-Aware Edge Pooling\\ for Graph Neural Networks}
\renewcommand{\thefootnote}{\fnsymbol{footnote}}
\author[*,1,2]{Katharina Limbeck}
\author[*,3,4]{Lydia Mezrag}
\author[$\dagger$,3,4]{Guy Wolf}
\author[$\dagger$,1,5]{Bastian Rieck}
\affil[1]{Helmholtz Munich}
\affil[2]{Technical University of Munich} 
\affil[3]{Université de Montréal}
\affil[4]{Mila - Quebec AI Institute}
\affil[5]{Université de Fribourg}
\begin{document}

\footnotetext[1]{These authors contributed equally to this work.}
\footnotetext[2]{These authors jointly supervised this work.}

\maketitle
\newtheorem{thm}{Theorem}
\newtheorem{lem}{Lemma}
\newtheorem{prop}{Proposition}
\newtheorem{cor}{Corollary}
\newtheorem{definition}{Definition}
\newtheorem{example}{Example}
\newtheorem{rk}{Remark}
\newtheorem{claim}{Claim}
\begin{abstract}

  Graph Neural Networks (GNNs) have shown significant success for graph-based tasks. Motivated by the prevalence of large  datasets in real-world applications, 
  pooling layers are crucial components of GNNs. 
  By reducing the size of input graphs, pooling enables faster training and potentially better generalisation. 
  However, existing pooling operations often optimise for the learning task at the expense of discarding fundamental graph structures, thus reducing interpretability. 
  This leads to unreliable performance across dataset types, downstream tasks and pooling ratios. Addressing these concerns, we propose novel graph pooling layers for structure-aware pooling via edge collapses. Our methods leverage diffusion geometry and iteratively reduce a graph’s size while preserving both its metric structure and its structural diversity. We guide pooling using \emph{magnitude}, an isometry-invariant diversity measure, which permits us to control the fidelity of the pooling process. 
  Further, we use the \emph{spread} of a metric space as a faster and more stable alternative ensuring computational efficiency. 
  Empirical results demonstrate that our methods \begin{inparaenum}[(i)]
      \item achieve top performance compared to alternative pooling layers across a range of diverse graph classification tasks, 
      \item preserve key spectral properties of the input graphs, and
      \item retain high accuracy across varying pooling ratios.
  \end{inparaenum}
\end{abstract}

\setcounter{footnote}{0}
\renewcommand{\thefootnote}{\arabic{footnote}}

\section{Introduction} 

Graph pooling layers are important components of GNN architectures. They are implemented alongside convolutional layers to reduce the size of graph representations during training. Pooling thus enables GNNs to scale to large and complex real-world graphs while regularising the resulting representation. However, the choice of pooling method strongly influences downstream-applications and task-performance. In fact, the question of which graph properties to preserve during pooling, just as the question on the nature and quality of graph datasets in graph learning \citep{coupette2025no}, remains an ongoing debate~\citep{ying2024boosting, liu2023graph, mesquita2020rethinking}.
It is thus crucial to design expressive, efficient, and interpretable pooling layers that are capable of reliably encoding task-relevant information 
while reducing the size of input graphs. Most graph pooling literature takes a node-centric view \citep{liu2023graph}. However, this focus on \emph{node-centric} rather than \emph{edge-centric} pooling  often leads to the loss of important structural information. Common pooling methods either drop nodes or optimise for a node clustering while treating graph connectivities as a secondary objective. As visualised in \Cref{fig:overview_graphs} and further explored in our work, this frequently leads to counter-intuitive pooling decisions that fail to retain key geometric structures in a graph. 
Addressing these concerns, topological and geometric descriptors of graphs are \emph{uniquely} poised to interoperate structural information into graph pooling. 

Throughout this work, we treat graphs as \emph{metric spaces} and assess their geometry via diffusion distances, which naturally work alongside message passing to effectively encode key graph structures. Motivated by ongoing research on novel geometric invariants, 
we find that generalised measures of size and diversity are especially promising candidates for guiding graph pooling. 
In particular, we use the \emph{magnitude} of a metric space~\citep{leinster2021entropy}, which measures a graph's structural diversity, to control the loss of structural information during edge pooling.
Our work is motivated by successful applications of magnitude across a range of machine learning tasks, such as the evaluation of diversity~\citep{limbeck2024metric} for latent spaces, boundary detection for images~\citep{adamer2024magnitude}, and the study of the generalisation behaviour of neural networks~\citep{andreeva2023metric, andreeva2024topological}. Building up on this research, we are the first to propose the use of magnitude in the context of graph learning. We further advance on existing applications by investigating an alternative and closely-related measure, known as the \emph{spread} of a metric space~\citep{willerton2015spread}, to substantially improve the computational efficiency of our methods. 
Our main \textbf{contributions} are as follows:
\begin{itemize}[noitemsep, leftmargin=1em, topsep=0pt]
    \item 
    We propose MagEdgePool and SpreadEdgePool, two novel edge-contraction based pooling layers, which preserve graphs' structural diversity. 
\item 
We investigate the spread of a metric space as a faster and more stable alternative to magnitude  
and ensure that our algorithms can be computed efficiently. 
\item 
We evaluate our methods' capability to preserve key structural properties during pooling. 
\item 
We demonstrate that 
our pooling methods \emph{consistently} perform well in graph classification tasks, achieving top accuracies among other pooling layers across a wide range of experimental setups.
\end{itemize}

\section{Background} 
\label{sec:background}

We briefly provide background information on graph pooling, the magnitude of metric spaces, and diffusion geometry, taking care
to point out related work and how it differs from ours.

\subsection{Graph Pooling for Graph Neural Networks}

\begin{figure}[tbp]
  \centering
  \includegraphics[width=\linewidth]{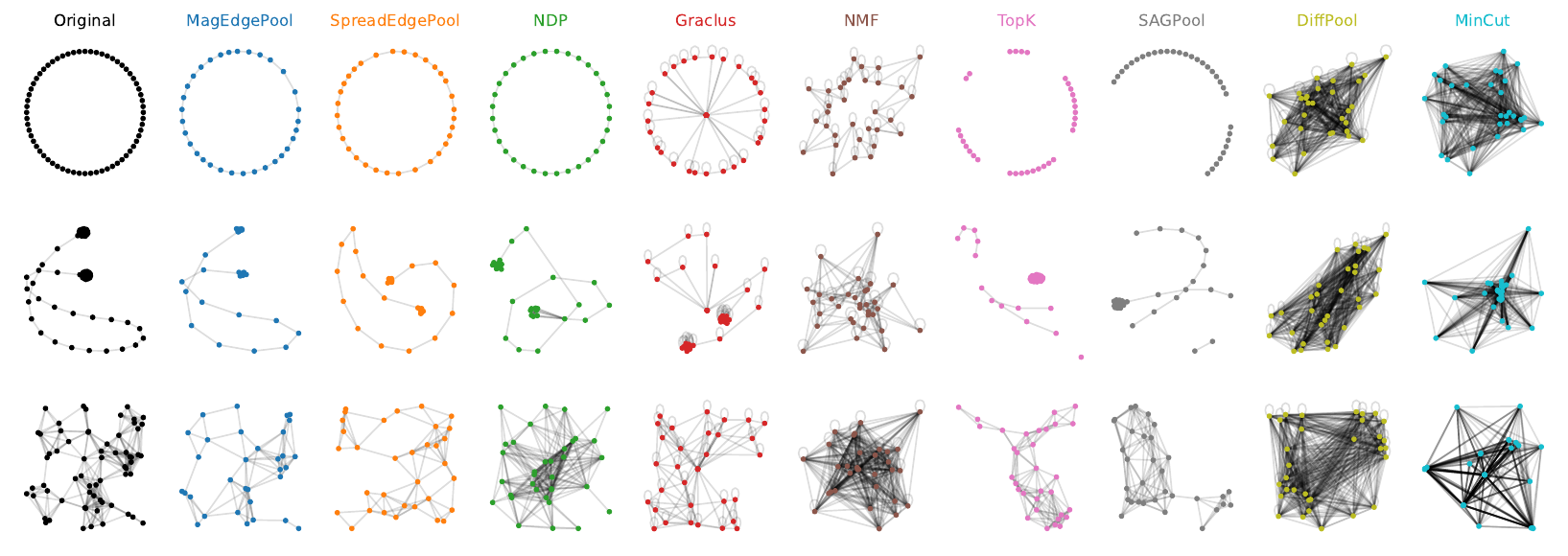}
  \caption{Examples of graphs pooled to approximately half their original size compared across pooling layers. Our proposed  methods, MagEdgePool and SpreadEdgePool, respect the original graphs' geometry during pooling. Alternative approaches tend to obscure adjacency relationships to varying extents by creating counter-intuitive edges (Graclus, NMF), disconnecting entire portions of the graphs (TopK, SAGPool), or returning dense representations that do not preserve any geometric structure (DiffPool, MinCut).}\label{fig:overview_graphs}
\end{figure} 

As an ongoing field of interest for graph learning, a wide range of pooling methods has been proposed, which 
can be divided into \emph{global} and \emph{hierarchical} approaches. Global and hierarchical pooling methods fulfil fundamentally different but complementary roles as distinct components of GNN architectures \citep{grattarola2022understanding}. Global or flat pooling methods generate graph-level representations by reducing each graph to a single node and are commonly used as readout operations. Examples include mean-pool or sum-pool, which average or sum all node features, 
SOPool \citep{wang2024graph}, which uses second-order feature information, or DKEPool \citep{chen2022distribution}, which learns node distributions. However, 
global pooling methods 
discard all topological information, which can reduce the expressivity of these models and decrease their performance \citep{knyazev2019understanding, bianchi2024expressive, grattarola2022understanding}. By contrast, hierarchical pooling methods sequentially coarsen graph representations while reducing their size, 
and 
are frequently used in alternation with intermediate convolutional or message passing layers \citep{liu2023graph}. 
By reducing both features and adjacencies, hierarchical methods have the ability 
to preserve graphs' inherent geometry allowing GNNs to learn across multiple coarsening levels. 

Research has traditionally focused on \emph{node-based} hierarchical pooling via node clustering or node dropping. 
Amongst node drop pooling methods, Node Decimal Pooling (NDP) \citep{bianchi2020hierarchical} is non-trainable, while 
TopK \citep{gao2019graph, cangea2018towards} and SAGPool \citep{lee2019self}, are trainable approaches. However, all these methods  inevitably lose information because they do not select all vertices but remove entire sections of the graph during pooling \citep{liu2023graph}. 
Node clustering approaches are similarly either non-trainable, such as 
Graclus \citep{dhillon2007weighted} and Non-Negative Matrix Factorization (NMF) \citep{bacciu2019non}, or trainable, such as MinCUT \citep{bianchi2020spectral} and DiffPool \citep{ying2018hierarchical}, which output dense graph representations. 
In comparison to node drop pooling, clustering-based methods usually require high memory costs \citep{liu2023graph}. Overall, trainable methods have the potential to better optimise for a specific objective, but  might overfit to the task at hand, especially for smaller datasets. By contrast, non-trainable methods can act as a stronger inductive bias on the underlying graph representation, and do not introduce additional trainable parameters or optimisation objectives into the GNN  \citep{grattarola2022understanding}. 
\emph{Edge-based} pooling methods have been studied less extensively than node-centric pooling 
\citep{diehl2019edge, landolfi2022revisiting} despite their promising potential for encoding connectivities in a more faithful and natural manner. EdgePool \citep{diehl2019edge}, the most successful edge-based method,  uses iterative edge-contraction based on edge scores, which are learnt from features of adjacent nodes. However, EdgePool always pools graphs to half their size, making it less flexible than more adaptive methods. Moreover, learning the edge scores during training is computationally expensive 
\citep{bianchi2024expressive} necessitating the development of faster and more efficient 
edge pooling approaches. Further, EdgePool does not explicitly consider a graph's topology, beyond posing a constraint on not contracting adjacent edges. Thus, selecting edges based on node features can lead to counter-intuitive decisions during pooling, for example by retaining local structures in strongly-connected communities even at high pooling ratios \citep{diehl2019edge}. 
Interpretability 
remains a leading concern \citep{liu2023graph}, and non-trainable approaches are capable of being well-performing and traceable  baseline pooling methods \citep{grattarola2022understanding}.
We therefore find that there is strong potential for designing geometry-aware edge pooling operations that make interpretable decisions on which aspects of the graph to retain. 
\Cref{fig:overview_graphs} further illustrates how many of the aforementioned pooling methods inherently fail to preserve key graph structures even for simple toy examples. Trainable methods in this overview were optimised for a spectral loss following \citet{grattarola2022understanding}, but this does \emph{not} ensure  interpretable preservation of graphs' underlying geometry. Addressing these shortcomings, it is of interest to leverage alternative tools from computational geometry, which allow us to quantify the qualitative difference between  graphs. 
While pooling based on 
spectral properties has been investigated extensively \citep{bianchi2020hierarchical, grattarola2022understanding}, alternative 
geometric 
invariants like  curvature or persistent homology
have only been explored more recently 
\citep{ying2024boosting, feng2024graph},  and 
have 
not yet been applied to edge pooling specifically.

\subsection{The Magnitude and Spread of Metric Spaces}
\label{sec:mag_spread}

Magnitude is an invariant of (finite) metric spaces that measures the `effective size' of a space. It is a measure of entropy and diversity \citep{leinster2021entropy} 
that has first been proposed in theoretical ecology \citep{solow1994measuring}.
Since its mathematical formalisation by \citet{leinster2013magnitude}, magnitude has been connected to numerous key geometric invariants, such as entropy, curvature, density, volume, and intrinsic dimensionality \citep{leinster2021entropy}.
Because of its intriguing theoretical properties magnitude has received increasing interest for machine learning tasks \citep{limbeck2024metric}. 
However, the magnitude of graphs~\citep{leinster2019magnitude}, despite being a strong graph invariant, has not yet found its way into applications.


Throughout this paper, we consider an undirected finite graph \(G = (X,E)\) with $n$ nodes as a \emph{finite metric space}, consisting of the node set \(X\) equipped with a \emph{metric} \(d\colon X \times X \rightarrow \mathbb{R}_{\geq0}\). Its similarity matrix \(\zeta_{X} \subseteq \mathbb{R}^{n \times n}\) is defined by \(\zeta_{X}(x,y) = e^{-d(x,y)}\) for \(x,y \in X\). This allows us to introduce similarity-dependent notions of the diversity of metric spaces. 
To this end, we define a \emph{weighting} on the metric space \((X,d)\), which is a vector \(w \in \mathbb{R}^{n}\) such that \(\zeta_{X} w = \mathds{1}\), where $\mathds{1}$ is the column vector of ones. 
Whenever such a weighting exists, the \emph{magnitude} of the metric space \((X,d)\) is \emph{uniquely} defined by \(\text{Mag}(X)= \sum_{i=1}^{n} w(i)\). This is guaranteed if \(\zeta_X\) is positive definite, which essentially means that \(\zeta_X\) is invertible. A finite metric space with positive definite similarity matrix is called \emph{positive definite} \citep{meckes2013positive}. 
Metric spaces of negative type are positive definite \citep{leinster2021entropy}; this includes \(\mathbb{R}^n\) equipped with the Euclidean distance \citep{leinster2013magnitude}, effective resistance distances \citep{devriendt2022discrete}, and diffusion distances~\citep{coifman2006diffusion}. 
Subsequently, we will refer to the magnitude 
of a graph \(G\) 
as the magnitude of its associated metric space \((X,d)\). 
That is, we define the \emph{magnitude of a graph} as \begin{equation}
    \text{Mag(G)}= \sum_{x,y \in X} \zeta_{X}^{-1} (x,y).
\end{equation}
Closely related to magnitude, the \emph{spread} of a metric space is another measure of `size' introduced by \citet{willerton2015spread}. Given a metric graph \(G\) with the graph metric \(d\), its \emph{spread} is defined by \begin{equation}\text{Sp}(G) := \sum_{x \in X} \frac{1}{\sum_{y\in X} e^{-d(x,y)}}.
\end{equation}
As diversity measures on graphs, both \emph{magnitude} and \emph{spread} summarise the number of distinct sub-communities in a network based on the distance metric and degree of similarity between nodes. This view on structural diversity naturally aligns with our goal of contracting redundant graph structures during pooling. 
Throughout this work, we 
investigate 
to what extent spread is a valid alternative to magnitude.
This is motivated by the fact that computing magnitude in practice
either requires inverting a matrix, solving a system of linear equations~\citep{limbeck2024metric}, or resort to approximations~\citep{andreeva2024approximating}, which can be computationally expensive and numerically unstable. 
Metric-space \emph{spread} in comparison can be computed given \emph{any} distance~(obviating the requirement of metric spaces of negative type), making it much more versatile 
\citep{willerton2015spread}. Moreover, as the sum of reciprocal mean similarities, spread can be calculated or 
approximated \citep{dunne2024efficiently} much more efficiently than magnitude and does not require inverting a matrix.
Although spread has been studied less extensively, there are strong reasons to assume that it shares the same advantages as magnitude. 
In fact, for a positive definite metric space $X$, we have \(\mathrm{Sp}(X) \leq \mathrm{Mag}(X)\)~\citep[Theorem~2]{willerton2015spread}. 
Moreover, magnitude and spread coincide for finite homogeneous metric spaces~\citep[Theorem~3]{willerton2015spread}, such as the ring graph in \Cref{fig:overview_graphs}. In practice, as we further explore in \Cref{app:mag_spread_corr}, 
the magnitude and spread of graphs from 
real-world datasets, such as NCI1, exhibit nigh-perfect correlation when computed based on diffusion distances, underlining the strong connection between the two quantities.

\subsection{Diffusion Geometry on Graphs}

\label{sec:diffusion}
In this work, we use diffusion distances to compute magnitude and spread on graphs. This is motivated by their desirable theoretical properties and the capability of diffusion to aid 
and act along message passing in GNNs: Diffusion operators are closely associated to random walks and are efficient at identifying important structures in complex geometries while preserving local and non-linear structures~\citep{gasteiger2019diffusion}. The key idea is that the eigenvectors of the Markov matrices can be thought of as coordinates for the   underlying graph structure \citep{coifman2006diffusion}. This provides a vector-space representation of the graph that can be used to 
assess the dissimilarity between nodes. 

We now briefly detail the type of diffusion distance used throughout this paper.
Consider a graph \(G\) and its adjacency matrix \(A\).
Let \(D\) be the diagonal degree matrix whose diagonal entries \(D_{ii}=\sum_{j=1}^nA_{ij}\) equal the degree of each vertex. The symmetrically \emph{normalised adjacency matrix} $\hat{A}:= D^{-\frac{1}{2}}AD^{\frac{1}{2}}$ is a \emph{Markov transition matrix} and represents the probability of moving from one vertex to another. 
The \emph{normalised graph Laplacian} is defined as \(\hat{L}= I-\hat{A}=D^{-\frac{1}{2}}(D-A)D^{-\frac{1}{2}}\). Since $\hat{L}$ is symmetric and positive definite, it has positive eigenvalues \(2 \geq \lambda_0 > \lambda_1 > \lambda_2 > \cdots > \lambda_{n-1} \geq 0\) with eigenvectors \(\lbrace \psi_l\rbrace_{l}\).
This provides a natural embedding of the graph \(G\) in Euclidean space given by:
\begin{equation}\label{eq:Phi}
    \Phi(x) = (\lambda_1 \psi_1(x), \cdots, \lambda_{n-1} \psi_{n-1}(x)) \text{ for } x\in X.
\end{equation}
The \emph{diffusion distance} is then defined by the $l^2$-norm, i.e.,
\begin{equation}
    d(x,y) = \norm{\Phi(x)-\Phi(y)}_{2} \text{ for } x,y \in X.
\end{equation}

\begin{restatable}{thm}{diffdist}
    Any finite metric space $(X,d)$ endowed with the diffusion distance is positive definite.
    \label[theorem]{thm:diff_dist_positive}
\end{restatable}
As a consequence of \Cref{thm:diff_dist_positive}, the magnitude of any metric graph 
equipped with this diffusion distance is 
well defined. 
Leveraging diffusion distances for 
our methods has further benefits. The normalised graph Laplacian, which works with the relative connectivity between nodes, is robust to varying node degrees and graph sizes. 
This ensures that diffusion distances are on comparable scales across graphs, which  enables us to compute and compare magnitude and spread directly. We note, however, that our pooling methods are flexible and can be applied to a wider range of alternative distances or similarities between nodes, which can be tailored to an application domain.

\section{Methods} 

We first describe our magnitude-guided graph pooling methods in \Cref{sec:mag_pooling} while providing an in-depth theoretical analysis in \Cref{sec:theory}.

\subsection{Magnitude-Guided Graph Pooling}
\label{sec:mag_pooling}

At the heart of our approach, we use magnitude or spread to monitor and control structural changes in the graph during edge contraction pooling. Edge contraction is chosen as a pooling operation because it respects graph connectivity, while outputting a sparsely-connected graph representation. 
Conceptually, edge pooling thus aligns well with the goal of making \emph{minimal} changes to the graph and keeping its diversity and geometry as unchanged as possible. Intuitively, pooled graphs with comparable magnitude will be similar in terms of effective size. That is, the effective number of distinct communities in the graphs are deemed similar based on their diffusion distance, i.e.\ based on the information flow between vertices.

\begin{figure}
  \centering
  \includegraphics[trim={0 0 
1.75cm 0},clip,width=0.97\linewidth]{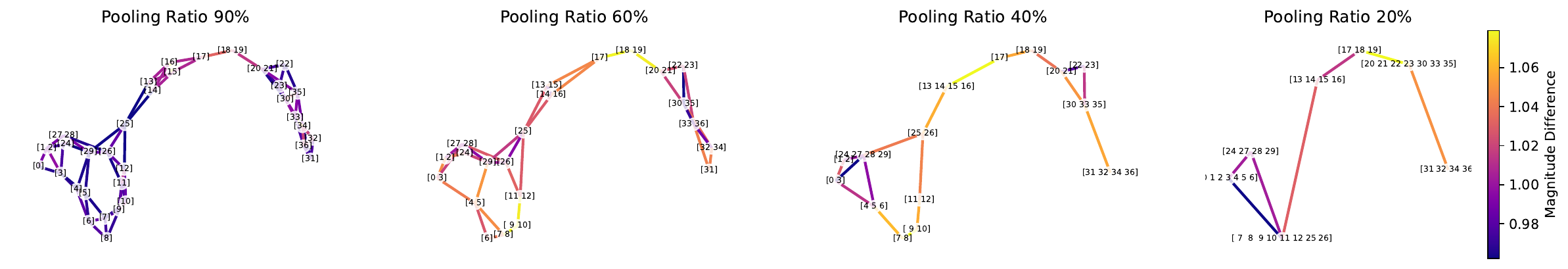}
  \caption{Illustrating our proposed pooling method, MagEdgePool, on a graph from the ENZYMES dataset across varying pooling ratios. Each edge is coloured by its magnitude difference, which measures the impact its contraction would have on the graph's structural diversity. 
  Edges with low magnitude differences are most redundant for the graph's geometry and are collapsed first.  }\label{fig:example_enzyme}
\end{figure}

Formally, let \(G=(X,E)\) be a graph and denote by \(G/{e}\) the graph resulting from the contraction of the edge \(e \in E\) in $G$. 
We choose a pooling ratio \(r\in (0, 1]\) and aim to reduce the graph to the corresponding number of nodes i.e.\ \(k= \lfloor r \cdot |X| \rceil \) where \(\lfloor {\,} \rceil\) denotes rounding to the nearest integer. Initially, we set the pooled graph \(G'= (X', E') :=G\). 
To assess which edges to contract first, we determine their importance for the graph's global geometry by computing a \emph{selection score} for each edge, which we  define  as
\begin{equation}
    \label{eq:scores}
    s(e) = |\text{Mag}(G) - \text{Mag}(G / {e})|.
\end{equation}
That is, we calculate the difference in magnitude between the original graph and the graph for which the edge has been collapsed. In this manner, we score an edge's relevance for the graph structure by the impact its collapse would have on the graph's magnitude. 
In each iteration \(i\), we then select the edge with the lowest magnitude difference 
\begin{equation}
    e_{i} \in \arg \min_{e\in E'\setminus E_c}  s(e)
\end{equation}
and assign \(G'=G'/e_{i}\) where \(E_c \subseteq E'\) is the set of all edges that are  adjacent to an already contracted edge. The edge \(e_i\) that we contract is selected \emph{at random} whenever there is more than one valid option. 
If all edges that meet this requirement have been collapsed, but the pooling ratio is not reached, we re-compute the edge scores on the new graph and repeat this procedure. 
We stop if the pooling ratio is reached, i.e.\ \(|X'| = \lfloor r \cdot |X|\rceil\), or if there are no edges left in the reduced graph, i.e.\ \(E'= \emptyset\). This approach allows us to flexibly reduce graphs to any desired size. 

Edge pooling then gives a hard assignment of nodes, where each vertex is assigned to a single super-node in the output graph based on the neighbours it has been merged with. 
To compress the node features, we \emph{average} the features of any node that contributed to a pooled super-node. This ensures that information on all nodes' features is preserved during pooling. Note that the feature aggregation function could easily be modified to use sum pooling instead of mean pooling, for example.
Restricting the number of times a vertex can be merged is enforced to prevent our methods from collapsing entire portions of a graph early in the pooling process. This enables a more uniform pooling across the graph, which aids feature preservation and expressivity. 

\begin{tcolorbox}[
  boxsep     = 0.0mm,
  colframe   = black,
  colback    = black!10,
  left       = 2.0mm,
  right      = 2.0mm,
  sharp corners,
]
  \textbf{In a nutshell:}
  Our methods assume 
that the \emph{most redundant edges} will be those whose removal would change the diversity of the graph the least.
Magnitude 
informs us about the global importance of an edge.
This implies that we want to start by merging edges from well-connected communities 
whenever their contraction does not notably change the graph's effective size. 
\end{tcolorbox}

As illustrated in \Cref{fig:example_enzyme}, our pooling method ensures that globally influential edges will be collapsed \emph{late} in the pooling progress. For example, the edge 
which bridges the two parts of this enzyme graph is scored more highly and thus merged later than well-connected cliques. 
Redundant local structures are collapsed first, and the overall~(diffusion) geometry is respected. For the graph in \Cref{fig:example_enzyme}, this ensures that characteristic features of the enzyme, such as the \emph{cycle}, are preserved across the pooling process. 
We provide a pseudocode implementation of our algorithm in \Cref{app:algo}. Notice that we may use, \emph{mutatis mutandis}, spread in lieu of magnitude. Whenever we use magnitude to compute the edge scores, we denote our algorithm by MagEdgePool, else we use the moniker SpreadEdgePool.






\subsection{Theoretical Analysis}
\label{sec:theory}

We now present theoretical properties of our pooling methods. First, we highlight fundamental invariances and properties of magnitude and spread used to design our algorithms. Further, we provide a bound on the difference in magnitude during pooling by the difference in spread, demonstrating the close relationship between the two. 
For a complete list or theorems and proofs, please refer to \Cref{app:proofs_mag}.

\paragraph{Additivity for disjoint graphs.} An important property of magnitude is that it behaves like the cardinality of sets. This behaviour is  useful when computing the magnitude of a disconnected graph by splitting the problem into calculating the magnitude of the disconnected components.
\begin{restatable}{thm}{disjoint}
\label[theorem]{thm:disjoint}
    Consider a graph $G = G_1 \sqcup G_2$ consisting of the disjoint union of two graphs $G_1$ and $G_2$. Then $\text{Mag}(G) = \text{Mag}(G_1)+ \text{Mag}(G_2)$.  
\end{restatable}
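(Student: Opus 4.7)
The plan is to exploit the fact that in the disjoint union $G = G_1 \sqcup G_2$ there is no path between a vertex of $G_1$ and a vertex of $G_2$, so the graph distance between such vertices is infinite, giving similarity $e^{-\infty} = 0$. After ordering the vertices so that $V(G_1)$ comes before $V(G_2)$, the similarity matrix of $G$ therefore has a block-diagonal form
\begin{equation*}
\zeta_X \;=\; \begin{pmatrix} \zeta_{X_1} & 0 \\ 0 & \zeta_{X_2} \end{pmatrix},
\end{equation*}
and the same holds for $\zeta_X^{-1}$ provided each block is itself invertible.

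Next, I would argue that $\zeta_{X_1}$ and $\zeta_{X_2}$ are each positive definite. Since $\zeta_X$ is a block-diagonal matrix whose spectrum is the union of the spectra of its blocks, positive-definiteness of $\zeta_X$ (which is assumed) passes to each block. Hence each $G_i$ admits a unique weighting $w_i \in \mathbb{R}^{|V(G_i)|}$ satisfying $\zeta_{X_i} w_i = \mathds{1}$, and $\text{Mag}(G_i) = \sum_j w_i(j)$ is well defined per the definition given in Section~\ref{sec:mag_spread}.

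I would then form the concatenated vector $w := (w_1, w_2) \in \mathbb{R}^{|V(G)|}$ and verify directly, using the block structure, that
\begin{equation*}
\zeta_X w \;=\; \begin{pmatrix} \zeta_{X_1} w_1 \\ \zeta_{X_2} w_2 \end{pmatrix} \;=\; \begin{pmatrix} \mathds{1} \\ \mathds{1} \end{pmatrix} \;=\; \mathds{1}.
\end{equation*}
So $w$ is a weighting of $G$, and since $\zeta_X$ is positive definite the magnitude is uniquely determined as
\begin{equation*}
\text{Mag}(G) \;=\; \sum_{i} w(i) \;=\; \sum_{i} w_1(i) + \sum_{i} w_2(i) \;=\; \text{Mag}(G_1) + \text{Mag}(G_2).
\end{equation*}

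The step I expect to require the most care is the very first one: justifying that the off-diagonal blocks of $\zeta_X$ vanish. This relies on the convention $d(x,y) = \infty$ when $x \in V(G_1)$ and $y \in V(G_2)$, which is standard for the shortest-path metric on a disconnected graph but should be stated explicitly. If the paper works with the diffusion distance of Section~\ref{sec:diffusion} instead of the shortest-path metric, then the same conclusion follows because the normalised Laplacian of a disconnected graph is itself block-diagonal, so its eigenvectors and hence the embedding $\Phi$ live in orthogonal coordinate subspaces for $V(G_1)$ and $V(G_2)$, again forcing $e^{-d(x,y)} = 0$ (or, equivalently, the similarity matrix to split into blocks) across components; once that structural observation is in place, the remainder of the argument is purely linear-algebraic and routine.
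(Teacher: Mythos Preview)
Your proposal is correct and follows essentially the same route as the paper: establish the convention $d(x_1,x_2)=\infty$ across components so that the similarity matrix becomes block-diagonal, $\zeta_G=\zeta_{G_1}\oplus\zeta_{G_2}$, and read off additivity of magnitude from there. The paper's proof is terser---it simply states the block decomposition and concludes---whereas you additionally verify that each block inherits positive-definiteness and construct the concatenated weighting explicitly; this is extra care rather than a different idea.
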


\paragraph{Isomorphism invariance.} Our pooling layers are invariant under isometries of the input graph 
provided that the edge choice at each iteration is deterministic whenever the edge scores coincide. This is a consequence of the following result.

\begin{restatable}{thm}{invariance}\label[theorem]{thm:invariance}
    For \emph{isomorphic} graphs \(G_1, G_2\), we have  \(\text{Mag}(G_1) = \text{Mag}(G_2)\) and \(\text{Sp}(G_1) = \text{Sp}(G_2)\).  
\end{restatable}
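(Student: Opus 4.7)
The plan is to exploit the fact that an isometry is a distance-preserving bijection, so it induces a permutation of the similarity matrix entries, and both magnitude and spread are constructed from quantities that depend only on these entries.

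First I would fix notation: let $f \colon X_1 \to X_2$ denote the isometry, so $d_2(f(x),f(y)) = d_1(x,y)$ for all $x,y \in X_1$. Taking the exponential of the negated distances immediately gives
\begin{equation*}
    \zeta_{G_2}(f(x),f(y)) = e^{-d_2(f(x),f(y))} = e^{-d_1(x,y)} = \zeta_{G_1}(x,y).
\end{equation*}
Fixing arbitrary orderings of the finite vertex sets, this identity means that $\zeta_{G_2} = P\zeta_{G_1}P^{T}$ for some permutation matrix $P$. In particular, the two similarity matrices share the same spectrum, so one is positive definite if and only if the other is, and hence magnitude is well-defined on both sides simultaneously.

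For magnitude, I would then observe that inversion commutes with conjugation by a permutation matrix, so $\zeta_{G_2}^{-1} = P\zeta_{G_1}^{-1} P^{T}$. Because summing all entries of a matrix can be written as $\mathds{1}^{T} M \mathds{1}$ and $P^{T}\mathds{1} = \mathds{1}$, we get
\begin{equation*}
    \text{Mag}(G_2) = \mathds{1}^{T}\zeta_{G_2}^{-1}\mathds{1} = \mathds{1}^{T} P\zeta_{G_1}^{-1} P^{T}\mathds{1} = \mathds{1}^{T}\zeta_{G_1}^{-1}\mathds{1} = \text{Mag}(G_1).
\end{equation*}
Equivalently, one can note that if $w$ is a weighting for $G_1$, then $w \circ f^{-1}$ is a weighting for $G_2$ with the same total sum, which is perhaps the cleaner route to write out.

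For spread, I would perform a direct change of variables through $f$. Using bijectivity to reindex both the outer and inner sums,
\begin{equation*}
    \text{Sp}(G_2) = \sum_{x' \in X_2} \frac{1}{\sum_{y' \in X_2} e^{-d_2(x',y')}} = \sum_{x \in X_1} \frac{1}{\sum_{y \in X_1} e^{-d_1(x,y)}} = \text{Sp}(G_1).
\end{equation*}
There is no real obstacle here; the result is essentially unpacking the definitions. The only point worth stating carefully is that the graph metric must be preserved by $f$ (so that the similarities, not merely the adjacencies, are preserved), which is exactly what "isometric metric graphs" means in the paper's setup. I would conclude by remarking that this also justifies why the deterministic tie-breaking caveat in the algorithm suffices for the pooling layer to be isometry-equivariant: equal scores across orbits of the isometry group propagate through the iterative edge contraction.
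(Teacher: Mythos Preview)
Your proof is correct and follows essentially the same approach as the paper: both arguments observe that the isometry forces the similarity matrices to agree up to a permutation, from which the equality of magnitude and spread follows immediately. You simply spell out the permutation-matrix and $\mathds{1}^{T}M\mathds{1}$ bookkeeping (and the positive-definiteness remark) in more detail than the paper's terse version.
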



\paragraph{Edge contraction on graphs.} 
Edge contraction is the main operation of our pooling methods and 
can be considered as a map \(f\) 
between graphs. 
The following result provides a sufficient condition to ensure that $f$ remains compatible with the metric structure.

\begin{restatable}{thm}{magmonotonicity}\label[theorem]{thm:mag_monotonicity}
    Consider an edge-contraction map $f\colon(G_1,d_1) \rightarrow (G_2,d_2)$  between positive definite metric graphs. 
    If the map is $1$-Lipschitz, i.e.\ $d_2(f(v_1),f(v_2)) \leq d_1(v_1, v_2) \ \forall v_1,v_2 \in X_1$, then $\text{Mag}(G_2) \leq \text{Mag}(G_1)$.   
\end{restatable}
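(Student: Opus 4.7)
The plan is to use the variational (max) characterisation of magnitude for positive definite metric spaces together with a pullback of the weighting of $G_2$. Recall that for any positive definite $(X,\zeta_X)$,
\[
\text{Mag}(X)=\max_{u\in\mathbb{R}^{|X|}}\bigl\{2\mathds{1}^{T}u-u^{T}\zeta_{X}u\bigr\},
\]
with the maximum attained at the (unique) weighting $w=\zeta_{X}^{-1}\mathds{1}$. So it suffices to exhibit a test vector $\tilde{w}:X_{1}\to\mathbb{R}$ for which $2\mathds{1}^{T}\tilde{w}-\tilde{w}^{T}\zeta_{G_1}\tilde{w}\geq\text{Mag}(G_{2})$.

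I would construct $\tilde{w}$ by pulling back the weighting $w_{2}$ of $G_{2}$ along $f$: set $\tilde{w}(v)=w_{2}(f(v))/|f^{-1}(f(v))|$, so that $\mathds{1}^{T}\tilde{w}=\sum_{y}w_{2}(y)=\text{Mag}(G_{2})$. Introducing the fibre-membership matrix $F\in\{0,1\}^{|X_{1}|\times|X_{2}|}$ with $F_{v,y}=\mathds{1}[f(v)=y]$, the pullback similarity $\tilde\zeta(v,v'):=e^{-d_{2}(f(v),f(v'))}=(F\zeta_{G_2}F^{T})(v,v')$ is positive semi-definite and a short calculation shows $\tilde\zeta\tilde{w}=\mathds{1}_{X_{1}}$; thus $\tilde{w}$ behaves like a weighting for the pullback pseudo-metric $d_{2}\circ f$. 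Moreover the Lipschitz hypothesis yields the entrywise inequality $\tilde\zeta\geq\zeta_{G_1}$, with equality on the diagonal.

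Combining $\tilde\zeta\tilde{w}=\zeta_{G_1}w_{1}=\mathds{1}$ (where $w_{1}$ is the weighting of $G_{1}$) lets one rewrite the quantity of interest as
\[
\text{Mag}(G_{1})-\text{Mag}(G_{2})=w_{1}^{T}\tilde\zeta\tilde{w}-w_{1}^{T}\zeta_{G_1}\tilde{w}=w_{1}^{T}(\tilde\zeta-\zeta_{G_1})\tilde{w},
\]
so the theorem reduces to showing this bilinear pairing is non-negative.

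The hard part is precisely this last step. Although $\tilde\zeta-\zeta_{G_1}$ is entrywise non-negative, weightings generally have entries of mixed sign, so entrywise positivity is not enough; moreover a direct Loewner-order comparison $\zeta_{G_1}\preceq\tilde\zeta$ fails in simple examples (it can already be violated for contracting an edge of a path graph), so pure matrix monotonicity is not available. To close the gap I would either (i) pass to the quotient and invoke Meckes's subspace-monotonicity theorem, noting that $(X_{1},d_{2}\circ f)$ has the same generalised magnitude as $G_{2}$ and that the inequality $d_{2}\circ f\leq d_{1}$ lets one interpolate between the two similarity matrices through positive-definite deformations; or (ii) exploit Gram-matrix factorisations $\zeta_{G_1}=B_{1}^{T}B_{1}$ and $\zeta_{G_2}=B_{2}^{T}B_{2}$ to realise $\text{Mag}(G_{1})-\text{Mag}(G_{2})$ as the squared norm of an appropriate residual vector in the orthogonal complement of the image of the fibre-averaged embedding. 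Managing this sign-indefiniteness while keeping the bookkeeping between the weightings $w_{1}$ and $\tilde{w}$ consistent is where I expect the bulk of the technical work to lie.
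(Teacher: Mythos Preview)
Your write-up does not close the argument: you correctly reduce everything to the bilinear pairing $w_1^{T}(\tilde\zeta-\zeta_{G_1})\tilde w\ge 0$, but then only gesture at two possible closures (Meckes-type monotonicity, Gram factorisations) without executing either. That unfinished step is the gap in your proposal.

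For comparison, the paper's proof is far shorter and avoids your fibre-averaged pullback entirely. It works with the Rayleigh-quotient form $\text{Mag}(X)=\sup_{v\neq 0}(\sum_i v_i)^2/(v^T\zeta_X v)$, chooses a \emph{section} of $f$ (one representative in $X_1$ per vertex of $X_2$) so as to zero-pad $\mathbb{R}^m\hookrightarrow\mathbb{R}^n$, and chains
\[
\text{Mag}(G_1)\ \ge\ \sup_{v\in\mathbb{R}^m\setminus\{0\}}\frac{(\sum_i v_i)^2}{v^T\zeta_{X_1}v}\ \ge\ \sup_{v\in\mathbb{R}^m\setminus\{0\}}\frac{(\sum_i v_i)^2}{v^T\zeta_{X_2}v}\ =\ \text{Mag}(G_2),
\]
the first inequality being subset monotonicity. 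For the second the paper asserts $v^T\zeta_{X_1}v\le v^T\zeta_{X_2}v$ for \emph{every} $v\neq 0$, derived directly from the entrywise bound $e^{-d_1(x_i,x_j)}\le e^{-d_2(f(x_i),f(x_j))}$. But that is precisely the Loewner comparison you flag as false: entrywise dominance of similarity matrices does not control the quadratic form on sign-indefinite vectors, and already on a two-point restriction the difference $\zeta_{X_2}-\zeta_{X_1}|_m$ has zero diagonal and strictly positive off-diagonal, hence eigenvalues of both signs. So the step you isolate as ``the hard part'' is one the paper simply asserts without justification. Your caution is well placed; neither your proposal nor the paper's written argument resolves the sign issue. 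If anything, the paper's section-based test vector is a tidier construction than your $\tilde w$, but it leaves the same Lipschitz-monotonicity question on the $m$-point quotient open.
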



Starting from an initial metric graph $G = (X,E)$, a 1-Lipschitz edge-contraction map \(f\) yields a sequence of graphs $\lbrace G_i\rbrace_{i=1}^{k}$, where \(k\) is the number of edges that have been contracted. 
This sequence can be constructed as described by our algorithms in \Cref{sec:mag_pooling}. We  
refer to the graphs resulting from the $k^{th}$ edge-contraction using MagEdgePool 
as $G^{(k)}$. 
Let $\Delta^{(k)}\text{Mag}(G) = |\text{Mag}(G^{(k-1)}) - \text{Mag}(G^{(k)})|$ and let $\Delta^{(k)}\text{Sp}(G) = |\text{Sp}(G^{(k-1)}) - \text{Sp}(G^{(k)})|$. 

\paragraph{Bounding magnitude by spread.} 
We  
track the difference in magnitude and spread throughout the edge contraction process detailed above. 
This allows us to propose 
an inequality that describes the relation between the difference of magnitude and spread during pooling. The bound then demonstrate the close conceptual relationship between SpreadEdgePool and MagEdgePool. 
\begin{restatable}{thm}{magspreadbound}\label{thm:mag_spread_Diff_bound}
    Consider a positive definite metric graph \(G\) with positive weights. Assume that the edge-contraction maps describing MagEdgePool and SpreadEdgePool induce distance decreasing surjections on the vertex sets. If $| \text{Mag}(G^{(k-1)}) - \text{Sp}(G^{(k)} )| \leq C \Delta^{(k)}\text{Sp}(G)$, then
    \begin{equation*}
        \Delta^{(k)}\text{Mag}(G) \leq 3C\Delta^{(k)}\text{Sp}(G).
    \end{equation*}
\end{restatable}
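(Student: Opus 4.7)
The plan is to decompose the target quantity via the triangle inequality using two spread terms as intermediaries, and then bound each piece using the hypothesis together with two structural facts: Willerton's inequality $\mathrm{Sp}(X) \leq \mathrm{Mag}(X)$ for positive definite metric spaces~\citep[Theorem~2]{willerton2015spread}, and magnitude monotonicity under distance-decreasing surjections established in Theorem~\ref{thm:mag_monotonicity}. The positivity of weights assumed in the statement is what guarantees the applicability of Willerton along the entire pooling sequence, which I would verify at the start.

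First I would insert $\mathrm{Sp}(G^{(k)})$ and $\mathrm{Sp}(G^{(k-1)})$ by two applications of the triangle inequality, yielding
\begin{equation*}
\Delta^{(k)}\mathrm{Mag}(G) \leq |\mathrm{Mag}(G^{(k-1)}) - \mathrm{Sp}(G^{(k)})| + |\mathrm{Sp}(G^{(k)}) - \mathrm{Sp}(G^{(k-1)})| + |\mathrm{Sp}(G^{(k-1)}) - \mathrm{Mag}(G^{(k)})|.
\end{equation*}
The first summand is $\leq C\,\Delta^{(k)}\mathrm{Sp}(G)$ directly by hypothesis, and the middle summand equals $\Delta^{(k)}\mathrm{Sp}(G)$ by definition.

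To handle the remaining ``flipped'' cross-term $|\mathrm{Sp}(G^{(k-1)}) - \mathrm{Mag}(G^{(k)})|$, my plan is to sandwich both quantities using the two available structural inequalities: Willerton gives $\mathrm{Sp}(G^{(k-1)}) \leq \mathrm{Mag}(G^{(k-1)})$ and $\mathrm{Sp}(G^{(k)}) \leq \mathrm{Mag}(G^{(k)})$, while Theorem~\ref{thm:mag_monotonicity} gives $\mathrm{Mag}(G^{(k)}) \leq \mathrm{Mag}(G^{(k-1)})$; together these place $\mathrm{Sp}(G^{(k-1)})$ and $\mathrm{Mag}(G^{(k)})$ in the interval $[\mathrm{Sp}(G^{(k)}), \mathrm{Mag}(G^{(k-1)})]$, whose length is precisely $|\mathrm{Mag}(G^{(k-1)}) - \mathrm{Sp}(G^{(k)})|$, controlled by $C\,\Delta^{(k)}\mathrm{Sp}(G)$ via the hypothesis. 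Hence the third summand is also bounded by $C\,\Delta^{(k)}\mathrm{Sp}(G)$. Summing gives $(2C+1)\Delta^{(k)}\mathrm{Sp}(G)$, and absorbing the additive $1$ into $C$ (the regime of interest being $C \geq 1$, since non-trivial graphs satisfy $\mathrm{Mag} > \mathrm{Sp}$ strictly) yields the stated $3C$ bound.

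The main obstacle is the asymmetry of the hypothesis: it bounds $|\mathrm{Mag}(G^{(k-1)}) - \mathrm{Sp}(G^{(k)})|$ with specific indices, whereas the third summand has the indices swapped. Without a spread-analogue of Theorem~\ref{thm:mag_monotonicity} (which does not follow automatically, since edge contraction identifies pairs of vertices and can in principle increase $1/\sum_y e^{-d(x,y)}$ terms), this transfer has to be argued indirectly through the magnitude-dominance sandwich above. Making this step precise is where the proof will require the most care; once it is in place, the remainder is a straightforward combination of triangle inequalities.
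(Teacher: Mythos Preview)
Your overall strategy---triangle-inequality decomposition, Willerton's bound, and magnitude monotonicity---is exactly the paper's, and the final $(2C+1)\leq 3C$ step matches as well. The decomposition you choose is a slight reshuffling of the paper's (they split as $|\mathrm{Mag}^{(k-1)}-\mathrm{Sp}^{(k-1)}|+|\mathrm{Sp}^{(k-1)}-\mathrm{Sp}^{(k)}|+|\mathrm{Sp}^{(k)}-\mathrm{Mag}^{(k)}|$), but the two are equivalent in difficulty.

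There is, however, a genuine gap in your handling of the ``flipped'' cross-term. You claim that Willerton plus Theorem~\ref{thm:mag_monotonicity} place \emph{both} $\mathrm{Sp}(G^{(k-1)})$ and $\mathrm{Mag}(G^{(k)})$ in the interval $[\mathrm{Sp}(G^{(k)}),\mathrm{Mag}(G^{(k-1)})]$. For $\mathrm{Mag}(G^{(k)})$ this is fine. But for $\mathrm{Sp}(G^{(k-1)})$ the stated inequalities only give the \emph{upper} bound $\mathrm{Sp}(G^{(k-1)})\leq\mathrm{Mag}(G^{(k-1)})$; nothing among Willerton or magnitude monotonicity yields $\mathrm{Sp}(G^{(k-1)})\geq\mathrm{Sp}(G^{(k)})$. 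That lower bound \emph{is} spread monotonicity---precisely the ingredient you say you are avoiding. Without it, $\mathrm{Sp}(G^{(k-1)})$ could sit below $\mathrm{Sp}(G^{(k)})$ and the interval-length argument no longer controls $|\mathrm{Sp}(G^{(k-1)})-\mathrm{Mag}(G^{(k)})|$ by the hypothesis.

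The paper does not sidestep this either: in the text immediately preceding the theorem it adds the standing assumption ``we will assume that spread is monotonically decreasing,'' and in the proof it invokes this explicitly to bound $\mathrm{Mag}(G^{(k-1)})-\mathrm{Sp}(G^{(k-1)})\leq\mathrm{Mag}(G^{(k-1)})-\mathrm{Sp}(G^{(k)})$ (and likewise to deduce $C\geq 1$). So the fix is not a cleverer sandwich but simply to acknowledge spread monotonicity as an additional hypothesis; once you do, your argument and the paper's coincide.
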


\paragraph{Expressivity.} Studying the expressive power of GNNs and their ability for 
distinguishing non-isomorphic graphs 
offers theoretical insights for understanding the theoretical capabilities of pooling operators. 
\citet{bianchi2024expressive} state sufficient condition for a pooling layer to preserve the expressive power of the preceding message-passing (MP) layers. As demonstrated in \Cref{app:expressivity}, MagEdgePool and SpreadEdgePool satisfy these conditions ensuring expressivity.  

\paragraph{Computational Complexity.} 
The time complexity of our pooling methods is independent of the GNN and is dominated by the cost of computing the edge scores in \Cref{eq:scores}. Given a graph \(G=(X,E)\), magnitude has time complexity \(O(|X|^3)\). In comparison, spread has time complexity \(O(|X|^2)\) and can be more efficiently approximated 
\citep{dunne2024efficiently}. 
Spread thus offers a considerably faster alternative. Now, let \(O(C_d)\) be the time complexity of computing the metric on \(G\). 
The time complexity of SpreadEdgePool is dominated by \(O(|E|(C_d + |X|^2 + \text{log}|E| ))\) and MagEdgePool by \(O(|E| (C_d +|X|^3 + \text{log}|E| ))\). 
We note that on large graphs, it is possible to speed up the distance computations further to ensure scalability. 
See \Cref{app:proofs_complexity} for a full description of computational costs and \Cref{app:empirical_efficiency} for an empirical evaluation, which shows that our algorithm performs on-par with existing pooling methods across the datasets evaluated throughout this work.




\section{Experimental Results}



Across our experiments, we address four key tasks, namely
 \begin{inparaenum}[(i)]
     \item graph classification performance,
     \item graph structure preservation during pooling, 
     \item performance across varying pooling ratios, and
     \item  performance at graph property regression.
 \end{inparaenum}


\subsection{Graph Classification}
\label{sec:classification}

The primary aim of using graph pooling layers is to preserve task-relevant information while reducing computational costs. %
In particular, useful pooling layers guarantee good performance 
across a wide range of different datasets, thus capturing essential information for the task at hand. 
We thus investigate how well our aim to preserve structural diversity during edge pooling translates to practical performance at graph classification tasks. 
Note that our goal is not to reach state-of-the-art accuracies on all tasks, but to benchmark the performance gain or loss of different pooling operators.

\paragraph{Experimental Setup.} We evaluate 
8 different graph datasets,
as detailed in \Cref{app:datasets}. 
Whenever node features are not available, we use node degree as an input feature. 
To ensure a fair comparison with alternative pooling methods, we follow the experimental setup by \citet{grattarola2022understanding} and guidance by \citet{errica2020fair} for fair model comparison. Specifically, we plug in each pooling layer into the model architecture specified by \citet{grattarola2022understanding}, which is of the following form: 
\[\text{MLP}(\mathbf{X}) \rightarrow \text{GNN}(\mathbf{X}, \mathbf{A}) \rightarrow \text{POOL}(\mathbf{X}, \mathbf{A}) \rightarrow \text{GNN}(\mathbf{X}, \mathbf{A}) \rightarrow \text{GlobalSum}(\mathbf{X}) \rightarrow \text{MLP}(\mathbf{X})
\]
The model includes pre-processing and post-processing MLPs with 2 layers, 256 hidden units, ReLU activation, and batch normalization. 
\(\text{GNN}(\mathbf{X}, \mathbf{A}) \) refers to a graph neural network layer, more specifically a general convolutional layer \citep{you2020design} with  parameters chosen according to the best results achieved by \citet{you2020design}. 
As an intermediate layer, \(\text{POOL}(\mathbf{X}, \mathbf{A})\) corresponds to a specific pooling layer. 
All pooling layers are configured to pool each graph to around 50\% of nodes. We also compare with `No Pooling,' the same model architecture without any pooling layers. 
We use 10-fold stratified cross-validation and further partition the training data into 90\% training and 10\% validation data while keeping the labels balanced between splits. Finally, we report the best test accuracy of each model trained using Adam with a cross-entropy loss~(batch size 32, learning rate
$0.0005$, and early stopping based on the validation loss with a patience of 50 epochs). Further details are described in \Cref{app:classification_details}. 


\begin{table}[t]
  \caption{Mean and standard deviation of the graph classification accuracy. 
  The best-performing model is marked in bold. 
  All models that did not perform significantly different from the best model 
  are coloured 
  \textcolor[HTML]{00B050}{green}. The rightmost column shows the mean rank of each pooling method across datasets.
  }
  \label{classification}
  \centering
  \resizebox{\columnwidth}{!}{
  \begin{tabular}{lccccccccc}
    \toprule
    \textbf{Method} & \textbf{ENZYMES} & \textbf{PROTEINS} & \textbf{Mutagenicity} & \textbf{DHFR} & \textbf{IMDB-B} & \textbf{IMDB-M} & \textbf{NCI1} & \textbf{NCI109} & \textbf{Mean Rank} \\
    \midrule
    \textbf{No Pooling} & {\color[HTML]{747474} 87.3 ± 2.5} & {\color[HTML]{747474} 73.8 ± 0.8}& {\color[HTML]{747474} 80.1 ± 1.3} & {\color[HTML]{747474} 71.4 ± 1.9} & {\color[HTML]{747474} 69.7 ± 0.7} & {\color[HTML]{747474} 46.0 ± 0.7} & {\color[HTML]{747474} 76.5 ± 1.8} & {\color[HTML]{747474} 74.3 ± 2.0} & {\color[HTML]{747474}-} \\\midrule
    \textbf{MagEdge} & {\color[HTML]{00B050} 91.5 ± 3.2} & {\color[HTML]{00B050} 76.4 ± 3.9} & {\color[HTML]{00B050} \textbf{77.5 ± 2.7}} & {\color[HTML]{00B050} 88.0 ± 3.8} & {\color[HTML]{00B050} 72.4 ± 1.7} & {\color[HTML]{00B050} 47.4 ± 1.7} & {\color[HTML]{00B050} 72.7 ± 2.4} & {\color[HTML]{00B050} 73.0 ± 3.3} & {\color[HTML]{00B050} \textbf{2.4}} \\
    \textbf{SpreadEdge} & {\color[HTML]{00B050} \textbf{92.8 ± 1.6}} & {\color[HTML]{00B050} 75.1 ± 3.1} & {\color[HTML]{00B050} 76.0 ± 4.0} & {\color[HTML]{00B050} \textbf{90.7 ± 3.8}} & {\color[HTML]{00B050} 71.8 ± 1.5} & {\color[HTML]{00B050} 47.3 ± 1.7} & {\color[HTML]{00B050} 73.4 ± 2.5} & {\color[HTML]{00B050} 71.8 ± 1.8} & {\color[HTML]{00B050} \textbf{3.0}} \\\midrule
    \textbf{NDP} & {\color[HTML]{00B050} 92.2 ± 1.6} & {\color[HTML]{00B050} 73.7 ± 3.9} & 73.4 ± 3.1 & 79.6 ± 4.4 & {\color[HTML]{00B050} \textbf{73.3 ± 2.0}} & {\color[HTML]{00B050} 47.3 ± 2.5} & 70.6 ± 2.2 & 70.0 ± 2.2 & 3.6 \\
    \textbf{Graclus} & {\color[HTML]{00B050} 91.3 ± 3.7} & {\color[HTML]{00B050} \textbf{76.6 ± 3.7}} & 72.5 ± 2.0 & 64.4 ± 5.8 & {\color[HTML]{00B050} 71.9 ± 1.5} & {\color[HTML]{00B050} \textbf{49.3 ± 2.4}} & 68.8 ± 1.4 & 69.5 ± 2.1 & 5.6 \\
    \textbf{NMF} & 78.6 ± 8.0 & {\color[HTML]{00B050} 73.0 ± 8.1} & 71.0 ± 4.7 & 66.5 ± 7.7 & 69.4 ± 2.5 & 43.3 ± 1.7 & {\color[HTML]{00B050} 71.0 ± 3.7} & {\color[HTML]{00B050} 72.0 ± 5.3} & 6.0 \\
    \textbf{TopK} & 82.2 ± 7.5 & {\color[HTML]{00B050} 73.2 ± 1.4} & {\color[HTML]{00B050} 75.8 ± 4.7} & 68.9 ± 3.0 & 68.9 ± 1.5 & 45.6 ± 1.0 & {\color[HTML]{00B050} \textbf{75.3 ± 2.4}} & {\color[HTML]{00B050} 73.9 ± 3.3} & 4.9 \\
    \textbf{SAGPool} & 82.4 ± 4.5 & {\color[HTML]{00B050} 73.8 ± 1.3} & {\color[HTML]{00B050} 76.0 ± 2.6} & 69.9 ± 3.0 & 69.1 ± 0.6 & 45.7 ± 0.5 & {\color[HTML]{00B050} 74.3 ± 2.8} & {\color[HTML]{00B050} \textbf{74.0 ± 2.3}} & 4.7 \\
    \textbf{DiffPool} & 74.0 ± 5.7 & 68.9 ± 2.0 & 68.4 ± 1.9 & 79.8 ± 3.2 & 68.3 ± 0.8 & 44.4 ± 0.8 & 68.9 ± 1.0 & 68.3 ± 1.9 & 7.6 \\
    \textbf{MinCut} & 80.2 ± 6.6 & {\color[HTML]{00B050} 75.6 ± 1.3} & 70.9 ± 1.5 & 63.8 ± 3.7 & 69.3 ± 0.7 & 46.1 ± 0.8 & 66.7 ± 1.4 & 66.9 ± 2.0 & 7.4 \\
    \bottomrule
  \end{tabular}}
\end{table}

\paragraph{Classification Results.} \Cref{classification} reports the mean and standard deviation of the test accuracy achieved by different pooling methods. We furthermore highlight which methods do \emph{not} perform statistically 
significantly different from the best model~(using pairwise Wilcoxon signed-rank tests applied to the accuracy scores and employing Holm--Bonferroni correction at a significance threshold of $p = 0.05$), thus permitting us to identify pooling methods that achieve top performance.  
Notably, both MagEdgePool and SpreadEdgePool achieve the \emph{best mean ranks} across datasets in terms of their accuracy. Further, they are always among the top-performing methods across all evaluated datasets. Altogether, both their ranking and their individual accuracy scores thus demonstrate superior and consistently high performance across graph classification tasks. 
The performance benefits of our proposed pooling methods are most pronounced on DHFR 
\citep{sutherland2003spline}, where
our methods surpass even the GNN without pooling layer by around 17 percentage points. This provides evidence that the regularising effects of our pooling approach can help reduce overfitting, especially for small datasets and geometrically-rich graphs. For other biological datasets~(Mutagenicity, NCI1 and NC109), our methods show competitive performance with trainable layers, indicating that the introduction of additional trainable components into the pooling layer is \emph{not} necessary to guarantee high task performance. On ENZYMES, DHFR, IMDB-BINARY, and IMDB-MULTI, non-trainable methods generally outperform trainable pooling layers, with  MagEdgePool and SpreadEdgePool consistently reaching high accuracy. 

Comparing pooling methods to using no pooling, we observe that MagEdgePool and SpreadEdgePool reach similar or even higher performance across datasets. For six datasets, our diversity-guided pooling methods improve mean accuracy, indicating that pooling retains task-relevant information while aiding the generalisation capabilities of the GNN. 
MagEdgePool and SpreadEdgePool act as interpretable and expressive pooling transformations (see \Cref{app:node_features}), 
which  reduce the computational costs 
making GNNs learn from graphs' coarsened geometry.   
%
As reported in \Cref{classification}, MagEdgePool and SpreadEdgePool achieve very similar accuracies across datasets reaching top accuracies compared to  alternative pooling layers. 
In practice, especially for large graphs, we recommend using SpreadEdgePool due to its high predictive performance and superior computational efficiency. 
%



\subsection{Magnitude and Graph Structure Preservation}
\label{sec:structure}

Motivated by the visual comparison of graphs pooled using different pooling layers from \Cref{fig:overview_graphs}, we next set out to investigate the link between structure preservation and task performance. 
Specifically, we choose NCI1, a dataset of 4,110 graphs corresponding to chemical compounds, because it has been shown to possess both informative features and task-relevant graph structures \citep{coupette2025no}. 
We follow the same classification procedure as before and extract the pooled graph representations after training the GNNs described in \Cref{sec:classification}. Three pooling layers, MinCut, DiffPool and NMF, were removed from further comparison because they showed notably worse qualitative results for the motivating examples in \Cref{fig:overview_graphs} and classification performance in \Cref{classification}.  NDP and Graclus are evaluated across fewer pooling ratios than more adaptive methods, because they pool graphs to around half their size at every step. To assess graph structure preservation, we use the spectral distance defined as  \(\sqrt{ \sum_{k=1}^{K} (\lambda_k - \lambda_k')^2}\), i.e.\ the \(l_2\)-norm between the eigenspectra of the normalised Laplacians of the original and the pooled graphs \citep{wills2020metrics}. We also report the magnitude difference between graphs to evaluate the preservation of structural diversity. 

MagEdgePool and SpreadEdgePool exhibit small spectral distances across pooling ratios as visualised in \Cref{fig:pooling_ratio_structure_main}. This indicates that 
contracting edges guided by structural diversity \emph{preserves} key spectral properties. 
Our methods also demonstrate low magnitude differences confirming that they perform as intended. 
In fact, the structure preservation scores for MagEdgePool and SpreadEdgePool coincide almost perfectly, giving empirical evidence that spread offers an alternative to magnitude. 
\Cref{fig:pooling_ratio_structure_main} indicates that preserving magnitude corresponds to lower spectral distances and better retention of spectral properties. This link supports our motivation of guiding pooling by magnitude.

Alternative pooling layers fail to effectively preserve graphs' structural properties during both qualitative and quantitative comparisons to varying extents. 
Node decimal pooling (NDP) \citep{bianchi2020hierarchical} was specifically designed to preserve spectral properties during pooling. However, it still reaches both higher spectral distances and higher magnitude differences than MagEdgePool on average across pooling ratios, as visualised in \Cref{fig:pooling_ratio_structure_main}. 
Finally, the sparse pooling layers Graclus, TopKPool, and SAGPool, all show higher spectral distances than our approach. This difference is even more pronounced in terms of magnitude differences, where 
all these three methods demonstrate high distortion of the underlying metric space diversity. These findings are repeated for further datasets~(see \Cref{fig:pooling_ratio_structure_ablation} in the appendix) and agree with the qualitative comparisons between graphs pooled using different pooling layers as visualised in \Cref{fig:overview_graphs}. 
We thus conclude that MagEdgePool and SpreadEdgePool successfully encode graphs' coarsened geometry during pooling, surpassing alternative pooling methods. 

\begin{figure}
  \centering
   \includegraphics[width=1\linewidth]{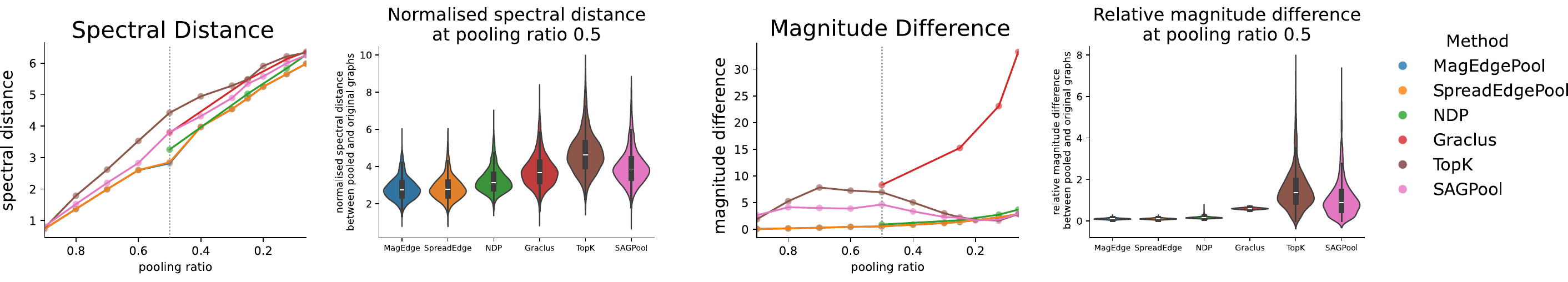}
    \caption{Structure preservation for all graphs in the NCI1 dataset across varying pooling ratios. Left: The spectral distance between the normalised Laplacians of the original and the pooled graphs. Right: The relative difference in magnitude, which summarises the proportional difference in structural diversity after pooling. Violin plots show the variability 
    across graphs at pooling ratio 0.5. 
  }\label{fig:pooling_ratio_structure_main}
\end{figure}

\subsection{Pooling Ratio and Task Performance}
\label{sec:ratio_acc}

\begin{figure}
  \centering
  \includegraphics[trim={0 0 4cm 0},clip,width=0.21\linewidth]{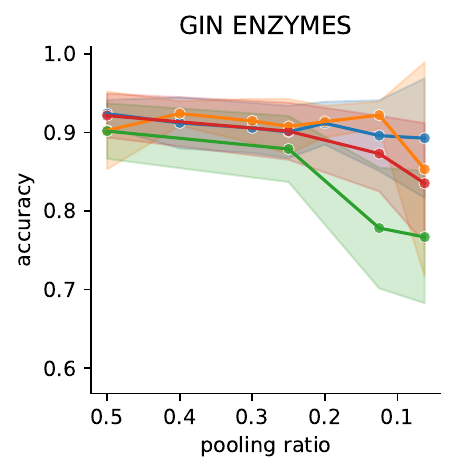}
   \includegraphics[trim={0 0 4cm 0},clip,width=0.21\linewidth]{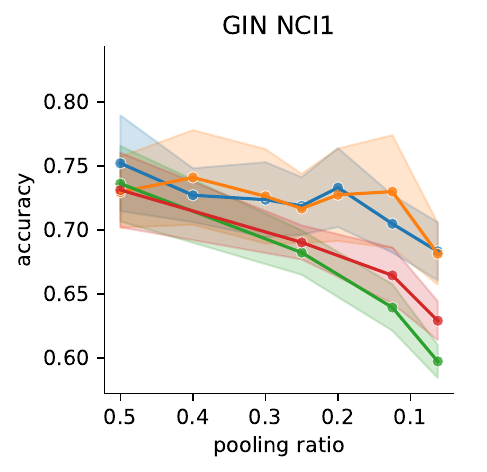}
  \includegraphics[trim={0 0 4cm 0},clip,width=0.21\linewidth]{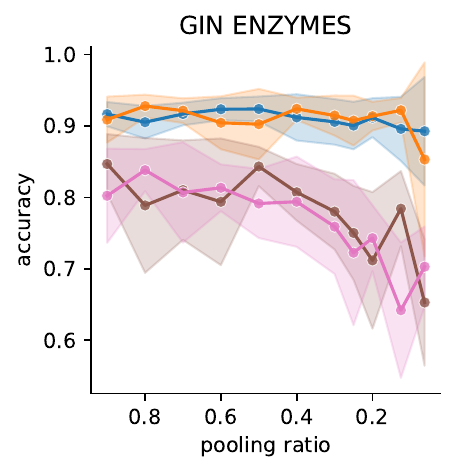}
   \includegraphics[trim={0 0 4cm 0},clip,width=0.21\linewidth]{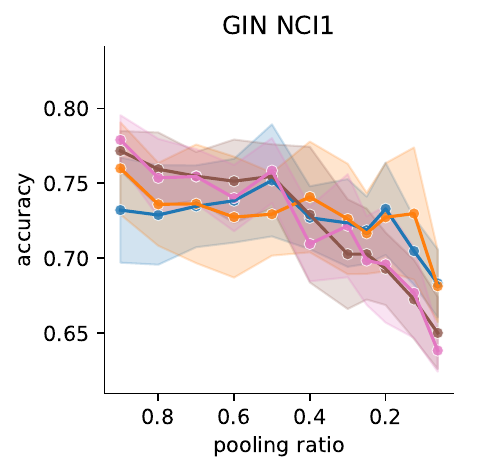}
   \raisebox{0.55\height}{\includegraphics[ width=0.12\linewidth]{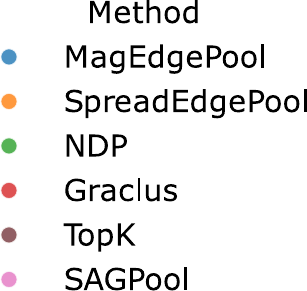}}
  \caption{Classification performance across varying the pooling ratio 
  for different pooling layers. Pooling is applied as part of a GIN architecture. Results are shown for the ENZYME and NCI1 datasets. Lines show the mean and shaded areas the standard deviation of the test accuracy.}\label{fig:pooling_ratio}
\end{figure}

From \Cref{classification}, we observe that it is possible to reach very high performance on benchmark datasets even while pooling each graph to half its size. Based on this, we further investigate how 
the pooling ratio influences pooling layers and their classification performance. 
We consider two datasets, NCI1 and ENZYMES, which contains 600 graphs representing protein tertiary structures from 6 classes of enzymes and is selected as an example of a multi-class prediction task. We keep the same experimental setup described in \Cref{sec:classification}, but use GIN layers instead of general convolutional layers to further assess whether the trends in performance differ across model architectures. 
 \Cref{fig:pooling_ratio} reports the accuracy achieved by each pooling layer for varying pooling ratios. 
 Notably, we observe that MagEdgePool and SpreadEdgePool consistently reach very high test accuracies even at low pooling rations. Meanwhile, the performance of other non-trainable methods drops notably more when graphs are pooled to up to 6.25\% 
 of their original size 
 showing that they fail to preserve task-relevant information 
 for both ENZYMES and NCI1. We note that the trainable pooling layers, TopK and SAGPool, reach higher or comparable performance on NCI1 for pooling ratios above 50\%, but decrease in accuracy for more extreme pooling ratios. They consistently perform worse on ENZYMES, indicating that they distort important graph features or key graph structures during pooling. MagEdgePool and SpreadEdgePool in comparison reach top performance and lower decreases in accuracy across varying pooling ratios demonstrating their potential to offer reliable, interpretable and stable pooling operations. 
 Overall, the reported accuracies in \Cref{fig:pooling_ratio} agree with results in \Cref{classification} indicating that our observations hold for varying choices of GNN layers. 
 We thus find across experiments that MagEdgePool and SpreadEdgePool constitute useful general-purpose pooling approaches, demonstrating their capability to faithfully encoding graphs' geometry, which ensures stable performance across pooling ratios, datasets and GNN architectures.

\begin{wraptable}[11]{r}{0.4\textwidth}
    \vspace{-1\baselineskip}
  \caption{RMSE on the test data for different pooling methods.}
  \label{regression-table}
   \centering
  \resizebox{0.4\columnwidth}{!}{
  \begin{tabular}{lccc}
    \toprule
    \textbf{Method} & \textbf{MolEsol} & \textbf{MolFreeSolv} & \textbf{MolLipo} \\
    \midrule
    \textbf{No Pooling} & {\color[HTML]{9B9B9B}1.44 ± 0.10} & {\color[HTML]{9B9B9B}2.98 ± 0.78} & {\color[HTML]{9B9B9B}0.98 ± 0.30} \\ \midrule
    \textbf{MagEdge} & {\color[HTML]{00B050}1.47 ± 0.11} & {\color[HTML]{00B050}2.81 ± 0.30} & {\color[HTML]{00B050}0.91 ± 0.23} \\
    \textbf{SpreadEdge} & {\color[HTML]{00B050}1.58 ± 0.15} & {\color[HTML]{00B050}2.83 ± 0.24} & {\color[HTML]{00B050}0.91 ± 0.20} \\ \midrule
    \textbf{NDP} & {\color[HTML]{00B050}1.54 ± 0.18} & 3.17 ± 0.29 & {\color[HTML]{00B050}0.82 ± 0.09} \\
    \textbf{Graclus} & {\color[HTML]{00B050}1.47 ± 0.13} & 2.99 ± 0.27 & {\color[HTML]{00B050}0.85 ± 0.17} \\
    \textbf{NMF} & 2.37 ± 0.77 & 15.06 ± 8.83 & {\color[HTML]{00B050}0.85 ± 0.04} \\
    \textbf{TopK} & 1.68 ± 0.10 & 3.25 ± 0.54 & 1.07 ± 0.25 \\
    \textbf{SAGPool} & 1.66 ± 0.18 & {\color[HTML]{00B050}2.77 ± 0.18} & 1.03 ± 0.20 \\
    \textbf{DiffPool} & 1.71 ± 0.14 & 5.78 ± 4.55 & 2.35 ± 2.57 \\
    \textbf{MinCut} & 1.88 ± 0.51 & 4.28 ± 0.32 & 1.22 ± 0.19 \\
    \bottomrule
  \end{tabular}}
\end{wraptable}

\subsection{Graph Regression}

We further apply our pooling methods to graph regression tasks. We modify the GNN architecture as described in \Cref{app:reg} and plug in varying pooling layers. \Cref{regression-table} reports the RMSE on the test dataset across pre-defined data splits and ten random seeds for three datasets from OGB~\citep{hu2020open}. Overall, these results confirm that SpreadEdgePool and MagEdgePool constitute useful pooling operations and reach  comparatively low RMSEs across these three regression tasks. Our pooling methods successfully retain or improve task performance, indicating their ability to preserve task-relevant information.

\subsection{Computational Efficiency}

Finally, we evaluate the computational efficiency and scalability of pooling operators~(see \Cref{app:empirical_efficiency} for details).
Our approaches scale reasonably well to the datasets evaluated in this study, both in terms of runtime and memory requirements, thus remaining applicable for graphs with up to few hundreds of nodes.
Experiments confirm that SpreadEdgePool is notably faster than MagEdgePool and that distance approximations can be used to speed up the edge score computations used for pooling. 
Further, we 
find that pre-computing allows for notably lower GNN training costs compared to trainable methods; 
when comparing our methods to EdgePool, our methods exhibit  superior computational efficiency during training
while ensuring similarly high classification performance.

\section{Discussion}
Despite its advantageous properties, our method exhibits certain \emph{limitations}: We implicitly require redundancy and homophily in the graph 
representation to make it amenable to  geometry-guided pooling. Our methods further assume that preserving graph structure is \emph{beneficial} to the learning task. Otherwise, edge pooling still aggregates features faithfully (as evaluated in \Cref{app:node_features}), but the geometric objective might not be necessary 
to ensure high performance~(\Cref{app:random} provides an extended discussion and ablation study on the importance of preserving graph structure or preserving expressivity during pooling).
Moreover, our algorithm relies on efficient distance computations, and 
we only explore one specific case of diffusion distances, which could be sped up~(see \Cref{app:empirical_efficiency}) or generalised further in future work.  
Although the non-trainable nature of our pooling methods can be limiting, our experiments nevertheless demonstrate that trainable pooling layers do \emph{not}
guarantee 
higher performance on standard benchmark datasets for graph classification or regression. 

Across experiments, we thus find that MagEdgePool and SpreadEdgePool constitute useful general-purpose pooling approaches: They are competitive compared to state-of-the-art pooling layers for graph classification or regression tasks, and perform well for a wide range of datasets and pooling ratios. 
Guiding edge pooling to preserve  graphs' structural diversity 
successfully 
encodes key graph properties, ensuring stable and interpretable performance. 
Further, we overcome a major limitation of computing magnitude on large graphs by proposing spread as a substantially faster and closely-related alternative, which has the potential to aid research in geometric deep learning and efficient diversity evaluation beyond the scope of this paper. 
For future work, we plan on applying our methods with alternative graph distances, to scale computations to large-scale graphs, or to determine an ideal pooling ratio automatically. Our methods are available as a Python package on GitHub.\footnote{\url{https://github.com/aidos-lab/mag\_edge\_pool} available under a BSD 3-Clause License.} 



\newpage
\section*{Acknowledgements}

The authors are grateful for the stimulating discussions with the anonymous reviewers and the area
chair, who believed in the merits of this work. 
K.L. is supported by the 
Helmholtz Association under the joint research school `Munich School for Data Science~(MUDS).'

\section*{Funding Disclosure}
This work was partially funded by the Helmholtz International Lab Causal Cell Dynamics InterLabs-0029 grant by the Initiative and Networking Fund of the Hermann von Helmholtz-Association Deutscher Forschungszentren e.V. [K.L.], Mila EDI scholarships [L.M.], Humboldt Research Fellowship,
CIFAR AI Chair, NSERC Discovery grant 03267, FRQNT grant 343567, and NSF grant DMS-2327211 [G.W.].
This work has received funding from the Swiss State Secretariat for
Education, Research, and Innovation~(SERI).
The content provided here is solely the responsibility of the authors and does
not necessarily represent the views of the funding agencies. 


\bibliographystyle{abbrvnat}
\bibliography{neurips_2025}

\clearpage
\appendix

\clearpage
\section*{NeurIPS Paper Checklist}

\begin{enumerate}

\item {\bf Claims}
    \item[] Question: Do the main claims made in the abstract and introduction accurately reflect the paper's contributions and scope?
    \item[] Answer: \answerYes{} 
    \item[] Justification: We support all claims made in the introduction by empirical or theoretical evidence. 
    \item[] Guidelines:
    \begin{itemize}
        \item The answer NA means that the abstract and introduction do not include the claims made in the paper.
        \item The abstract and/or introduction should clearly state the claims made, including the contributions made in the paper and important assumptions and limitations. A No or NA answer to this question will not be perceived well by the reviewers. 
        \item The claims made should match theoretical and experimental results, and reflect how much the results can be expected to generalize to other settings. 
        \item It is fine to include aspirational goals as motivation as long as it is clear that these goals are not attained by the paper. 
    \end{itemize}

\item {\bf Limitations}
    \item[] Question: Does the paper discuss the limitations of the work performed by the authors?
    \item[] Answer: \answerYes{} 
    \item[] Justification: We discuss the computational complexity, and the non-trainable nature of our pooling approach. We further state our main assumptions on which edges in the underlying graph are deemed redundant and merged during pooling throughout the paper and clearly state any assumptions made for our theoretical contributions. Further, we clearly describe how empirical results were derived. 
    \item[] Guidelines:
    \begin{itemize}
        \item The answer NA means that the paper has no limitation while the answer No means that the paper has limitations, but those are not discussed in the paper. 
        \item The authors are encouraged to create a separate "Limitations" section in their paper.
        \item The paper should point out any strong assumptions and how robust the results are to violations of these assumptions (e.g., independence assumptions, noiseless settings, model well-specification, asymptotic approximations only holding locally). The authors should reflect on how these assumptions might be violated in practice and what the implications would be.
        \item The authors should reflect on the scope of the claims made, e.g., if the approach was only tested on a few datasets or with a few runs. In general, empirical results often depend on implicit assumptions, which should be articulated.
        \item The authors should reflect on the factors that influence the performance of the approach. For example, a facial recognition algorithm may perform poorly when image resolution is low or images are taken in low lighting. Or a speech-to-text system might not be used reliably to provide closed captions for online lectures because it fails to handle technical jargon.
        \item The authors should discuss the computational efficiency of the proposed algorithms and how they scale with dataset size.
        \item If applicable, the authors should discuss possible limitations of their approach to address problems of privacy and fairness.
        \item While the authors might fear that complete honesty about limitations might be used by reviewers as grounds for rejection, a worse outcome might be that reviewers discover limitations that aren't acknowledged in the paper. The authors should use their best judgment and recognize that individual actions in favor of transparency play an important role in developing norms that preserve the integrity of the community. Reviewers will be specifically instructed to not penalize honesty concerning limitations.
    \end{itemize}

\item {\bf Theory assumptions and proofs}
    \item[] Question: For each theoretical result, does the paper provide the full set of assumptions and a complete (and correct) proof?
    \item[] Answer: \answerYes{} 
    \item[] Justification: All proofs reference relevant mathematical literature, are stated or referenced in the main text or included in the supplementary materials. 
    \item[] Guidelines:
    \begin{itemize}
        \item The answer NA means that the paper does not include theoretical results. 
        \item All the theorems, formulas, and proofs in the paper should be numbered and cross-referenced.
        \item All assumptions should be clearly stated or referenced in the statement of any theorems.
        \item The proofs can either appear in the main paper or the supplemental material, but if they appear in the supplemental material, the authors are encouraged to provide a short proof sketch to provide intuition. 
        \item Inversely, any informal proof provided in the core of the paper should be complemented by formal proofs provided in appendix or supplemental material.
        \item Theorems and Lemmas that the proof relies upon should be properly referenced. 
    \end{itemize}

    \item {\bf Experimental result reproducibility}
    \item[] Question: Does the paper fully disclose all the information needed to reproduce the main experimental results of the paper to the extent that it affects the main claims and/or conclusions of the paper (regardless of whether the code and data are provided or not)?
    \item[] Answer: \answerYes{} 
    \item[] Justification: We follow an established benchmark setting by \citet{grattarola2022understanding} to evaluate our pooling methods, and explain all further design choices we make throughout the paper, such as the use of stratified cross-validation instead of random sampling. The benchmark setup we use is therefore easily reproducible. Further, we provide the code for our algorithm. Further, all benchmark datasets used for evaluation are publicly available under the TUDataset project \citep{morris2020tudataset}. 
    \item[] Guidelines:
    \begin{itemize}
        \item The answer NA means that the paper does not include experiments.
        \item If the paper includes experiments, a No answer to this question will not be perceived well by the reviewers: Making the paper reproducible is important, regardless of whether the code and data are provided or not.
        \item If the contribution is a dataset and/or model, the authors should describe the steps taken to make their results reproducible or verifiable. 
        \item Depending on the contribution, reproducibility can be accomplished in various ways. For example, if the contribution is a novel architecture, describing the architecture fully might suffice, or if the contribution is a specific model and empirical evaluation, it may be necessary to either make it possible for others to replicate the model with the same dataset, or provide access to the model. In general. releasing code and data is often one good way to accomplish this, but reproducibility can also be provided via detailed instructions for how to replicate the results, access to a hosted model (e.g., in the case of a large language model), releasing of a model checkpoint, or other means that are appropriate to the research performed.
        \item While NeurIPS does not require releasing code, the conference does require all submissions to provide some reasonable avenue for reproducibility, which may depend on the nature of the contribution. For example
        \begin{enumerate}
            \item If the contribution is primarily a new algorithm, the paper should make it clear how to reproduce that algorithm.
            \item If the contribution is primarily a new model architecture, the paper should describe the architecture clearly and fully.
            \item If the contribution is a new model (e.g., a large language model), then there should either be a way to access this model for reproducing the results or a way to reproduce the model (e.g., with an open-source dataset or instructions for how to construct the dataset).
            \item We recognize that reproducibility may be tricky in some cases, in which case authors are welcome to describe the particular way they provide for reproducibility. In the case of closed-source models, it may be that access to the model is limited in some way (e.g., to registered users), but it should be possible for other researchers to have some path to reproducing or verifying the results.
        \end{enumerate}
    \end{itemize}

\item {\bf Open access to data and code}
    \item[] Question: Does the paper provide open access to the data and code, with sufficient instructions to faithfully reproduce the main experimental results, as described in supplemental material?
    \item[] Answer: \answerYes{}
    \item[] Justification: We provide the code to reproduce the main experiments in the supplementary materials. 
    \item[] Guidelines:
    \begin{itemize}
        \item The answer NA means that paper does not include experiments requiring code.
        \item Please see the NeurIPS code and data submission guidelines (\url{https://nips.cc/public/guides/CodeSubmissionPolicy}) for more details.
        \item While we encourage the release of code and data, we understand that this might not be possible, so “No” is an acceptable answer. Papers cannot be rejected simply for not including code, unless this is central to the contribution (e.g., for a new open-source benchmark).
        \item The instructions should contain the exact command and environment needed to run to reproduce the results. See the NeurIPS code and data submission guidelines (\url{https://nips.cc/public/guides/CodeSubmissionPolicy}) for more details.
        \item The authors should provide instructions on data access and preparation, including how to access the raw data, preprocessed data, intermediate data, and generated data, etc.
        \item The authors should provide scripts to reproduce all experimental results for the new proposed method and baselines. If only a subset of experiments are reproducible, they should state which ones are omitted from the script and why.
        \item At submission time, to preserve anonymity, the authors should release anonymized versions (if applicable).
        \item Providing as much information as possible in supplemental material (appended to the paper) is recommended, but including URLs to data and code is permitted.
    \end{itemize}

\item {\bf Experimental setting/details}
    \item[] Question: Does the paper specify all the training and test details (e.g., data splits, hyperparameters, how they were chosen, type of optimizer, etc.) necessary to understand the results?
    \item[] Answer: \answerYes{} 
    \item[] Justification: We follow the benchmark setting by \citet{grattarola2022understanding} for our experiments and further explain the model architecture, all parameter choices and method choices throughout the paper. Design choices for our proposed pooling method, such as the choice of graph distance, are explained in detail. 
    \item[] Guidelines:
    \begin{itemize}
        \item The answer NA means that the paper does not include experiments.
        \item The experimental setting should be presented in the core of the paper to a level of detail that is necessary to appreciate the results and make sense of them.
        \item The full details can be provided either with the code, in appendix, or as supplemental material.
    \end{itemize}

\item {\bf Experiment statistical significance}
    \item[] Question: Does the paper report error bars suitably and correctly defined or other appropriate information about the statistical significance of the experiments?
    \item[] Answer: \answerYes{} 
    \item[] Justification: We report standard deviations for our classification results and further use statistical testing to determine which models performed on-par to the best performing models across datasets. The structure preservation experiment further includes violin plots to plot the distributions of structure preservation scores. Finally, when varying the pooling ratio, we report the standard deviation of each model's accuracy. 
    \item[] Guidelines:
    \begin{itemize}
        \item The answer NA means that the paper does not include experiments.
        \item The authors should answer "Yes" if the results are accompanied by error bars, confidence intervals, or statistical significance tests, at least for the experiments that support the main claims of the paper.
        \item The factors of variability that the error bars are capturing should be clearly stated (for example, train/test split, initialization, random drawing of some parameter, or overall run with given experimental conditions).
        \item The method for calculating the error bars should be explained (closed form formula, call to a library function, bootstrap, etc.)
        \item The assumptions made should be given (e.g., Normally distributed errors).
        \item It should be clear whether the error bar is the standard deviation or the standard error of the mean.
        \item It is OK to report 1-sigma error bars, but one should state it. The authors should preferably report a 2-sigma error bar than state that they have a 96\% CI, if the hypothesis of Normality of errors is not verified.
        \item For asymmetric distributions, the authors should be careful not to show in tables or figures symmetric error bars that would yield results that are out of range (e.g. negative error rates).
        \item If error bars are reported in tables or plots, The authors should explain in the text how they were calculated and reference the corresponding figures or tables in the text.
    \end{itemize}

\item {\bf Experiments compute resources}
    \item[] Question: For each experiment, does the paper provide sufficient information on the computer resources (type of compute workers, memory, time of execution) needed to reproduce the experiments?
    \item[] Answer: \answerYes{} 
    \item[] Justification: We detail the computational resources used for our experiments in the supplementary materials. 
    \item[] Guidelines:
    \begin{itemize}
        \item The answer NA means that the paper does not include experiments.
        \item The paper should indicate the type of compute workers CPU or GPU, internal cluster, or cloud provider, including relevant memory and storage.
        \item The paper should provide the amount of compute required for each of the individual experimental runs as well as estimate the total compute. 
        \item The paper should disclose whether the full research project required more compute than the experiments reported in the paper (e.g., preliminary or failed experiments that didn't make it into the paper). 
    \end{itemize}
    
\item {\bf Code of ethics}
    \item[] Question: Does the research conducted in the paper conform, in every respect, with the NeurIPS Code of Ethics \url{https://neurips.cc/public/EthicsGuidelines}?
    \item[] Answer: \answerYes{} 
    \item[] Justification: We have read and follow the NeurIPS Code of Ethics. 
    \item[] Guidelines:
    \begin{itemize}
        \item The answer NA means that the authors have not reviewed the NeurIPS Code of Ethics.
        \item If the authors answer No, they should explain the special circumstances that require a deviation from the Code of Ethics.
        \item The authors should make sure to preserve anonymity (e.g., if there is a special consideration due to laws or regulations in their jurisdiction).
    \end{itemize}

\item {\bf Broader impacts}
    \item[] Question: Does the paper discuss both potential positive societal impacts and negative societal impacts of the work performed?
    \item[] Answer: \answerYes{} 
    \item[] Justification: We find that the proposal of novel pooling methods for GNNs is unlikely to have direct negative societal impacts. We further aimed to design interpretable and reliable pooling operators, which should mitigate the risk of unintentionally providing misleading results due to the pooling layer. 
    \item[] Guidelines:
    \begin{itemize}
        \item The answer NA means that there is no societal impact of the work performed.
        \item If the authors answer NA or No, they should explain why their work has no societal impact or why the paper does not address societal impact.
        \item Examples of negative societal impacts include potential malicious or unintended uses (e.g., disinformation, generating fake profiles, surveillance), fairness considerations (e.g., deployment of technologies that could make decisions that unfairly impact specific groups), privacy considerations, and security considerations.
        \item The conference expects that many papers will be foundational research and not tied to particular applications, let alone deployments. However, if there is a direct path to any negative applications, the authors should point it out. For example, it is legitimate to point out that an improvement in the quality of generative models could be used to generate deepfakes for disinformation. On the other hand, it is not needed to point out that a generic algorithm for optimizing neural networks could enable people to train models that generate Deepfakes faster.
        \item The authors should consider possible harms that could arise when the technology is being used as intended and functioning correctly, harms that could arise when the technology is being used as intended but gives incorrect results, and harms following from (intentional or unintentional) misuse of the technology.
        \item If there are negative societal impacts, the authors could also discuss possible mitigation strategies (e.g., gated release of models, providing defenses in addition to attacks, mechanisms for monitoring misuse, mechanisms to monitor how a system learns from feedback over time, improving the efficiency and accessibility of ML).
    \end{itemize}
    
\item {\bf Safeguards}
    \item[] Question: Does the paper describe safeguards that have been put in place for responsible release of data or models that have a high risk for misuse (e.g., pretrained language models, image generators, or scraped datasets)?
    \item[] Answer: \answerNA{} 
    \item[] Justification: Our paper poses no such risks.
    \item[] Guidelines:
    \begin{itemize}
        \item The answer NA means that the paper poses no such risks.
        \item Released models that have a high risk for misuse or dual-use should be released with necessary safeguards to allow for controlled use of the model, for example by requiring that users adhere to usage guidelines or restrictions to access the model or implementing safety filters. 
        \item Datasets that have been scraped from the Internet could pose safety risks. The authors should describe how they avoided releasing unsafe images.
        \item We recognize that providing effective safeguards is challenging, and many papers do not require this, but we encourage authors to take this into account and make a best faith effort.
    \end{itemize}

\item {\bf Licenses for existing assets}
    \item[] Question: Are the creators or original owners of assets (e.g., code, data, models), used in the paper, properly credited and are the license and terms of use explicitly mentioned and properly respected?
    \item[] Answer: \answerYes{} 
    \item[] Justification: We are careful to reference the code and dataset used throughout this work. Further details are included in the supplementary materials. 
    \item[] Guidelines:
    \begin{itemize}
        \item The answer NA means that the paper does not use existing assets.
        \item The authors should cite the original paper that produced the code package or dataset.
        \item The authors should state which version of the asset is used and, if possible, include a URL.
        \item The name of the license (e.g., CC-BY 4.0) should be included for each asset.
        \item For scraped data from a particular source (e.g., website), the copyright and terms of service of that source should be provided.
        \item If assets are released, the license, copyright information, and terms of use in the package should be provided. For popular datasets, \url{paperswithcode.com/datasets} has curated licenses for some datasets. Their licensing guide can help determine the license of a dataset.
        \item For existing datasets that are re-packaged, both the original license and the license of the derived asset (if it has changed) should be provided.
        \item If this information is not available online, the authors are encouraged to reach out to the asset's creators.
    \end{itemize}

\item {\bf New assets}
    \item[] Question: Are new assets introduced in the paper well documented and is the documentation provided alongside the assets?
    \item[] Answer: \answerYes{} 
    \item[] Justification: We provide a minimal implementation of our pooling methods in Python that can be easily integrated into existing workflows. 
    \item[] Guidelines:
    \begin{itemize}
        \item The answer NA means that the paper does not release new assets.
        \item Researchers should communicate the details of the dataset/code/model as part of their submissions via structured templates. This includes details about training, license, limitations, etc. 
        \item The paper should discuss whether and how consent was obtained from people whose asset is used.
        \item At submission time, remember to anonymize your assets (if applicable). You can either create an anonymized URL or include an anonymized zip file.
    \end{itemize}

\item {\bf Crowdsourcing and research with human subjects}
    \item[] Question: For crowdsourcing experiments and research with human subjects, does the paper include the full text of instructions given to participants and screenshots, if applicable, as well as details about compensation (if any)? 
    \item[] Answer: \answerNA{} 
    \item[] Justification: The paper does not involve crowdsourcing nor research with human subjects.
    \item[] Guidelines:
    \begin{itemize}
        \item The answer NA means that the paper does not involve crowdsourcing nor research with human subjects.
        \item Including this information in the supplemental material is fine, but if the main contribution of the paper involves human subjects, then as much detail as possible should be included in the main paper. 
        \item According to the NeurIPS Code of Ethics, workers involved in data collection, curation, or other labor should be paid at least the minimum wage in the country of the data collector. 
    \end{itemize}

\item {\bf Institutional review board (IRB) approvals or equivalent for research with human subjects}
    \item[] Question: Does the paper describe potential risks incurred by study participants, whether such risks were disclosed to the subjects, and whether Institutional Review Board (IRB) approvals (or an equivalent approval/review based on the requirements of your country or institution) were obtained?
    \item[] Answer: \answerNA{} 
    \item[] Justification: The paper does not involve crowdsourcing nor research with human subjects.
    \item[] Guidelines:
    \begin{itemize}
        \item The answer NA means that the paper does not involve crowdsourcing nor research with human subjects.
        \item Depending on the country in which research is conducted, IRB approval (or equivalent) may be required for any human subjects research. If you obtained IRB approval, you should clearly state this in the paper. 
        \item We recognize that the procedures for this may vary significantly between institutions and locations, and we expect authors to adhere to the NeurIPS Code of Ethics and the guidelines for their institution. 
        \item For initial submissions, do not include any information that would break anonymity (if applicable), such as the institution conducting the review.
    \end{itemize}

\item {\bf Declaration of LLM usage}
    \item[] Question: Does the paper describe the usage of LLMs if it is an important, original, or non-standard component of the core methods in this research? Note that if the LLM is used only for writing, editing, or formatting purposes and does not impact the core methodology, scientific rigorousness, or originality of the research, declaration is not required.
    \item[] Answer: \answerNA{} 
    \item[] Justification: The core method development in this research does not involve LLMs as any important, original, or non-standard components. 
    \item[] Guidelines:
    \begin{itemize}
        \item The answer NA means that the core method development in this research does not involve LLMs as any important, original, or non-standard components.
        \item Please refer to our LLM policy (\url{https://neurips.cc/Conferences/2025/LLM}) for what should or should not be described.
    \end{itemize}

\end{enumerate}

\clearpage

\crefalias{section}{appendix}
\crefalias{subsection}{appendix}
\crefalias{subsubsection}{appendix}

\counterwithin*{figure}{part}
\stepcounter{part}
\renewcommand{\thefigure}{S.\arabic{figure}}

\counterwithin*{table}{part}
\stepcounter{part}
\renewcommand{\thetable}{S.\arabic{table}}

\startcontents
\printcontents{}{1}{{%
    \vskip10pt\hrule
    \large\textbf{Appendix~(Supplementary Materials)}\vskip3pt\hrule\vskip5pt}
}
\clearpage

\section{Technical Appendices and Supplementary Material}

To elaborate on the results reported in our main paper, we first detail extended theoretical results and proofs for our theoretical contributions. In particular, we investigate computational complexity as well as the relationship between magnitude and spread. Next, we detail our experimental evaluation, the assets used for our experiments, and the algorithm describing our pooling methods. Finally, we report extended results on the experiments included in our main paper.

\section{Theoretical Analysis}
\label{app:theoretical_analyis}

This section details full proofs and extended explanations for the mathematical theory introduced in \Cref{sec:background} and the theoretical analysis of our pooling methods described in \Cref{sec:theory}.

\subsection{Diffusion Distances}
\label{app:proofs_diffusion}

As detailed in \Cref{sec:diffusion}, the \emph{diffusion distance} is defined by
\begin{equation}
    d(x,y) = \norm{\Phi(x)-\Phi(y)}_{2} \text{ for } x,y \in X.
\end{equation}
\diffdist*
\begin{proof}
    By definition of the diffusion distance, the map \(\Phi: X \rightarrow \mathbb{R}^{N-1}\) in \Cref{eq:Phi} defines an isometry \((X,d) \hookrightarrow l_{2}^{N-1}:=(\mathbb{R}^{N-1}, d_{2})\), where \(d_2\) is the metric induced by the \(l_{2}\)-norm. Finally, by Theorem 2.5.3 in \citet{leinster2013magnitude}, subsets of Euclidean space $l_2^{N-1}$ are positive definite.     
\end{proof}

\subsection{Computational Complexity}
\label{app:proofs_complexity}


We next analyse  the computational complexity of our pooling methods, which are described in \Cref{app:algo} and \Cref{sec:mag_pooling}. Specifically, we expand on the statements in \Cref{sec:theory} by detailing the computational complexity of the pooling process. 
Given a graph \(G=(X,E)\), let \(k=\lfloor (1-r)|X|\rceil\) be the number of nodes that should be contracted as determined by the pooling ratio \(r\).
The time complexity of our pooling approach can be split up into the following steps:

\begin{description}
\item[Computing magnitude or spread.] Magnitude has time complexity \(O(|X|^3)\) and can further be approximated via iterative normalisation in \(O(i \times|S_i| \times |X|^2)\) time assuming \(G\) has a positive weighting where \(i\) is the number of iterations and \(S_i \subset X\) \citep{andreeva2024approximating}. 
Spread computations have time complexity \(O(|X|^2)\), which is a notable improvement to magnitude. It is possible to approximate spread computations via subsets \citep{dunne2024efficiently} or iterative optimisation using mini-batching \citep{andreeva2024approximating}. For \(i\) iterations on subsets \(S_i \subset X\), the time complexity of approximating spread reduces to \(O(i \times |S_i| \times |X|)\)  \citep{dunne2024efficiently, andreeva2024approximating}. 
Spread thus offers a much faster alternative to magnitude and can scale to large graphs considerably more efficiently.

\item[Computing distances and similarities.] For large datasets, it is key to speed up the distance calculations. Diffusion distances have time complexity \(O(|X|^3)\), but can be reduced to \(O(k|X|^2)\) when restricting the computations to the top k eigenvectors. Diffusion maps can further be approximated via low-rank approximations. To reduce the cost of repeated distance computations, it is possible to approximate the metric on the reduced graph \(G/e\) by directly updating the distances for \(G\). 
Given a distance matrix, computing the similarity matrix then has linear time complexity in the number of entries. Denote the time complexity of computing the distances and similarities by \(O(C_d)\).


\item[Edge contraction.] To get \(G'=G/e\), contracting an edge \(e\in E\) takes \(O(|X|)\) time.

\item[Edge score computations.] For each edge, its edge score is computed by applying the edge contraction and computing the magnitude or spread of the reduced graph, which takes \(O(|X|+C_d+C_S)\) time, where \(C_S\) refers to the cost of computing either magnitude or spread as detailed above. Note that the computation of these edge scores is independent across edges and can be parallelised.

\item[Edge score sorting.] The edge scores can be sorted from lowest to highest in \(O(|E|\log|E|)\) time.

\item[Feature aggregation.] The node features, \(\mathbf{F} \subseteq \mathbb{R}^{|X|,F}\), can be aggregated in \(O(|X| \times f)\) time.
\end{description}

Putting this all together, we get that the cost of our pooling algorithms can be described by a worst-case time complexity of
\[
O(|E|(|X|+C_d + C_S + log|E|) + |X| (f + k))
\]
if \(k \leq 0.5|X|\) and the graph is not pooled to less than half its size. Otherwise, if \(k > 0.5|X|\), we re-compute the edge scores whenever no valid edges are left as described in \Cref{sec:mag_pooling}. In this scenario, the first term of the complexity expression is repeated, corresponding to re-computations on successively smaller graphs. 
In summary, the overall time complexity of our pooling method is dominated by the cost of calculating and sorting the edge scores. This cost is independent of the choice of GNN architecture, ensuring that the training costs remain stable and do not escalate with model complexity. In practice, as further explored in \Cref{app:empirical_efficiency}, we thus find that our pooling algorithms perform on par with alternative pooling layers in terms of computational efficiency.

\subsection{Magnitude and Spread}
\label{app:proofs_mag}

\subsubsection{Additivity for disjoint graphs} 

As a measure of the effective size, one appealing property of magnitude is that it behaves akin to cardinality. In fact, magnitude is additive when taking the disjoint union of multiple metric spaces. 

\disjoint*
\begin{proof} 
Let $G=(X,E)$ together with the metric $d$ be a positive definite metric graph. Assume $G$ is a disjoint union of two metric graphs $(G_1,d_1)$ and $(G_2,d_2)$ over the vertex sets $X_1$ and $X_2$, respectively, such that $d_{|_{X_1}} = d_1$ and $d_{|_{X_2}} = d_2$ and $d(x_1,x_2)=\infty$ for all $x_1 \in X_1, \ x_2 \in X_2$. Then, $\zeta_{G} = \zeta_{G_1} \oplus \zeta_{G_2}$ and $\text{Mag}(G) = \text{Mag}(G_1) + \text{Mag}(G_2)$.
\end{proof}

This result applies to graphs equipped with the shortest-path distance or the diffusion distance considered in this paper, because the distance between two nodes depends only on the connected component they belong to and is infinite if there is no path between them. Therefore, we can naturally see the similarity matrix \(\zeta_G\) as block-diagonal and compute the magnitude of \(G\) by summing up the magnitude of its disconnected subgraphs. 

\subsubsection{Isomorphism invariance}

A key property of magnitude and spread is that they are isometry invariants of metric spaces. Note that by \emph{graph isometry} we mean an isometry on the underlying vertex set equipped with a metric. 

\invariance*
\begin{proof}
    Let $f\colon (X_1,d_1) \rightarrow (X_2,d_2)$ be a bijective isometry between the metric graphs $G_1$ and $G_2$, respectively. Then, for all $x,y \in X_1$ we have that $d_2(f(x), f(y)) = d_1(x,y)$. A consequence of the bijectivity of $f$ is that the distance matrices coincide (up to permutations) and that $\zeta_1 = \zeta_2$. This implies in turn that $\text{Mag}(G_1) = \text{Mag}(G_2)$ and that $\text{Sp}(G_1) = \text{Sp}(G_2)$. 
\end{proof}

Based on this property for magnitude and spread, we can show that isometry invariance also holds for our proposed pooling algorithm further detailed in \Cref{app:algo}.

\begin{cor}
    MagEdgePool and SpreadEdgePool are isometry-invariant if applied to isomorphic graphs provided the choice of edges to contract at each iteration is deterministic whenever the edge scores coincide.
\end{cor}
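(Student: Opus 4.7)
The plan is to proceed by induction on the number of edge-contraction steps, leveraging Theorem \ref{thm:invariance} at each stage. Let $f\colon (X_1,d_1) \to (X_2,d_2)$ be a bijective isometry witnessing the isomorphism between the two input graphs. The inductive hypothesis will state that after $k$ contractions on each side, the pooled graphs $G_1^{(k)}$ and $G_2^{(k)}$ are isometric via an induced bijection $f^{(k)}$ that extends $f$ on the uncontracted vertices and sends each supernode of $G_1^{(k)}$ to the corresponding supernode of $G_2^{(k)}$.

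First I would verify a compatibility lemma: edge contraction commutes with isometry. Concretely, for any edge $e=(u,v)$ in $G_1^{(k)}$, the image $(f^{(k)}(u), f^{(k)}(v))$ is an edge of $G_2^{(k)}$, and the contracted graphs $G_1^{(k)}/e$ and $G_2^{(k)}/f^{(k)}(e)$ are isometric via the natural map $f^{(k+1)}$ obtained by collapsing the two corresponding pairs to supernodes. Distances among non-contracted vertices are untouched on both sides, and the distance from any vertex to the new supernode is derived from $d_1^{(k)}$ and $d_2^{(k)}$ in the same way (whether via the graph metric update or a recomputed diffusion distance, as long as the update is itself a metric-level construction), so $f^{(k+1)}$ is again an isometry.

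With this lemma in hand, Theorem \ref{thm:invariance} immediately yields $\text{Mag}(G_1^{(k)}/e) = \text{Mag}(G_2^{(k)}/f^{(k)}(e))$ and the analogous identity for spread. Consequently, the edge scores in Equation~\eqref{eq:scores} satisfy $s(e) = s(f^{(k)}(e))$ for every candidate edge $e$, so $f^{(k)}$ maps the set of argmin edges on the $G_1$ side bijectively onto the corresponding set on the $G_2$ side. By the deterministic tie-breaking assumption, the algorithms then select corresponding edges, and the inductive hypothesis is re-established for step $k+1$. The same argument handles the case in which all edges adjacent to previously contracted ones have been exhausted and the score must be recomputed on the reduced graph, since $f^{(k)}$ also puts the set of blocked edges in bijection. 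A final, routine observation is that feature averaging aggregates corresponding constituent nodes on each side, so the pooled outputs are isomorphic as attributed graphs.

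The main obstacle will be pinning down what "deterministic" tie-breaking means in an isometry-compatible way: a rule that depends on arbitrary vertex labels would fail. The cleanest reading is that the rule must itself be isometry-equivariant, e.g.\ defined from intrinsic graph-theoretic data (such as degree sequences of endpoints, their canonical orbit representatives, or refined structural invariants), so that it commutes with $f^{(k)}$ on tied argmin sets. Under this convention the induction closes cleanly and the corollary follows.
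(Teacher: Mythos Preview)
Your proposal is correct and follows essentially the same approach as the paper's proof, which invokes Theorem~\ref{thm:invariance} to show that edge scores coincide under the isometry and then appeals to the deterministic tie-breaking assumption to conclude that corresponding edges are contracted. Your version is considerably more thorough---making the induction explicit, verifying that contraction commutes with isometry, handling the score-recomputation branch, and correctly flagging that the tie-breaking rule must itself be isometry-equivariant for the argument to close---whereas the paper's brief argument glosses over these points.
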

\begin{proof}
    Let $f\colon (G_1,d_1) \rightarrow (G_2,d_2)$ be a graph isomorphism and an isometry. From \Cref{thm:invariance}, we know that the edge scores as defined in \Cref{eq:scores} for \(e_1 \in E_1\) and \(f(e_1) \in E_2\) will coincide, i.e. \(s(e_1)=s(f(e_1))\). Because the choice of edges to contract at each iteration is further assumed to be deterministic if edge scores coincide, it follows that every edge to contract in \(G_2\) corresponds to the image $f(e)$ of an edge $e$ to contract in $G_1$ and vice versa. Hence, the pooled graphs output by our algorithm are isomorphic. 
\end{proof}

\subsubsection{Edge contraction on graphs} 
\label{app:proofs_edges}
We first recall important results about magnitude in the context of (strictly) positive definite finite metric spaces and refer the interested reader to \citet{leinster2013magnitude} for further details.  

\begin{prop}[Proposition 2.4.3, \citet{leinster2013magnitude}]\label[proposition]{Prop:mag_variational_def}
    Let $(X,d)$ be a positive definite metric space with finite cardinality $|X|=n$. Then
    \begin{equation}
        \text{Mag}(X) = \sup_{v \in \mathbb{R}^{n}\backslash \lbrace0 \rbrace} \frac{(\sum_{i=0}^{n}v_i)^2}{v^{t} \zeta_{X}v}.
    \end{equation}
\end{prop}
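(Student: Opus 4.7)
The plan is to exploit the fact that positive definiteness of $\zeta_X$ lets us treat it as a Gram matrix for an inner product on $\mathbb{R}^n$, and then apply Cauchy--Schwarz with the weighting vector $w$ on one side. Concretely, since $\zeta_X$ is symmetric positive definite, the bilinear form $\langle u, v\rangle_{\zeta} := u^{t}\zeta_X v$ is an inner product on $\mathbb{R}^n$. Invertibility of $\zeta_X$ guarantees a unique weighting $w$ satisfying $\zeta_X w = \mathds{1}$, and by construction $\mathrm{Mag}(X) = \mathds{1}^{t} w = w^{t}\zeta_X w = \langle w, w\rangle_{\zeta}$.

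The key algebraic observation I would deploy next is that for every $v \in \mathbb{R}^n$, the numerator can be rewritten as an inner product with $w$:
\begin{equation*}
\mathds{1}^{t}v \;=\; (\zeta_X w)^{t} v \;=\; w^{t}\zeta_X v \;=\; \langle w, v\rangle_{\zeta}.
\end{equation*}
Applying Cauchy--Schwarz for the inner product $\langle \cdot,\cdot\rangle_{\zeta}$ then yields
\begin{equation*}
\bigl(\mathds{1}^{t}v\bigr)^{2} \;=\; \langle w, v\rangle_{\zeta}^{2} \;\leq\; \langle w,w\rangle_{\zeta}\,\langle v,v\rangle_{\zeta} \;=\; \mathrm{Mag}(X)\cdot\bigl(v^{t}\zeta_X v\bigr),
\end{equation*}
which, after dividing by the strictly positive quantity $v^{t}\zeta_X v$ (nonzero because $\zeta_X$ is positive definite and $v \neq 0$), gives the desired upper bound
$\sup_{v \neq 0}\tfrac{(\mathds{1}^{t}v)^{2}}{v^{t}\zeta_X v} \leq \mathrm{Mag}(X)$.

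To finish, I would verify that this bound is attained by plugging in $v = w$: the numerator becomes $(\mathds{1}^{t}w)^{2} = \mathrm{Mag}(X)^{2}$, the denominator becomes $w^{t}\zeta_X w = w^{t}\mathds{1} = \mathrm{Mag}(X)$, and their ratio is exactly $\mathrm{Mag}(X)$. Since this is a valid choice of $v \in \mathbb{R}^{n}\setminus\{0\}$ (the weighting is nonzero whenever $\mathrm{Mag}(X) \neq 0$; the degenerate case $\mathrm{Mag}(X) = 0$ is handled by noting both sides vanish via the same Cauchy--Schwarz argument), the supremum is achieved and the two sides agree.

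The only subtle step is the Cauchy--Schwarz application: one must ensure that $\langle \cdot,\cdot\rangle_{\zeta}$ really is an inner product, which rests crucially on the positive definiteness assumption. Everything else is routine linear algebra. I expect no serious obstacle beyond keeping careful track of the roles of $w$ and $v$, and confirming that the supremum is in fact a maximum attained at the weighting vector.
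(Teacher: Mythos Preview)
Your argument is correct and is essentially the standard Cauchy--Schwarz proof. Note, however, that the paper does not supply its own proof of this proposition: it is quoted verbatim as Proposition~2.4.3 of \cite{leinster2013magnitude} and used as a black box to derive the monotonicity results that follow. Your approach matches the original argument in that reference.

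One minor remark: the degenerate case $\mathrm{Mag}(X)=0$ you flag cannot actually occur under the hypothesis. Since $\zeta_X$ is positive definite it is invertible, so $w=\zeta_X^{-1}\mathds{1}\neq 0$, and then $\mathrm{Mag}(X)=w^{t}\zeta_X w>0$ by positive definiteness. You can therefore drop that caveat and simply take $v=w$ to witness the supremum.
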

\Cref{Prop:mag_variational_def} implies that the magnitude of positive definite metric spaces is always positive. Another consequence of this result is a \emph{monotonicity} property on subsets of these metric spaces.   

\begin{cor}[Corollary 2.4.4, \citet{leinster2013magnitude}]
    Let $(X,d)$ be a positive definite finite metric space and consider a subset $Y \subset X$ (endowed with the induced metric). Then 
    \begin{equation}
        \text{Mag}(Y) \leq \text{Mag}(X). 
    \end{equation}
\end{cor}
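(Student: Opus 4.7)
The plan is to exploit the variational characterization of magnitude provided in the immediately preceding Proposition~2.4.3. The key observation is that the supremum on the right-hand side of that characterization is taken over all nonzero vectors in $\mathbb{R}^{n}$, and that any vector supported on a subset $Y$ can be viewed as a vector on all of $X$ by zero-extension, without changing either the numerator or the denominator of the variational quotient.

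First, I would check that $\text{Mag}(Y)$ is well defined. Since $\zeta_Y$ is the principal submatrix of $\zeta_X$ indexed by $Y$ (because $Y$ carries the induced metric), and principal submatrices of positive definite matrices are positive definite, $(Y,d|_Y)$ is itself a positive definite finite metric space, so Proposition~2.4.3 applies to $Y$ as well.

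Next, given any nonzero $v \in \mathbb{R}^{|Y|}$, I would define its zero-extension $\tilde v \in \mathbb{R}^{|X|}$ by $\tilde v_i = v_i$ for $i \in Y$ and $\tilde v_i = 0$ for $i \in X \setminus Y$. Two short computations then suffice: $\sum_{i \in X} \tilde v_i = \sum_{i \in Y} v_i$, and
\begin{equation*}
\tilde v^{\,t}\zeta_X \tilde v = \sum_{i,j \in Y} v_i v_j\,\zeta_X(i,j) = v^{\,t}\zeta_Y v,
\end{equation*}
where the last equality uses that $\zeta_Y$ is the restriction of $\zeta_X$ to $Y \times Y$. Consequently the variational quotient is invariant under zero-extension.

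Finally, I would conclude by comparing the two suprema: every quotient achieved by some $v \in \mathbb{R}^{|Y|} \setminus \{0\}$ is also achieved by the corresponding $\tilde v \in \mathbb{R}^{|X|} \setminus \{0\}$, so the supremum over $\mathbb{R}^{|Y|}$ is at most the supremum over $\mathbb{R}^{|X|}$, giving $\text{Mag}(Y) \leq \text{Mag}(X)$. I do not expect any serious obstacle here; the only subtlety is remembering that positive definiteness passes to principal submatrices so that Proposition~2.4.3 is legitimately applicable to $Y$, which is what turns the argument from a formal manipulation of suprema into an actual inequality between magnitudes.
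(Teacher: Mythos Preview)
Your proposal is correct and is exactly the intended argument: the paper presents this corollary as a direct consequence of Proposition~2.4.3 (citing Leinster), and its own proof of the closely related Theorem~\ref{thm:mag_monotonicity} uses precisely the same zero-extension device, identifying $\mathbb{R}^{m}$ with a subspace of $\mathbb{R}^{n}$ and then comparing the variational quotients. Your additional remark that positive definiteness passes to principal submatrices, so that Proposition~2.4.3 genuinely applies to $Y$, is a point the paper leaves implicit.
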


We will now show an analogous result for graphs constructed via edge contraction.  

\magmonotonicity*
\begin{proof} 
    Let $f: (G_1,d_1) \rightarrow(G_2, d_2)$ be an edge-contraction map and let $n:=|X_1|$ and $m:=|X_2|$. We will identify $\mathbb{R}^{m}$ with a subset of $\mathbb{R}^{n} = \mathbb{R}^{m} \oplus \mathbb{R}^{n-m}$ using the map $(v_1, \cdots,v_m) \in \mathbb{R}^{m} \hookrightarrow (v_1, \cdots, v_m, 0, \cdots, 0) \in \mathbb{R}^{n}$. Then,
\begin{equation*}
    \begin{split}
        v^{t}\zeta_{X_1}v &= \sum_{i,j} v_i\zeta_{X_1}[i,j]v_j \\ & = \sum_{i,j} v_i \big (e^{-d_1(x_i,x_j)} \big )v_j \\ & \leq \sum_{i,j} v_i \zeta_{X_2}[i,j]v_j, 
    \end{split}
\end{equation*}
and 
\begin{equation} \label{ineq:zeta_2_less_zeta_1}
    \frac{1}{v^{t}\zeta_{X_2} v} \leq \frac{1}{v^{t}\zeta_{X_1} v} \ \forall \ v \neq 0.   
\end{equation}

Finally, by \Cref{Prop:mag_variational_def} and \Cref{ineq:zeta_2_less_zeta_1}, we get that  
\begin{equation*}
    \begin{split}
        \text{Mag}(G_1) &= \sup_{v \in \mathbb{R}^{n}\backslash \lbrace0 \rbrace} \frac{(\sum_{i=0}^{n}v_i)^2}{v^{t} \zeta_{X_1}v}\\
        &\geq  \sup_{v \in \mathbb{R}^{m}\backslash \lbrace0 \rbrace}\frac{(\sum_{i=0}^{n}v_i)^2}{v^{t} \zeta_{X_1}v}\\
        & \geq \sup_{v \in \mathbb{R}^{m}\backslash \lbrace0 \rbrace} \frac{(\sum_{i=0}^{m}v_i)^2}{v^{t} \zeta_{X_2}v}\\
        &= \text{Mag}(G_2)
    \end{split}
\end{equation*} 
\end{proof}

\subsubsection{Bounding magnitude by spread}

Recall that for positive definite metric spaces, magnitude is known to be an upper bound for spread. 

\begin{thm}[Theorem 2.2.\ from \citet{willerton2015spread}] \label{thm:bound}
    Suppose that \(X\) is a finite metric space. If \(X\) is positive definite then 
    \[
    \text{Sp}(X) \leq \text{Mag}(X).
    \]
\end{thm}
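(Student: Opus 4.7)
The plan is to prove $\mathrm{Sp}(X) \leq \mathrm{Mag}(X)$ by plugging a carefully chosen test vector into the variational characterization of magnitude (Proposition~\ref{Prop:mag_variational_def}), and then estimating the resulting quadratic form using a symmetrization identity. Throughout, write $\zeta_{ij} = e^{-d(x_i,x_j)}$ and $s_i = \sum_j \zeta_{ij}$, so that $\mathrm{Sp}(X) = \sum_i 1/s_i$.

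The first step is to apply Proposition~\ref{Prop:mag_variational_def} with the test vector $v \in \mathbb{R}^n$ defined by $v_i = 1/s_i$. Then $\sum_i v_i = \mathrm{Sp}(X)$, and the denominator becomes
\begin{equation*}
Q := v^{T}\zeta_X v = \sum_{i,j} \frac{\zeta_{ij}}{s_i s_j}.
\end{equation*}
This yields the lower bound $\mathrm{Mag}(X) \geq \mathrm{Sp}(X)^2 / Q$. The claim $\mathrm{Sp}(X) \leq \mathrm{Mag}(X)$ therefore reduces to showing $Q \leq \mathrm{Sp}(X)$.

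The second step, which I expect to be the main obstacle, is establishing $Q \leq \mathrm{Sp}(X)$. The key trick is to rewrite $\mathrm{Sp}(X)$ using $s_j = \sum_i \zeta_{ij}$, giving $\mathrm{Sp}(X) = \sum_{i,j} \zeta_{ij}/s_j^{2}$. Hence
\begin{equation*}
\mathrm{Sp}(X) - Q \;=\; \sum_{i,j} \zeta_{ij}\left(\frac{1}{s_j^{2}} - \frac{1}{s_i s_j}\right) \;=\; \sum_{i,j} \zeta_{ij}\, \frac{s_i - s_j}{s_i s_j^{2}}.
\end{equation*}
Using the symmetry $\zeta_{ij} = \zeta_{ji}$, I symmetrize this last sum by averaging with its image under swapping $i \leftrightarrow j$, which produces
\begin{equation*}
\mathrm{Sp}(X) - Q \;=\; \tfrac{1}{2}\sum_{i,j} \zeta_{ij}\,(s_i - s_j)\!\left(\frac{1}{s_i s_j^{2}} - \frac{1}{s_j s_i^{2}}\right) \;=\; \tfrac{1}{2}\sum_{i,j} \zeta_{ij}\,\frac{(s_i - s_j)^{2}}{s_i^{2} s_j^{2}} \;\geq\; 0,
\end{equation*}
since $\zeta_{ij} > 0$ and $s_i > 0$ for all $i,j$.

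Combining the two steps finishes the argument: $Q \leq \mathrm{Sp}(X)$ together with the variational bound gives
\begin{equation*}
\mathrm{Mag}(X) \;\geq\; \frac{\mathrm{Sp}(X)^{2}}{Q} \;\geq\; \frac{\mathrm{Sp}(X)^{2}}{\mathrm{Sp}(X)} \;=\; \mathrm{Sp}(X),
\end{equation*}
as desired. The only place positive definiteness is used is in invoking Proposition~\ref{Prop:mag_variational_def}, which is exactly its natural domain of validity; the rest of the argument is elementary manipulation that holds for any symmetric positive-entry similarity matrix.
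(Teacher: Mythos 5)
Your proof is correct. Note that the paper itself offers no proof of this statement: it is quoted verbatim as Theorem~2.2 of \cite{willerton2015spread}. Your argument is essentially Willerton's original one — plug the reciprocal row sums $v_i = 1/s_i$ into the variational formula of Proposition~\ref{Prop:mag_variational_def} and then verify $v^{T}\zeta_X v \leq \mathrm{Sp}(X)$ by symmetrizing, which amounts to the elementary inequality $\sum_{i,j}\zeta_{ij}v_iv_j \leq \tfrac{1}{2}\sum_{i,j}\zeta_{ij}(v_i^2+v_j^2)$ — so there is nothing to add beyond the routine observation that $Q>0$ (needed for the division) follows from positive definiteness and $v\neq 0$.
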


We will now use this bound as well as the results in \Cref{app:proofs_edges} to investigate the relationship between MagEdgePool and SpreadEdgePool. Through a process of iterated edge contraction, our pooling algorithm produces a sequence of hierarchically pooled graphs (as described in \Cref{sec:mag_pooling} and \Cref{app:algo}). Note that it is not guaranteed that MagEdgePool and SpreadEdgePool yield the same sequence. For this reason, we will refer to the graphs resulting from the $k^{th}$ edge-contraction with MagEdgePool and SpreadEdgePool by $G^{(k)}$ and $\widetilde{G}^{(k)}$ respectively. 

For each $k$, the edge contraction map $G^{(k)} \rightarrow G^{(k+1)}$ is a surjection on the underlying vertex sets $X^{(k)}$ and $X^{(k+1)}$ respectively, i.e $X^{(k+1)} \subset X^{(k)}$. Moreover, we will assume that for any $k$ this map is distance-decreasing. That is, $d^{(k+1)}(f(x_i), f(x_j)) \leq d^{(k)}(x_i, x_j)$ for all $x_i,x_j \in X^{(k+1)}$.

Recall that for any $k$, $\text{Mag}(G^{(k)})$ is the magnitude of a finite positive definite metric space $(X^{(k)},d^{(k)})$. Then, by \Cref{thm:mag_monotonicity}, we deduce that for any $k$, 
\begin{equation}\label{Ineq:mag_pool_k_monotonicity}
    \text{Mag}(G^{(k+1)}) \leq \text{Mag}(G^{(k)}) 
\end{equation}

Note that by construction, scoring the edges in \Cref{alg:pool} translates into the following: 
\begin{equation}
    X^{(k)} = \text{argmin}_{Y \subset X^{(k-1)}, |Y|+1=|X^{(k-1)}|}|\text{Mag}(X^{(k-1)}) - \text{Mag}(Y)|
\end{equation}
and,
\begin{equation}
    \widetilde{X}^{(k)} = \text{argmin}_{Y \subset \widetilde{X}^{(k-1)}, |Y|+1=|\widetilde{X}^{(k-1)}|}|\text{Sp}(\widetilde{X}^{(k-1)}) - \text{Sp}(Y)|.
\end{equation}
Then, by the monotonicity of magnitude, i.e.\ \Cref{Ineq:mag_pool_k_monotonicity},
\begin{equation*}
    X^{(k)} = \text{argmax}_{Y \subset X^{(k-1)}, |Y|+1 =|X^{(k-1)}|} \text{Mag}(Y) = \text{argmax}_{Y \subset X^{(k-1)}} \text{Mag}(Y).
\end{equation*}
Let $\Delta^{(k)}\text{Mag}(G) = |\text{Mag}(G^{(k-1)}) - \text{Mag}(G^{(k)})|$ and let $\Delta^{(k)}\text{Sp}(G) = |\text{Sp}(G^{(k-1)}) - \text{Sp}(G^{(k)})|$.  
For the following result, we assume that scores are only computed once and that they are the only criterion for edge contraction. Furthermore, we will assume that spread is monotonically decreasing. 

\magspreadbound*

\begin{proof}
    For any $k$ we have the following inequality:
\begin{equation*}
    \begin{split}
        |\text{Mag}(G^{(k-1)}) - \text{Mag}(G^{(k)})| & \leq | \text{Sp}(G^{(k-1)}) - \text{Sp}(G^{(k)})|+ |\text{Mag}(G^{(k)})-\text{Sp}(G^{(k)})| \\
        & + | \text{Mag}(G^{(k-1)}) - \text{Sp}(G^{(k-1)})|. 
    \end{split}
\end{equation*}
Assume that $| \text{Mag}(G^{(k-1)}) - \text{Sp}(G^{(k)} )| \leq C \Delta^{(k)}\text{Sp}(G)$ for some constant $C>0$. By the monotonicity of magnitude~(\cref{thm:mag_monotonicity}), we get that $ \text{Mag}(G^{(k)}) \leq  \text{Mag}(G^{(k-1)})$ and, 
\begin{equation*}
    \text{Mag}(G^{(k)})-\text{Sp}(G^{(k)}) \leq \text{Mag}(G^{(k-1)})-\text{Sp}(G^{(k)}) \leq C \Delta^{(k)}\text{Sp}(G).
\end{equation*}
Similarly, assuming monotonicity of spread yields $\text{Sp}(G^{(k)}) \leq \text{Sp}(G^{(k-1)})$ and,
\begin{equation*}
    \text{Mag}(G^{(k-1)}) - \text{Sp}(G^{(k-1)}) \leq \text{Mag}(G^{(k-1)}) - \text{Sp}(G^{(k)}) \leq C \Delta^{(k)}\text{Sp}(G).
\end{equation*}
Since $ 0 \leq \text{Sp}(G^{(k-1)})-\text{Sp}(G^{(k)}) \leq \text{Mag}(G^{(k-1)}) - \text{Sp}(G^{(k)})$, the constant $C$ must be greater than or equal to 1. We conclude that
\begin{equation*}
    \begin{split}
        \Delta^{(k)}\text{Mag}(G) \leq 3C\Delta^{(k)}\text{Sp}(G). 
    \end{split}
\end{equation*}
 
\end{proof}

\subsection{Expressivity}
\label{app:expressivity}


While studying the expressive power of GNNs, we aim to evaluate their ability to generate different outputs for non-isomorphic graphs. Here, we will analyse the expressive power of our pooling methods MagEdgePool and SpreadEdgePool within the \emph{Select-Reduce-Connect} framework introduced by \citet{grattarola2022understanding} for describing pooling operators.  
Let $G=(X,E)$ be a graph 
and $\mathbf{F}\in \mathbb{R}^{|X|\times f}$ be the node features associated to the nodes in $G$. Then, a graph pooling operator is regarded as a function $\text{\textbf{POOL}}\colon (\mathbf{F},G) \mapsto (\mathbf{F}_P,G_P)$ where $\mathbf{F}_P$ denotes the pooled node features and $G_P=(X_P,E_P)$ the pooled graph with $|X_P| \leq |X|$. Pooling is described as a combination of three elementary functions: \emph{selection} (\textbf{SEL}), \emph{reduction} (\textbf{RED}), and \emph{connection} (\textbf{CON}). 
The selection function clusters the nodes of the input graph into super-nodes, so that $\text{\textbf{SEL}}\colon G\mapsto\mathcal{S}=\{\mathcal{S}_j\}_{j=1}^{|X_P|}$ where $\mathcal{S}_j=\{S_{ij}\}_{i=1}^{|X|}$ and $S_{ij}$ is the membership score of the node $i$ to super-node $j$. Node selection can be represented as the assignment matrix $S \in \mathbb{R}^{|X|\times |X_P|}$ with entries $S_{ij}$. Based on this selection, the reduction function aggregates node features of all nodes that are assigned to the same super-node, i.e. $\textbf{RED}\colon (\mathbf{F},S) \mapsto \mathbf{F}_P$. Then, the connection function, \textbf{CON}, generates the edges and determines the connectivities between super-nodes in the pooled graph. Finally, to study expressivity, we note that hierarchical graph pooling is typically applied in GNN architectures after some initial message passing layers. Let  $G^{L}$ denote the graph resulting from a block of $L$ MP layers and  $\mathbf{F}^{L} \in \mathbb{R}^{|X| \times f}$ the corresponding feature matrix \citep{bianchi2024expressive}. 

\begin{thm}(Theorem 1 from \citet{bianchi2024expressive}) \label[theorem]{thm:expressivity_Bianchi_Lachi}
  Let $G_1=(X_1,E_1)$ and $G_2=(X_2,E_2)$ be two graphs 
  equipped with node features $\mathbf{F}_i \in \mathbb{R}^{|X_i|\times f}$ for $i=1,2$. 
  Assume that $G_1 \neq_{\text{WL}} G_2$, i.e. that $G_1$ and $G_2$ are distinguishable by the Weisfeiler-Leman isomorphism test. Apply a block of L MP layers to get $G^{L}_1$ and $G^{L}_2$ as well as $\mathbf{F}^{L}_1$ and $\mathbf{F}^{L}_2$. Let \text{\textbf{POOL}} be a pooling operator placed after these MP layers to get $G_{1_P} = \text{\textbf{POOL}}(G^{L}_1)$ and $G_{2_P}=\text{\textbf{POOL}}(G^{L}_2)$ associated with the node features $\mathbf{F}_{1_P} $ and $\mathbf{F}_{2_P}$ in $\mathbb{R}^{k\times f}$. 
  Then, $G_{1_P}$ and $G_{2_P}$ will have different node features (up to permutation) 
  provided the following conditions hold:
  \begin{enumerate}[noitemsep]
      \item 
      $ \sum_{i=1}^{|X_1|} \mathbf{F}^L_{{1}_{[i,:]}} \neq \sum_{i=1}^{|X_2|} \mathbf{F}^L_{{2}_{[i,:]}}$,
      \item The memberships generated by \textbf{SEL} satisfy  $\sum_{j=1}^k S_{ij} = \lambda$, with $\lambda >0$ for each node $i$, i.e., the cluster assignment matrix S is a right stochastic matrix up to the global constant $\lambda$,
      \item The reduction function satisfies $\text{\textbf{RED}}\colon (\mathbf{F}^{L}, S) \mapsto \mathbf{F}_P = S^{T}\mathbf{F}^{L}$.
  \end{enumerate}
\end{thm}
The WL test will identify that two graphs with different multisets of node features are non-isomorphic based on the injectivity of the colouring function of the WL algorithm. \Cref{thm:expressivity_Bianchi_Lachi} then guarantees that $G_{1_P} \neq_{WL} G_{2_P}$ thus ensuring that the pooling operation \textbf{POOL} preserves  expressivity.

\begin{cor}
    MagEdgePool and SpreadEdgePool satisfy the sufficient conditions outlined in \Cref{thm:expressivity_Bianchi_Lachi} 
    %
    when using sum aggregation for pooling the node features. 
\end{cor}
\begin{proof}
    Condition 1 is independent to the choice of pooling layer and instead relates to the expressivity of MP layers. It is  guaranteed 
    for any MP
layer that is as powerful as the 1-WL test 
    \citep{
    bianchi2024expressive, feng2024graph}. Conditions 2 and 3 hold trivially by construction. 
    Each super-node is the result of edge contractions and node features are aggregated via summation. 
    Hence, each vertex is assigned to a unique super-node 
    and the selection matrix is constructed as $S_{ij} = 1$ 
     if the node $i$ is contained in super-node $j$ 
    and $S_{ij}=0$ otherwise.  This ensures that $\sum_{j=1}^k S_{ij} = 1$ for every node $i$ fulfilling condition 2. 
    The resulting \textbf{SEL} function can be represented using the \emph{cluster assignment matrix} $S$ obtained as a product of these elementary (contraction) operations represented by matrices. 
    This yields the \textbf{RED} function 
    described in our algorithm in \Cref{alg:pool} as a map $\mathbf{F}^{L} \mapsto S^T \mathbf{F}^{L}$ that respects condition 3.   
\end{proof}


\newpage
\section{Extended Methods}

\subsection{Hardware and Software}
\label{app:ware}

The experiments reported in our study were implemented using \texttt{spektral 1.3.1} \citep{grattarola2021graph}\footnote{\url{https://graphneural.network/} available under an MIT license.}, and \texttt{tensorflow 2.16.2} \citep{tensorflow2015-whitepaper}\footnote{\url{https://pypi.org/project/tensorflow/2.16.2/} available under the Apache Software License (Apache 2.0).}. As further detailed in \Cref{app:classification_details} and \Cref{sec:classification}, we base our graph classification experiments on the benchmark setup and code by \citet{grattarola2022understanding}\footnote{\url{https://github.com/danielegrattarola/SRC} available to the research community (\citet{grattarola2022understanding}).}, which also include implementations for the pooling layers compared across our study. By relying on this existing framework, we aim to ensure the reproducibility of our results. 

Further, to calculate magnitude and spread we rely on \texttt{magnipy}, a Python package by \citet{limbeck2024metric} for magnitude and diversity computations.\footnote{\url{https://github.com/aidos-lab/magnipy} available under a BSD 3-Clause License.}
Further, as a novel contribution of this paper, we extend the computation of magnitude to graph data and novel graph metrics. Specifically, we modify the computations, so that the magnitude of disconnected subgraphs is computed separately (based on \Cref{thm:disjoint}) using the \texttt{NetworkX}\footnote{\url{https://github.com/networkx/networkx} available under a BSD 3-Clause License.} package. We also implement graph distances that have not previously been used to compute magnitude, such as the diffusion distances detailed in \Cref{sec:diffusion}. 
Further details on the code for implementing our proposed pooling methods and our experiments can be found in our supplementary code submission as well as on GitHub. Finally, we publish a reproducible \texttt{PyTorch} implementation of our pooling methods as \texttt{mag\_edge\_pool}\footnote{\url{https://github.com/aidos-lab/mag\_edge\_pool} available under a BSD 3-Clause License.} on GitHub.

All experiments were conducted on 
a high-performance cluster with  hardware specifications as detailed in \Cref{tab:compute}. In particular, all experiment were run requesting a single GPU with 32~GB video memory or less.

\begin{table}[h]
\caption{Summary of the compute resources used for our experiments.} \label{tab:compute}
\centering
\begin{tabular}{ll}
\toprule
\textbf{Inventory} & \textbf{Models}                                                                           \\
\midrule
Available CPUs     & Intel Xeon (Gold 6128, 6130, 6134, 6136, 6142, 6240, 6248R)                                \\
                   & Intel Xeon Platinum (8280L, 8480+, 8468, 8562Y+)                                          \\
                   & Intel Xeon (E7-4850, E5620, 4114, 6126)                                                   \\
                   & AMD EPYC (7262, 7413, 7513, 7713, 7742)                                                   \\
                   & AMD Opteron (6128, 6164 HE, 6234, 6272, 6376 x2)                                          \\
\midrule
Available GPUs     & NVIDIA Tesla (K80, P100, V100)                                                            \\
                   & NVIDIA A100 (20GB, 40GB, 80GB PCIe)                                                       \\
                   & NVIDIA H100 (80GB PCIe)                                                                   \\
                   & NVIDIA Quadro RTX 8000                                                                    \\
\bottomrule
\end{tabular}
\end{table}

\subsection{Datasets}
\label{app:datasets}

We briefly describe the graph datasets analysed throughout our work. 
Simulated graphs, as used for \Cref{fig:overview_graphs}, are created using either \texttt{PyGSP} \footnote{\url{https://pygsp.readthedocs.io/en/stable/} available under a BSD-3-Clause license.} 
or \texttt{NetworkX}\footnote{\url{https://github.com/networkx/networkx} available under a BSD 3-Clause License.} and all example graphs were created to consist of 64 nodes.


For our main graph classification experiments, we analyse six graph datasets taken from biological or chemical applications \citep{sutherland2003spline, borgwardt2005protein, schomburg2004brenda, dobson2003distinguishing, shervashidze2011weisfeiler}, and two datasets which represent social networks \citep{yanardag2015deep}. All datasets are taken either from the \texttt{TUDataset}\footnote{\url{https://chrsmrrs.github.io/datasets/} available under a CC-BY-4.0 license.} benchmark \citep{morris2020tudataset} or the Open Graph Benchmark\footnote{\url{https://ogb.stanford.edu/} available under an MIT licence.}. 

More specifically, the results in \Cref{classification}, \Cref{regression-table}, and \Cref{extended-class} analyse the following graph  datasets described in \Cref{tab:data}. Note that we only consider node and not edge features for our experiments.

\begin{table}[ht]
\caption{Summary of the graph datasets considered for our experiments.}\label{tab:data}
\resizebox{1\columnwidth}{!}{
\begin{tabular}{lllllll}
\toprule
\textbf{dataset}        & \textbf{library} & \textbf{\# classes} & \multicolumn{1}{l}{\textbf{\# node features}} & \textbf{\#  graphs} & \textbf{avg \# nodes} & \textbf{avg \# edges} \\
\midrule
\textbf{MUTAG}          & TUDataset        & 2                  & no                                         & 187               & 18                   & 40                   \\
\textbf{Enzymes}        & TUDataset        & 6                  & 18                                         & 600               & 33                   & 62                   \\
\textbf{COX2}           & TUDataset        & 2                  & 3                                          & 467               & 41                   & 44                   \\
\textbf{DHFR}           & TUDataset        & 2                  & 3                                          & 756               & 42                   & 45                   \\
\textbf{IMDB-B}         & TUDataset        & 2                  & no                                         & 1000              & 20                   & 97                   \\
\textbf{IMDB-M}         & TUDataset        & 3                  & no                                         & 1500              & 13                   & 65                   \\
\textbf{AIDS}           & TUDataset        & 2                  & 4                                          & 2000              & 15                   & 16                   \\
\textbf{Proteins} & TUDataset        & 2                  & 29                                         & 1113              & 39                   & 72                   \\
\textbf{Mutagenicity}   & TUDataset        & 2                  & no                                         & 4337              & 30                   & 31                   \\
\textbf{NCI1}           & TUDataset        & 2                  & no                                         & 4110              & 30                   & 32                   \\
\textbf{NCI109}         & TUDataset        & 2                  & no                                         & 4127              & 30                   & 32                   \\
\textbf{OGBG-MOLHIV}    & OGB              & 2                  & 9                                          & 41127             & 25                   & 27                   \\
\textbf{BZR}            & TUDataset        & 2                  & 3                                          & 405               & 36                   & 38                   \\
\textbf{BZR\_MD}        & TUDataset        & 2                  & no                                         & 306               & 21                   & 225                  \\
\textbf{COX2\_MD}       & TUDataset        & 2                  & no                                         & 303               & 26                   & 335                  \\
\textbf{DHFR\_MD}       & TUDataset        & 2                  & no                                         & 393               & 24                   & 283                  \\
\textbf{ER\_MD}         & TUDataset        & 2                  & no                                         & 446               & 21                   & 235 \\
\textbf{OGBG-MOlESOL}    & OGB              & regression                  & 9                                          & 1128             & 13                   & 14                   \\
\textbf{OGBG-MOlFREESOLV}    & OGB              & regression                  & 9                                          & 642             & 9                   & 8                   \\
\textbf{OGBG-MOLLIPO}    & OGB              & regression                  & 9                                          & 4200             & 27                   & 30                   \\
\midrule
\end{tabular}}
\end{table}

\subsection{Magnitude and Spread Computations}

Across our experiments, we compute magnitude and spread as outlined in the main text, implemented in our code submission, and further described in \Cref{app:ware}. Elaborating on these descriptions, we now aim to give an extended explanation of practical and theoretical considerations for computing the magnitude of graphs in practice.

\paragraph{Defining the magnitude of a graph.} In mathematical literature, the magnitude of graphs is often studied with the shortest path metric  \citep{leinster2019magnitude}. 
However, shortest path distances are not guaranteed to be of negative type, thus leading to scenarios and well-known examples for which the similarity matrix is not invertible and magnitude based on this metric cannot be computed \citep{leinster2013magnitude}. In comparison, resistance distances, diffusion distances, or Euclidean distance always permit the computation of magnitude. Because of this difference in the choice of distance metric, we note that our definition of the magnitude of a graph in \Cref{sec:mag_spread} differs from the definition used by \citet{leinster2019magnitude}. While we choose to investigate diffusion distances, we note that the distance metric can easily be replaced if needed to explore alternative geometries.

\paragraph{Diffusion geometry. } For further research, we believe that the usage of diffusion distances offers the chance to leverage a rich theory on approximation methods via landmarks \citep{long2019landmark}, or localised diffusion computations \citep{david2012hierarchical}, which can lead to further computational improvements and extension of our methods. 

\paragraph{Magnitude and spread as multi-scale functions.} Note that magnitude and spread can also be defined as multi-scale functions i.e.\ \(t \mapsto \text{Mag}((X,t\cdot d))\) for a metric space \((X, d)\) and a scale parameter \(t \in \mathbb{R}^{+}\). This parameter \(t\) can be likened to choosing a kernel bandwidth or the scale of distances or similarity determining when observations are considered to be distinct. In practical applications, it is advisable to carefully consider the choice of scaling factor \(t\) or the type of normalisation used to compare distances \citep{limbeck2024metric}. \citet{limbeck2024metric} propose a heuristic that 
uses root-finding to find a suitably large \(t\). However, this requires repeated computations of magnitude and  increases the computational costs. A faster and more desirable default choice of \(t\) would be based solely on the distance metric. 
For distances that are not otherwise scaled or normalised, we therefore recommend the usage of faster heuristics, such as the median heuristic for choosing the kernel-bandwidth i.e. the scale parameter \(t\)
\citep{garreau2017large}.  For diffusion distances, we find that setting \(t=1\) is sufficient for our goals. This is because, as discussed in \Cref{sec:diffusion}, diffusion distances are computed from the normalised graph Laplacians and are inherently comparable across graphs. 
Nevertheless, investigating magnitude and spread as multi-scale functions on graphs remains an interesting extension for further work. 


\paragraph{Magnitude and spread as diversity measures.} We extensively discuss the relationship between magnitude and spread throughout our work. However, our main paper does not have the space to fully explain the theoretical motivations behind the formulations of magnitude, spread, and other generalised measures of diversity. 
For a more complete discussion 
we refer the interested reader to \citet{leinster2021entropy}, an extensive reference work on the mathematical motivation behind entropy and diversity. Furthermore, \citet{willerton2015spread} specifically discusses the spread of a metric space, and 
\citet{limbeck2024metric} describe the practical usage of magnitude as a diversity measure in ML. These works also give descriptions on how and why the magnitude or spread of a metric space can be interpreted as an effective size i.e. as the effective number of distinct points in a metric space or the number of dissimilar nodes in a graph.


\subsection{Pooling Algorithm}
\label{app:algo}
We now detail our pooling algorithm introduced in \Cref{sec:mag_pooling} by describing a pseudocode implementation. Note that to describe the algorithm we assume we have pre-selected a distance metric for computing either magnitude or spread. 

\begin{algorithm}[ht]
\caption{Graph Pooling Methods: SpreadEdgePool and MagEdgePool}
\begin{algorithmic}[1]
\REQUIRE input graph \( G = (X, E) \), node features \( \mathbf{F} \in \mathbb{R}^{|X| \times f} \), pooling ratio \( r \in (0, 1] \), diversity measure \( \text{Mag}(G) \text{ or } \text{Sp}(G) \)
\ENSURE Pooled graph \( G' =(X', E')\), pooled features \( \mathbf{F}' \)
\STATE Initialise the super-node set \( \mathcal{S}(x) \gets x \) for all \( x \in X \)
\STATE Initialise the set of edges adjacent to a contracted edge \( E_c \gets \emptyset \)
\STATE Initialise the pooled graph \(G' \gets G\)
\STATE Compute initial edge scores:
\[
s(e) = \left| \text{Mag}(G) - \text{Mag}(G / e) \right| \quad \forall e \in E
\]
\WHILE{\(|X'| \neq \lfloor r|X|\rceil \) AND \(|E'| \neq \emptyset\)}
    \STATE Select edge \( e = (x,y) = \arg\min_{e \in E \setminus E_c} s(e) \)
    \IF{\( e \) is not adjacent to any previously contracted edge in \( E_c \)}
        \STATE Contract edge \( e \), update \( G' \leftarrow G' / e \)
        \STATE Add \( e \) and any edges adjacent to \( e \) to \( E_c \)
        \STATE Update the node selection: merge \( \mathcal{S}(x) \) and \( \mathcal{S}(y) \)
    \ENDIF
    \IF{no more valid edges AND pooling ratio not reached}
        \STATE Recompute the edge scores \(s(e)\) on the updated graph \(G'\)
        \STATE Reset \( E_c \gets \emptyset \)
    \ENDIF
\ENDWHILE
\STATE Initialize \( \mathbf{F}' \gets \emptyset \)
\FOR{each super-node representative \( w \in \mathcal{S} \)}
    \STATE Let \( S_w = \{x \in X \mid \mathcal{S}(x) = w \} \)
    \STATE Compute the aggregated features:
    \[
    \mathbf{F}'_{w,:} = \frac{1}{|S_w|} \sum_{x \in S_w} \mathbf{F}_{x,:}
    \]
    \STATE Append \( \mathbf{F}'_w \) to \( \mathbf{F}' \)
\ENDFOR
\RETURN Pooled graph \( G' \), pooled features \( \mathbf{F}' \)
\end{algorithmic}
\label{alg:pool}
\end{algorithm}


\newpage

\subsection{Extended Experimental Details}

We briefly describe extended details on the experimental setup used for our main experiments. 

\subsubsection{Overview Experiment}
\label{app:overview_experiment}

To create the illustration in \Cref{fig:overview_graphs} and visually compare the outputs of different pooling methods, we follow the experimental setup by \citet{grattarola2022understanding} on understanding structure preservation in graph pooling layers. Specifically, we  simulate a ring graph with 64 nodes, a barbell graph with 20 nodes on each side connected by 24 nodes in the middle, and a sensor graph with 64 nodes. Graphs are then pooled to a pooling ratio of approximately $50\%$. Because some pooling methods (e.g. Graclus or NDP) do not give exact control over the number of vertices, but pool graphs to approximately half of their original size, the number of nodes visualised in \Cref{fig:overview_graphs} can vary across methods. 
All trainable pooling layers were then trained in a self-supervised manner to optimise the following spectral loss between the original graph \(G\) and the pooled graph \(G'\):
\begin{equation}
    \mathcal{L}(G, G') = \sum_{i=0}^{f} F^\top_{:,i} L F_{:,i} - F'^\top_{:,i} L' F'_{:,i}
\end{equation}
where \(L, L'\) are the corresponding Laplacian matrices. The features \(F\) are taken to be the top 10 eigenvectors of \(L\) concatenated with the coordinates of the nodes in \(G\). \(F'\) is the reduced version of \(F\) after pooling. 
Note that this type of spectral loss is one particular proposal on structure preservation and alternative objectives could be investigated. 

\subsubsection{Structure Preservation Experiment and Pooling Ratios}

For reporting the structure preservation results described in \Cref{sec:structure}, we considered multiple different proposals for what it means to preserve graph structure during pooling. In the end, we settled to compare the spectral distance between the symmetrically normalised graph Laplacians as an established measure of spectral property preservation, and investigated the relative difference in magnitude between the original graph \(G\) and the pooled graph \(G'\) computed from diffusion distances. Specifically, the reported relative magnitude difference is calculated as \begin{equation}
    \text{MagDiff}(G, G') = \frac{|\text{Mag}(G) - \text{Mag}(G')|}{\text{Mag(G)}}.
\end{equation}

For this experiment we further vary the pooling ratios across different pooling methods. However, some of the pooling layers considered in our study (NDP, Graclus and NMF) were configured to always pool graphs to around half their size. To allow us to compare these methods across increasing pooling ratios, we choose to reapply these pooling operations repeatedly, which is why these three pooling methods are evaluated at pooling ratios that are powers of 0.5.

\subsubsection{Graph Classification Experiment}
\label{app:classification_details}

Our graph classification architecture follows the experimental setup described in \Cref{sec:classification} and is based the benchmark by  \citet{grattarola2022understanding}. 
Across our main classification experiment detailed in \Cref{sec:classification}, different pooling layers are configured to reduce each input graph to around 50\% of the number of nodes in the original graphs. 
Depending on the pooling method, this is chosen so each graph is reduced to 50\% of its original size \(k=\lfloor0.5*N\rceil\), (for NDP, Graclus, MagEdgePool, SpreadEdgePool, TopKPool, and SAGPool), or to 
50\% the average size of all graphs in the training dataset \(k=\lfloor0.5*\bar{N}\rfloor\) (for DiffPool, and MinCutPool). Interpreting the experimental results in \Cref{classification} it is thus of interest that the sizes of the pooled graphs can vary across pooling layers, which might explain some of the difference in performance between the fixed-size methods DiffPool and MinCUT compared to more adaptive pooling methods.


\subsubsection{Graph Regression}
\label{app:reg}
As a further ablation study, we aim to assess whether the results reported in \Cref{classification} for graph classification tasks remain consistent for further graph regression tasks. To this end, we use three molecular datasets from the OGB benchmark with their predefined test, training, and validation splits~\citep{hu2020open}. Further, we adjust the GNN architecture described in \Cref{sec:classification} and \Cref{app:classification_details} to use the MSE as a training loss, the RMSE for performance evaluation, a linear final activation for the readout MLP, an early stopping patience of 100 epochs, 
and blocks of two convolutional layers instead of single layers. Similar to before, we only use node features as inputs. 
Note that stronger, domain-specific, and purpose-built models exist for molecular regression tasks that utilise both atom and bond information \citep{hu2020open}. Hence, our goal is not to reach state of the art performance. Rather, we aim to compare the performance of different pooling layers and evaluate the information loss due to the pooling operations themselves. Finally, \Cref{regression-table} reports the RMSE on the test dataset across ten repeats using varying random seeds. 


\section{Extended Results}

Finally, we summarise extended experimental results beyond the scope of our main paper.

\subsection{Correlation between Magnitude and Spread}
\label{app:mag_spread_corr}

As stated in \Cref{sec:mag_spread}, the magnitude and spread of a metric space are closely related with magnitude giving an upper bound for spread when computed from the same positive definite metric space. Across our experiments on real graph datasets, we further find that this bound in practice can be very tight and magnitude and spread measure very related notions of effective size. More specifically, when computing both magnitude and spread from the diffusion distances detailed in \Cref{sec:diffusion}, we observe that magnitude and spread almost coincide for all graphs from the NCI1, ENZYME or IMDB-Multi datasets as illustrated in \Cref{fig:corr}. In fact, magnitude and spread correlate almost perfectly across these three graph datasets  (Pearson correlation \(r^2 \geq 0.99\)). Further, we confirm that across these examples, magnitude is generally greater or equal to spread by a relatively low multiplicative factor close to 1. We therefore find empirical evidence for the fact that spread offers a valid and highly related alternative to magnitude in practice supporting our theoretical analysis of the relationship between spread and magnitude during pooling (\Cref{app:proofs_mag}) as well as our observations on the similar performance of MagEdgePool and SpreadEdgePool. 

\begin{figure}[ht]
  \centering
  \includegraphics[trim={0 0 0cm 0},clip, width=0.3\textwidth]{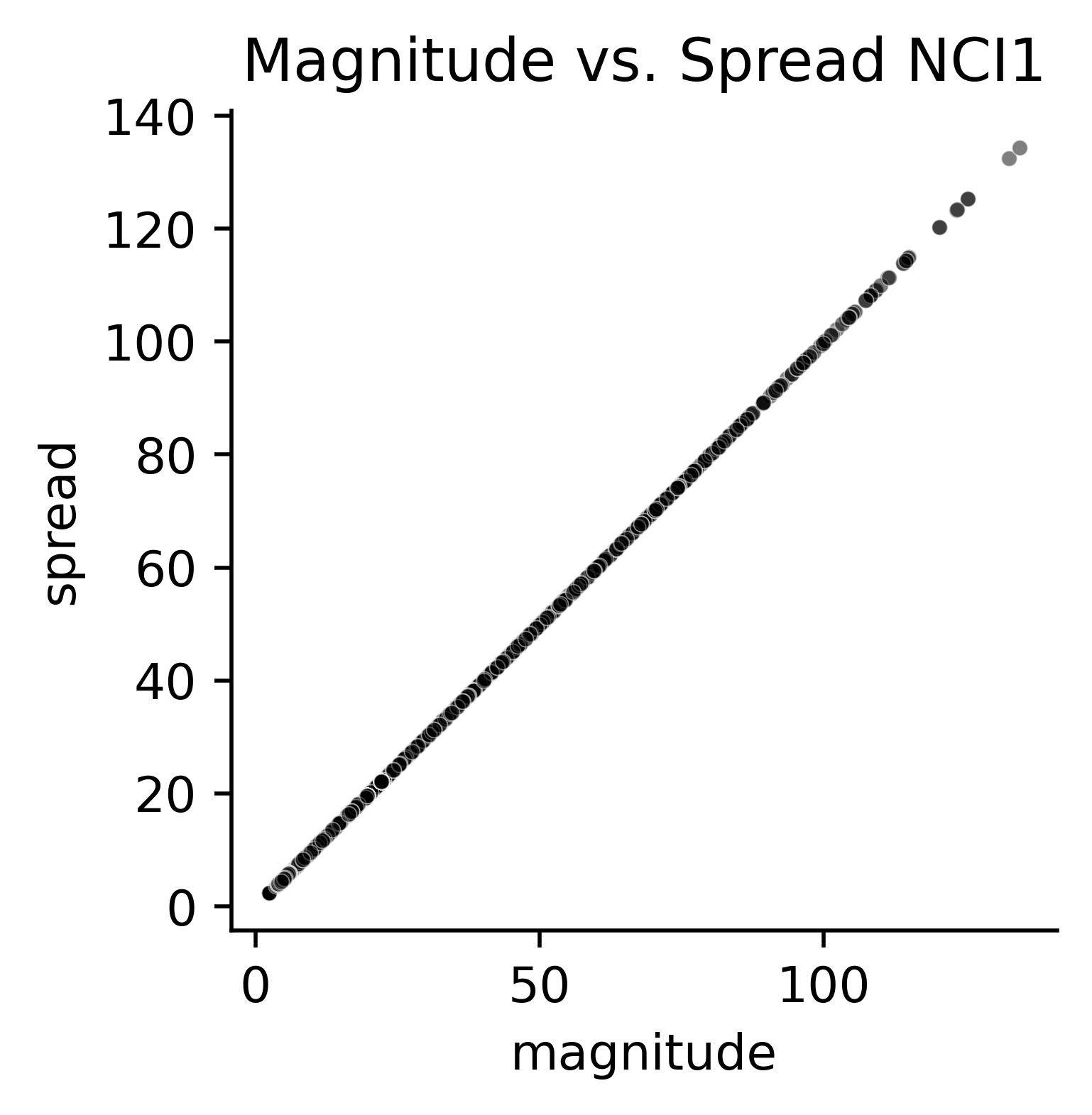}
  \includegraphics[trim={0 0 0cm 0},clip, width=0.315\textwidth]{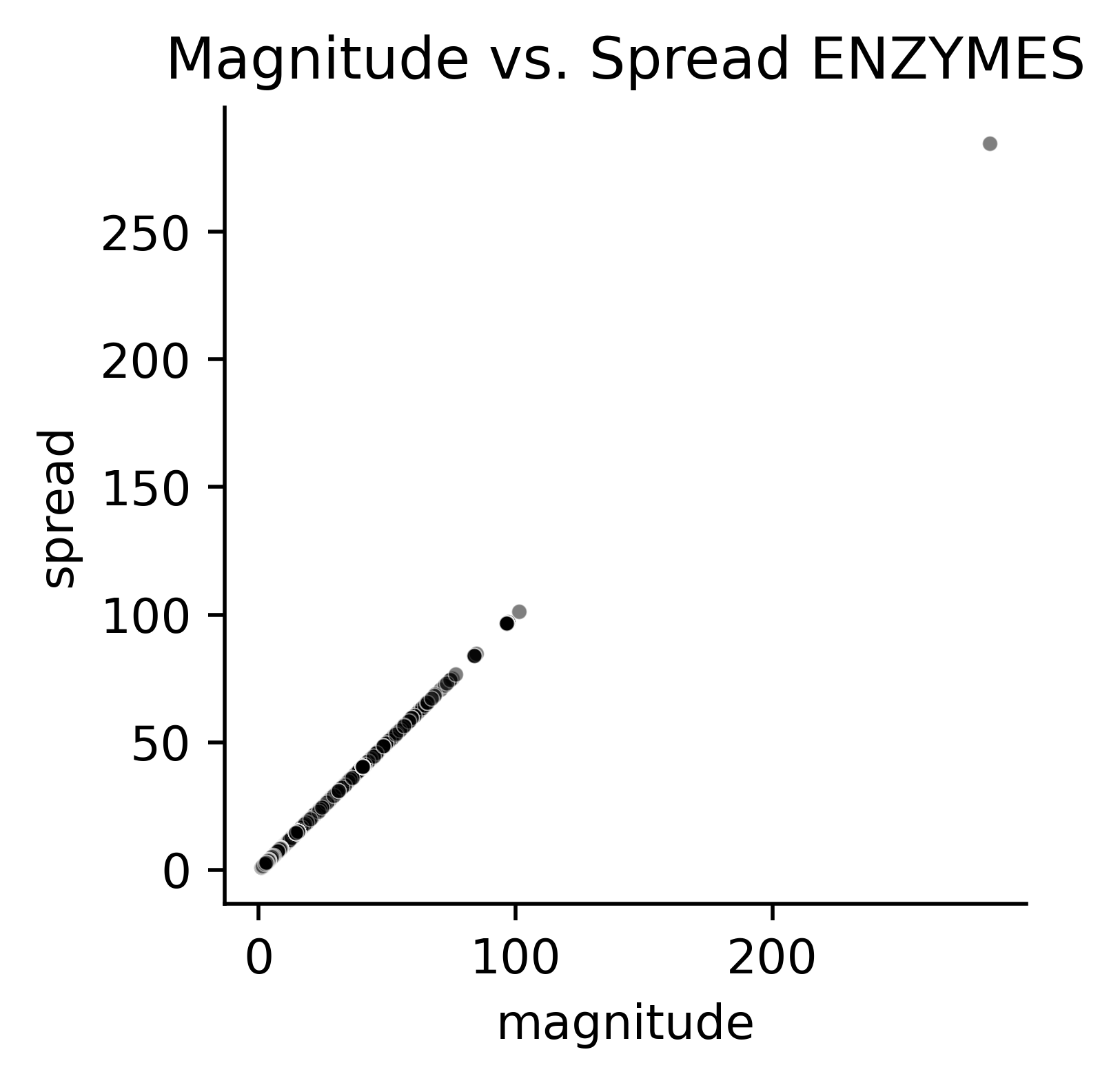}
  \includegraphics[trim={0 0 0cm 0},clip, width=0.325\textwidth]{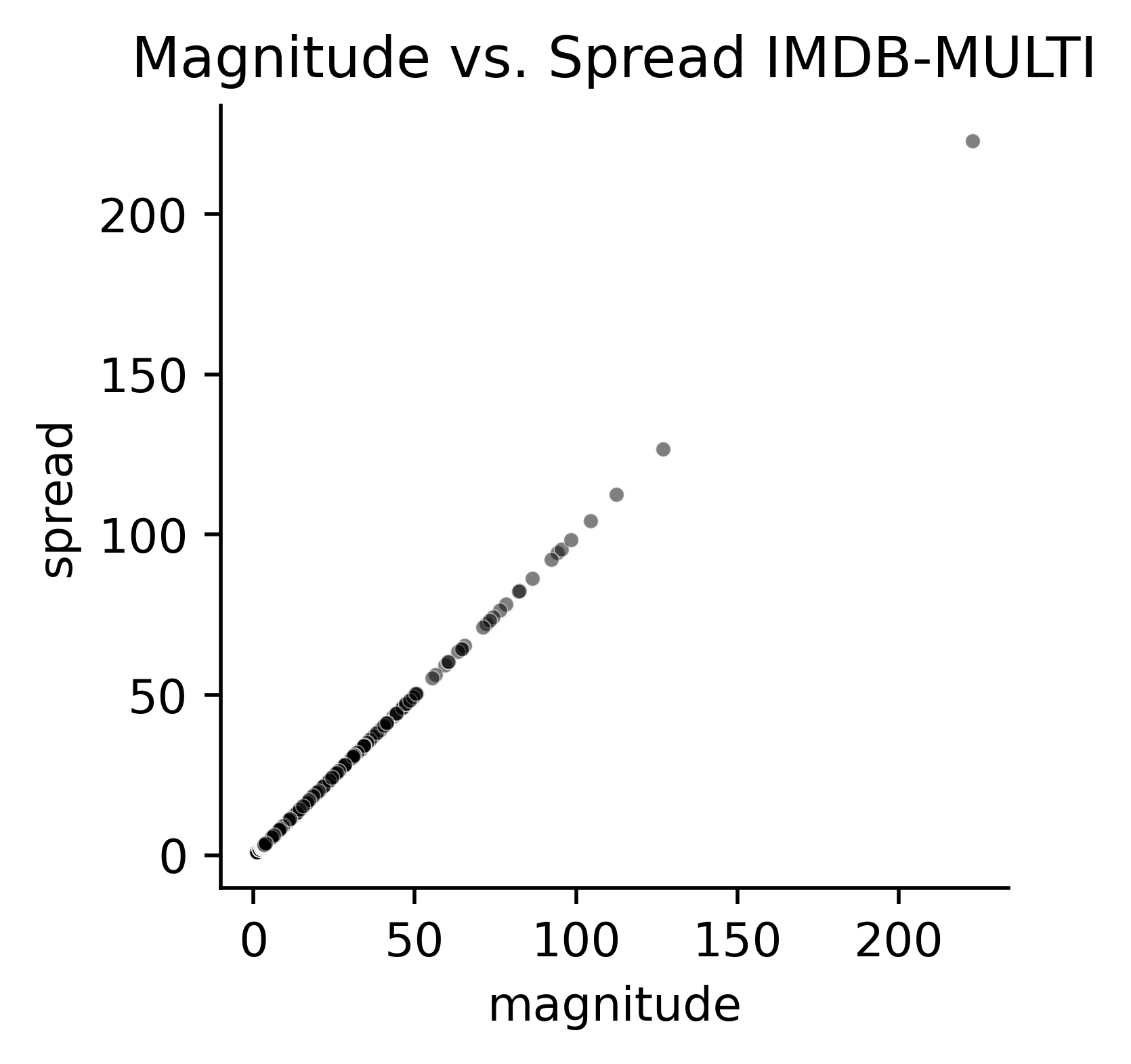}
  \includegraphics[trim={0 0 0cm 0},clip, width=0.3\textwidth]{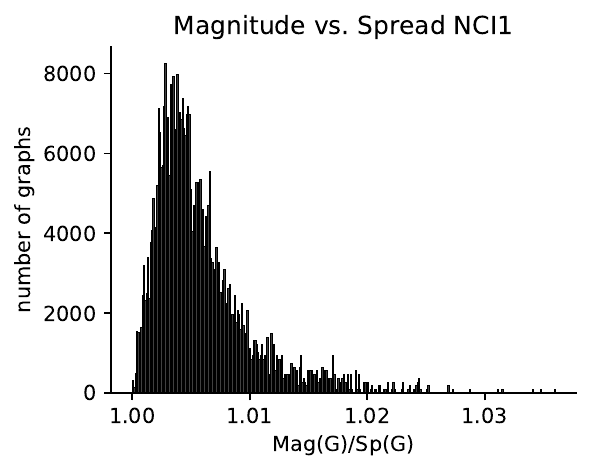}
\includegraphics[trim={0 0 0cm 0},clip, width=0.3\textwidth]{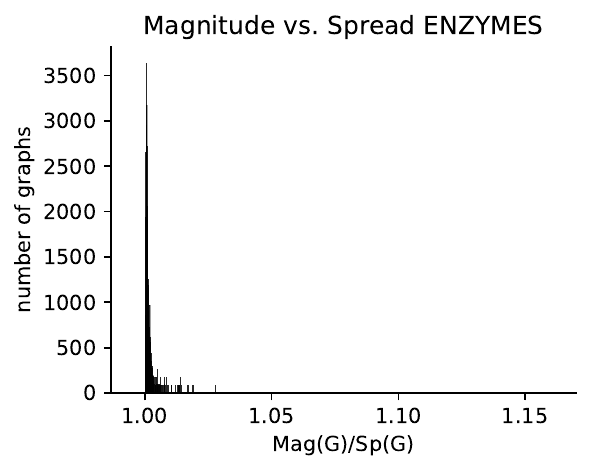}
  \includegraphics[trim={0 0 0cm 0},clip, width=0.3\textwidth]{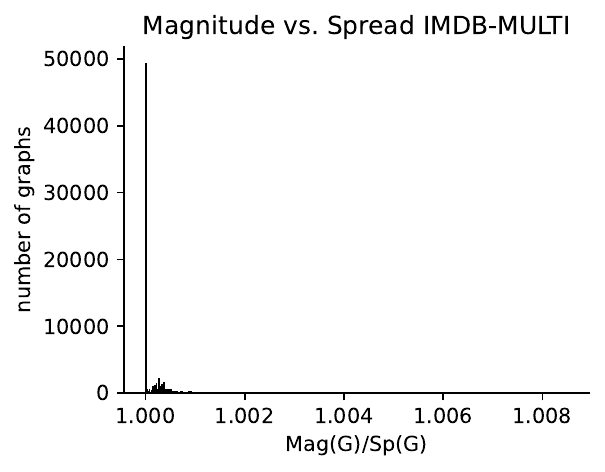}
  \caption{Comparison between magnitude and spread computed from diffusion distances for all graphs in three graph datasets, NCI1, ENZYMES and IMDB-Multi. }\label{fig:corr}
\end{figure}
\vspace{20pt}

\subsection{Evaluating Computational Efficiency}
\label{app:empirical_efficiency}

As detailed in \Cref{app:proofs_complexity}, the computational costs of our algorithm are determined by the costs of computing the edge scores used for pooling. To illustrate how this theoretical discussion translates into practice, we now investigate computational costs empirically in comparison to alternative pooling methods considered throughout this study. 

\subsubsection{Training Costs}
We first compare the runtime (in seconds) and memory usage (in MB per cross-validation run) of our pooling methods (MagEdgePool and SpreadEdgePool) to trainable pooling methods 
in \Cref{tab:training_time}. 
Specifically, we train the GNN architecture specified in \Cref{app:classification_details} and \Cref{sec:classification} using GIN layers across 200 epochs using 10-fold stratified cross-validation. We compare our edge pooling methods (MagEdgePool, SpreadEdgePool) to trainable pooling methods from \texttt{torch\_geometric}\footnote{\url{https://pytorch-geometric.readthedocs.io/en/latest/} available under an MIT license~\citep{Fey2019, Fey2025}.}
(EdgePool, TopKPool, SAGPool) or from \texttt{torch-geometric-pool}\footnote{\url{https://github.com/tgp-team/torch-geometric-pool} available under an MIT license.} (DiffPool, MinCutPool). We record the mean and standard deviation of the runtimes in seconds in \Cref{tab:training_time} and GPU memory usage in MB per cross-validation run in \Cref{tab:memory}.
We note, in particular, that our proposed edge pooling methods, MagEdgePool and SpreadEdgePool, allow for significantly more efficient GNN training than EdgePool as highlighted in \Cref{tab:training_time}. Similarly, our methods generally improve on the runtimes for dense pooling methods such as DiffPool and MinCutPool. To generalise this runtime comparison to other datasets sizes and experimental setups, we note that our algorithm will scale with dataset size as described in \Cref{app:proofs_complexity}.

\subsubsection{Pre-training Costs }
Having observed that pre-computed edge-pooling speeds up GNN training times, we further investigate the computational costs of this preprocessing step by comparing the runtime and memory costs of our methods against other non-trainable pooling operators (NDP, NMF and Graclus) prior to training. \Cref{tab:runtime_non_trainable} and \Cref{tab:memory_non_trainable} report the computational costs of computing the pooling assignment for all graphs in the datasets. 
We observe that SpreadEdgePool is generally more efficient than MagEdgePool. Beyond exact computations of our pooling methods (described in \Cref{app:algo}), \Cref{tab:runtime_non_trainable} and \Cref{tab:memory_non_trainable} also present approximate versions that reduce the cost of distance computations. 
Specifically, these approximate versions, referred to as  \textit{MagEdgePool*} and \textit{SpreadEdgePool*}, use 
the minimum distance to the original nodes to update the (diffusion) distances during edge contraction. This leads to a considerable improvement in runtime and memory usage during pre-training. The results reported in this section highlight one of the limitations of our edge pooling approach, namely, it scales in the number of edges as well as in the number of nodes as detailed in \Cref{app:proofs_complexity}. 
Nevertheless, our proposed pooling methods still outperform EdgePool in terms of computational efficiency when considering both pre-processing and training costs.

Furthermore, the pre-training costs of our method, range in a number of seconds, need to be computed only once per dataset and the memory requirements remain below what is required by GNN training. 
Hence, while the scalability to very large graphs is a limitation, we find that our proposed pooling methods scale sufficiently well to standard graph datasets. In practice, based on the computational complexity (\Cref{app:proofs_complexity}), we  recommend that our pooling method is particularly suitable for small to medium graphs that show a certain degree of sparsity rather than being fully connected. As visualised in \Cref{fig:graph_size}, graphs with up to a few hundred nodes can feasibly be processed with our method in a matter of seconds. For future work, we believe that there is a strong potential for adapting our methods to scale on large graphs. For instance, edge score calculations could be parallised,  sampling heuristics could restrict the edge score computations to a subset of candidate edges, or edge scores could be estimated from local subgraphs to improve the computations. 

\begin{table}[h!]
    \caption{Training times in seconds compared across pooling methods. The fastest methods are marked in bold.}
    \label{tab:training_time}
    \centering
    \resizebox{1\linewidth}{!}{
    \begin{tabular}{lrrrrrr}
        \toprule
         \textbf{Method} & 
         \multicolumn{1}{c}{\textbf{DHFR}} &
        \multicolumn{1}{c}{\textbf{ENZYMES}} &
        \multicolumn{1}{c}{\textbf{NCI109}} &
        \multicolumn{1}{c}{\textbf{Mutagenicity}} &
        \multicolumn{1}{c}{\textbf{IMDB-BINARY}} &
        \multicolumn{1}{c}{\textbf{IMDB-MULTI}} \\
         \midrule
         \textbf{MagEdge} &39.8 ± 6.0 & \textbf{22.0 ± 0.2}& 186.0 ± 9.1 & \textbf{180.3 ± 29.0} & 75.4 ± 1.4 & \textbf{89.6 ± 6.1} \\
         \textbf{SpreadEdge } & \textbf{ 34.8 ± 1.0} & 23.0 ± 0.7 & \textbf{181.3 ± 29.4}& 206.7 ± 20.6& \textbf{62.7 ± 0.9}& 99.7 ± 4.3\\
         \midrule
         \textbf{EdgePool} & 184.1 ± 4.0 & 209.3 ± 4.1 & 807.4 ± 22.2 & 790.7 ± 22.1 & 362.3 ± 3.4 & 392.7 ± 11.9 \\
         \textbf{TopKPool} & 50.7 ± 0.8 & 24.1 ± 0.6 & 226.1 ± 21.3 & 243.8 ± 15.4 & 88.2 ± 1.1 & 110.9 ± 2.2\\
         \textbf{SAGPool} & 54.4 ± 1.3 & 25.7 ± 0.4 & 250.4 ± 25.7 & 253.0 ± 20.0 & 92.6 ± 2.1 &131.1 ± 1.3\\ \textbf{DiffPool}& 52.2 ± 6.8 & 36.4 ± 3.3 & 242.4 ± 12.5 & 274.0 ± 21.5 & 106.7 ± 5.5 & \textbf{74.9 ± 2.0}\\
         \textbf{MinCut}& \textbf{33.8 ± 2.5} & 28.2 ± 1.7 & 210.3 ± 34.4 & 201.0 ± 3.7 & 93.2 ± 1.6 & 132.8 ± 3.2\\
         \bottomrule
    \end{tabular}}
\end{table}

\begin{table}[h!]
\caption{Memory usage in MB compared across pooling methods. The most efficient methods are marked in bold.}
\label{tab:memory}
\centering
\resizebox{1\linewidth}{!}{
\begin{tabular}{lrrrrrr}
\toprule
    \textbf{Method} & 
    \multicolumn{1}{c}{\textbf{DHFR}} &
    \multicolumn{1}{c}{\textbf{ENZYMES}} &
    \multicolumn{1}{c}{\textbf{NCI109}} &
    \multicolumn{1}{c}{\textbf{Mutagenicity}} &
    \multicolumn{1}{c}{\textbf{IMDB-BINARY}} &
    \multicolumn{1}{c}{\textbf{IMDB-MULTI}} \\
\midrule
\textbf{MagEdge} & \textbf{84.0 ± 0.1} & \textbf{91.2 ± 2.5} & \textbf{97.0 ± 1.1} & \textbf{94.8 ± 1.0} & \textbf{283.4 ± 28.0} & \textbf{197.6 ± 17.8} \\ 
\textbf{SpreadEdge} & \textbf{84.0 ± 0.1} & \textbf{90.6 ± 2.3} & \textbf{96.8 ± 1.0} & \textbf{95.2 ± 1.0} & \textbf{283.4 ± 28.0} & \textbf{197.4 ± 17.8} \\ 
\midrule
\textbf{EdgePool} & 125.6 ± 10.4 & 148.4 ± 15.0 & 118.6 ± 7.7 & 118.0 ± 7.4 & 666.6 ± 106.6 & 526.2 ± 89.2 \\ 
\textbf{TopKPool} & 89.6 ± 9.0 & 94.4 ± 4.0 & 105.0 ± 1.4 & 105.8 ± 6.5 & \textbf{259.4 ± 31.5} & \textbf{193.8 ± 17.5} \\ 
\textbf{SAGPool} & 104.2 ± 0.6 & 94.2 ± 3.8 & 105.2 ± 1.4 & 107.8 ± 8.1 & \textbf{273.4 ± 30.1} & \textbf{209.0 ± 17.9} \\ 
\textbf{DiffPool} & 90.2 ± 10.2 & 94.6 ± 2.5 & 103.8 ± 1.1 & 299.8 ± 31.9 & \textbf{258.0 ± 40.3} & 240.8 ± 26.4 \\ 
\textbf{MinCut} & 90.6 ± 10.3 & 94.2 ± 2.4 & 103.6 ± 1.0 & 288.4 ± 38.3 & \textbf{258.0 ± 40.3} & \textbf{196.6 ± 26.0} \\ 
\bottomrule
\end{tabular}}
\end{table}

\begin{table}[h!]
\centering
\caption{Pre-training times in seconds compared across non-trainable pooling methods. The fastest method is marked in bold. The fastest approximation of our pooling method is marked in italics.}
\label{tab:runtime_non_trainable}
\resizebox{1\linewidth}{!}{
\begin{tabular}{lrrrrrr}
\toprule
    \textbf{Method} & 
    \multicolumn{1}{c}{\textbf{DHFR}} &
    \multicolumn{1}{c}{\textbf{ENZYMES}} &
    \multicolumn{1}{c}{\textbf{NCI109}} &
    \multicolumn{1}{c}{\textbf{Mutagenicity}} &
    \multicolumn{1}{c}{\textbf{IMDB-BINARY}} &
    \multicolumn{1}{c}{\textbf{IMDB-MULTI}} \\
\midrule
\textbf{MagEdge} & 61.9 & 197.4 & 4765.1 & 678.8 & 801.8 & 480.2 \\ 
\textit{\textbf{MagEdge*}} & 24.6 & 21.0 & 81.3 & 74.9 & 77.4 & 69.9 \\ 
\textbf{SpreadEdge} & 66.7 & 68.3 & 316.1 & 208.8 & 287.2 & 243.8 \\ 
\textit{\textbf{SpreadEdge*}} & \textit{10.9} & \textit{16.0} & \textit{52.3} & \textit{55.6} & \textit{56.0} & \textit{66.0} \\
\midrule
\textbf{NDP} & 5.6 & 4.1 & \textbf{37.1} & 32.6 & 5.8 & 7.0 \\ 
\textbf{NMF} & 7.8 & 10.8 & 39.0 & 32.1 & 8.7 & 9.0 \\ 
\textbf{Graclus} & \textbf{3.8} & \textbf{2.9} & 54.5 & \textbf{18.2} & \textbf{4.5} & \textbf{6.0} \\
\bottomrule
\end{tabular}}
\end{table}

\begin{table}[h!]
\centering
\caption{Pre-training memory usage in MB compared across non-trainable pooling methods. The most efficient method is marked in bold. The most efficient approximation of our pooling method is marked in italics.}
\label{tab:memory_non_trainable}
\resizebox{1\linewidth}{!}{
\begin{tabular}{lrrrrrr}
\toprule
    \textbf{Method} & 
    \multicolumn{1}{c}{\textbf{DHFR}} &
    \multicolumn{1}{c}{\textbf{ENZYMES}} &
    \multicolumn{1}{c}{\textbf{NCI109}} &
    \multicolumn{1}{c}{\textbf{Mutagenicity}} &
    \multicolumn{1}{c}{\textbf{IMDB-BINARY}} &
    \multicolumn{1}{c}{\textbf{IMDB-MULTI}} \\
    \midrule
\textbf{MagEdgePool} & 76.2 & 163.8 & 158.9 & 179.0 & 162.1 & 93.1 \\ 
\textbf{\textit{MagEdgePool*}} & 83.7 & \textit{34.8} & \textit{47.0} & \textit{72.2} & \textit{11.5} & 65.2 \\ 
\textbf{SpreadEdgePool} & 61.9 & 60.1 & 177.9 & 93.1 & 204.1 & 75.8 \\ 
\textbf{\textit{SpreadEdgePool*}} & \textit{33.6} & 54.1 & 60.5 & 72.9 & 53.2 & \textit{23.8} \\ 
\midrule
\textbf{NDP} & \textbf{3.0} & \textbf{5.0} & \textbf{7.0} & 13.4 & 5.1 & 6.5 \\ 
\textbf{NMF} & 3.8 & 5.9 & 39.9 & \textbf{10.5} & 4.2 & 5.9\\
\textbf{Graclus} & 15.8 & 7.5 & 17.5 & 59.5 & \textbf{1.0} & \textbf{1.5}\\
\bottomrule
\end{tabular}}
\end{table}



\begin{figure}[t]
    \label{fig:graph_size}
    \centering
    \includegraphics[width=0.48\linewidth]{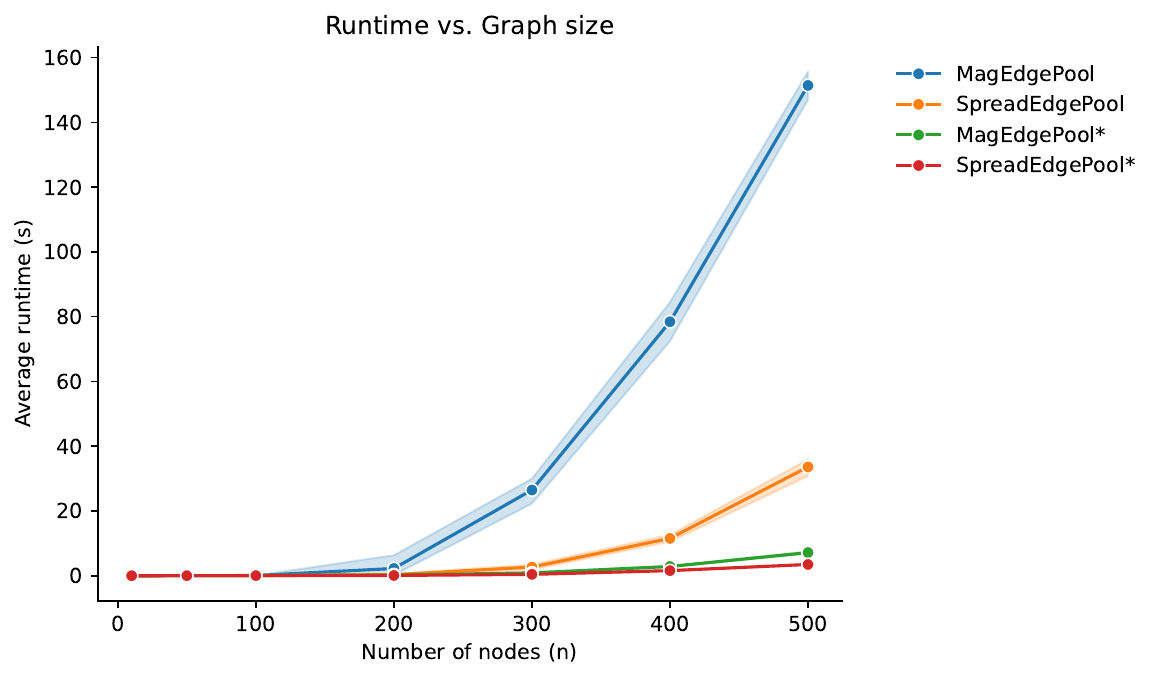}
\includegraphics[width=0.48\linewidth]{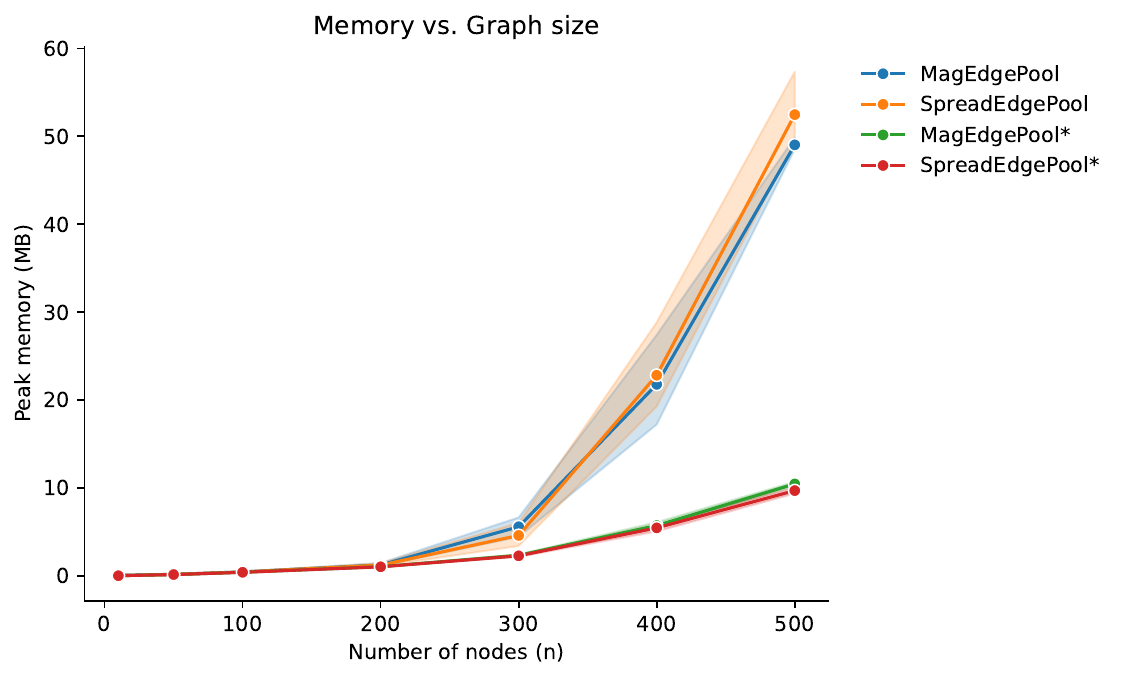}
    \caption{Runtime and memory costs of computing the pooling assignment for Erdős-Rényi  graphs with edge probability $0.005$ for increasing numbers of nodes. Lines show the mean and shaded areas the standard deviation across five repeats. SpreadEdgePool is notably faster than MagEdgePool. Distance approximations (marked with *) further speed up computations.}
\end{figure}

\subsubsection{Runtimes across Pooling Ratios }
Further, we compare the runtimes of training the models used in \Cref{sec:ratio_acc} to compare the accuracy of different pooling methods across varying pooling ratios. \Cref{fig:runtimes} then reports the mean runtime of training the GNN on one CV-fold for different choices of pooling ratios and pooling methods for the NCI1 dataset using either general convolutional layers or GIN layers. All models are trained as specified in \Cref{sec:classification} on a single GPU with 32~GB memory. 
Notably, we observe that  MagEdgePool and SpreadEdgePool overall perform on par with alternative pooling methods in terms of runtimes. SpreadEdgePool has a consistent advantage over MagEdgePool due to the higher computational efficiency of computing spread rather than magnitude. Notice that for increasing pooling ratios, our algorithm re-computes the edge scores repeatedly, leading to a less pronounced decrease in computational costs than alternative methods. Nevertheless, 
we conclude that it is generally more efficient to apply SpreadEdgePool than to rely on trainable approaches, such as TopK and SAGPool for this dataset, indicating the computational benefit of non-trainable graph pooling operations.

\begin{figure}[t]
    \label{fig:runtimes}
    \centering
    \includegraphics[trim={0 0 5cm 0},clip, width=0.4\linewidth]{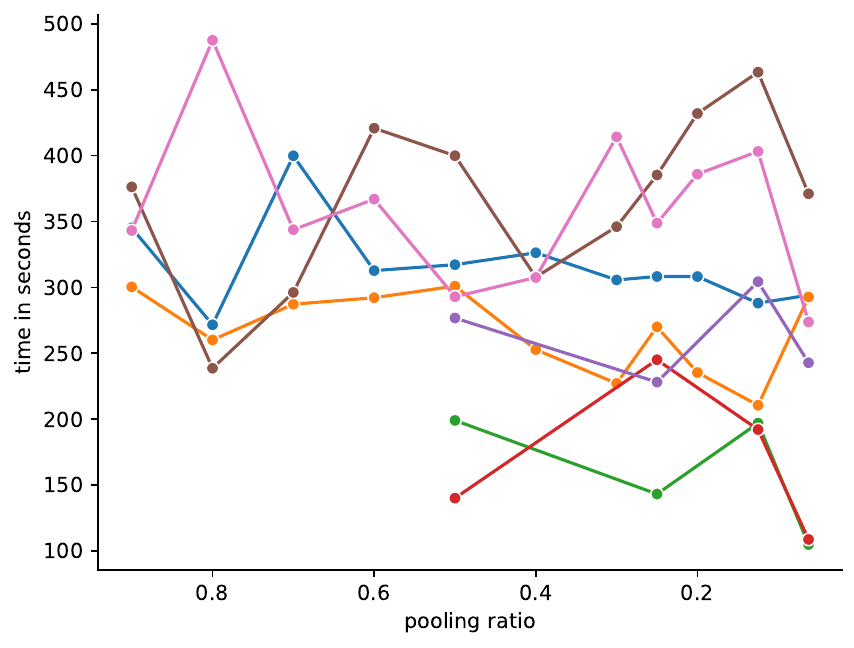}
    \includegraphics[trim={0 0 5cm 0},clip, width=0.4\linewidth]{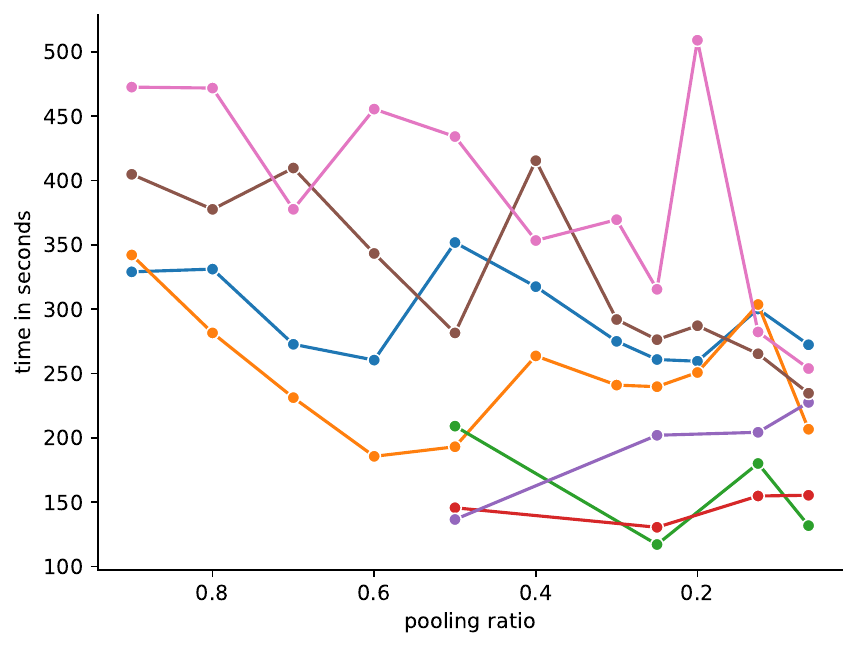}
    \raisebox{0.6\height}{\includegraphics[trim={0 0 0cm 0},clip, width=0.16\linewidth]{plots/structure_properties/mag_spectral_legend.pdf}}
     \caption{Runtime comparison for training the GNNs reported in \Cref{sec:classification} for NCI1 using different pooling layers. Plots show the mean time in seconds per run using general convolutional layers (left) or GIN layers (right).}
\end{figure}


\subsection{Node Feature Preservation and Expressivity}
\label{app:node_features}


Graph pooling should not only preserve graph structure, but also preserve relevant node feature information during pooling. That is, pooling is frequently used after initial rounds of message passing and data representations learnt by previous layers should be respected and effectively encoded by the pooling procedure \citep{grattarola2022understanding}. One way of investigating node feature retention during pooling, is to evaluate how well a graph can be reconstructed from its pooled version. We follow the experimental setup by \citet{grattarola2022understanding} to investigate. In particular, this experiment uses a model architecture, similar to the model proposed in \Cref{sec:classification}, where each graph gets pooled to around 50\% of nodes after the initial MLP and GNN layer. Then, the pooled graphs are up-scaled again by reversing the node selection step used by each pooling layer. From these unpooled graph representations, a further GNN and post-processing MLP layer are trained and the task is set to output the reconstructed node feature representation. This model is trained on each example graph using Adam to minimize the mean squared error (MSE)
between the input and output node features with a learning rate of $0.0005$ and early stopping on the training
loss with a patience of 1000 epochs and a tolerance of \(10^{-6}\). Each experiment is repeated three times across different random seeds. See \citet{grattarola2022understanding} for further explanations on the model architecture and experimental setup.

For this experiment, we expect SpreadEdgePool to perform comparably well as we specifically designed our pooling algorithm so that features are averaged during pooling. Further restricting the number of times a node can be merged effectively prevents the collapse of entire portions of the graph, which aids reconstruction. SpreadEdgePool pooling thus successfully encodes node information while allowing for a flexible choice of pooling ratio. This is confirmed by the results in \Cref{ae_table}, which highlight that SpreadEdgePool overall performs well at the features reconstruction task, especially for the sensor graph reaching low reconstruction errors. Alternative methods, in particular node drop approaches such as TopK and SAGPool, show notably worse feature preservation during this experiment indicating the benefits of more expressive pooling operations, such as SpreadEdgePool. 

Note that the results in  \Cref{ae_table} capture one specific aspect of feature preservation, namely how well the features of specific example graphs can be reconstructed. This experiment does not assess the generalisation capability of pooling layers. Further, the experimental setup assumes that it is relevant to preserve all node features during pooling, which might not be realistic in practice, where the aim of pooling could be to solely encode task-relevant feature representations. Nevertheless, as discussed above, this extended experiment gives evidence to support that our proposed pooling algorithm, SpreadEdgePool, is capable of outputting expressive feature representations and aggregates node features in a faithful manner, which is likely one of the reasons for its high performance in graph classification tasks.

\begin{table}[t]
\caption{Mean and standard deviation of the reconstruction MSE for reconstructing the original node positions from the pooled graph representations for different example graphs and pooling methods. Our proposed algorithm, SpreradEdgePool, does well at faithfully encoding the feature representations.}
  \label{ae_table}
\resizebox{\columnwidth}{!}{
\begin{tabular}{lllllll}
\toprule
                    & \textbf{Ring}       & \textbf{Sensor}     & \textbf{Barbell}    & \textbf{Community}  & \textbf{Erdős–Rényi} & \textbf{Torus}      \\ \midrule 
\textbf{SpreadEdge} & 5.47e-07 ± 2.63e-07 & 2.78e-05 ± 3.04e-07 & 3.42e-04 ± 1.42e-06 & 3.71e-03 ± 5.80e-05 & 6.49e-07 ± 3.33e-07 & 5.29e-07 ± 1.33e-07 \\ \midrule
\textbf{NDP}        & 3.08e-07 ± 3.57e-07 & 4.07e-05 ± 4.57e-06 & 4.54e-04 ± 2.01e-05 & 2.52e-01 ± 9.50e-06 & 1.46e-06 ± 1.18e-06 & 5.68e-07 ± 1.02e-07 \\
\textbf{Graclus}    & 6.87e-04 ± 7.56e-07 & 2.67e-06 ± 2.31e-06 & 1.82e-03 ± 3.22e-07 & 2.42e+00 ± 2.11e-04 & 4.76e-02 ± 3.75e-07 & 7.10e-07 ± 8.60e-08 \\
\textbf{NMF}        & 4.78e-07 ± 2.95e-07 & 1.96e-05 ± 1.39e-05 & 5.80e-04 ± 4.53e-07 & 6.06e-01 ± 1.43e-04 & 5.04e-07 ± 3.31e-07 & 2.52e-07 ± 2.93e-07 \\
\textbf{TopK}       & 1.21e-01 ± 8.23e-03 & 5.83e-03 ± 2.16e-03 & 1.55e-02 ± 1.10e-02 & 6.03e+00 ± 2.21e+00 & 5.30e-03 ± 7.49e-03 & 1.72e-01 ± 8.51e-03 \\
\textbf{SAGPool}    & 1.45e-01 ± 2.52e-02 & 2.01e-03 ± 2.73e-03 & 4.12e-02 ± 4.18e-02 & 4.76e+00 ± 1.57e+00 & 9.60e-05 ± 1.35e-04 & 1.88e-01 ± 4.63e-02 \\
\textbf{DiffPool}   & 8.63e-06 ± 4.73e-06 & 3.50e-04 ± 8.32e-05 & 6.50e-04 ± 1.01e-06 & 2.14e-01 ± 2.84e-01 & 3.74e-04 ± 1.46e-04 & 5.17e-05 ± 8.90e-06 \\
\textbf{MinCut}     & 2.55e-06 ± 2.68e-06 & 6.56e-06 ± 3.86e-06 & 2.35e-06 ± 1.50e-06 & 1.80e-04 ± 2.01e-04 & 1.44e-06 ± 5.24e-07 & 1.49e-06 ± 9.81e-07 \\ \midrule
\end{tabular}
}
\end{table}


\begin{figure}[t]
  \centering
  \includegraphics[trim={0 0 0cm 0},clip, width=0.32\textwidth]{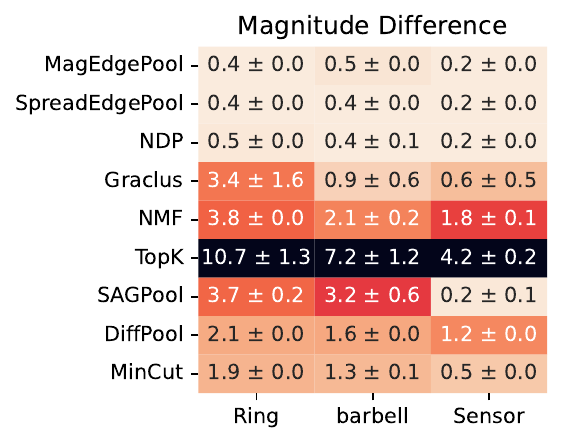}
  \includegraphics[trim={0 0 0cm 0},clip,width=0.32\textwidth]{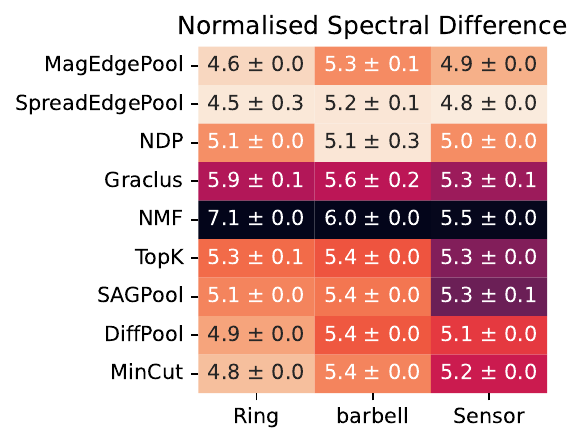}  
  \includegraphics[trim={0 0 0cm 0},clip,width=0.32\textwidth]{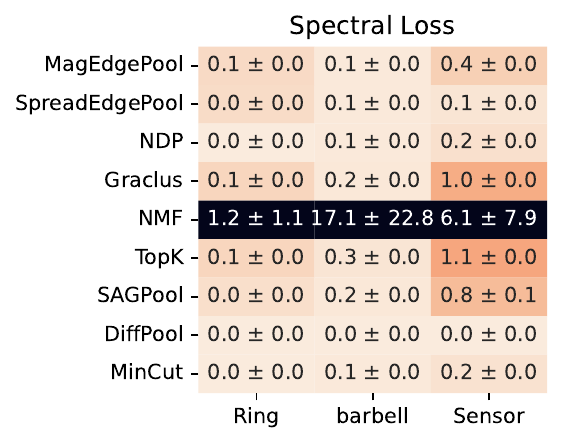}
  \caption{Structure preservation measures for the examples in \Cref{fig:overview_graphs}. Pooling is repeated for three different random seeds and the annotations report the means and standard deviations of the structure preservation scores.} \label{fig:heatmaps}
\end{figure}

\subsection{Overview Experiment}
\label{app:overview_experiment_results}

Expanding on the qualitative comparison between the example graphs in \Cref{fig:overview_graphs}, \Cref{fig:heatmaps} shows quantitative structure preservation measures for all example graphs and pooling methods. Specifically, we summarise the spectral loss by \citet{grattarola2022understanding}, the spectral distance between the normalised graph Laplacians, and the magnitude difference between the pooled and original graphs as further detailed in \Cref{sec:structure}.

\Cref{fig:heatmaps} demonstrates that SpreadEdgePool and MagEdgePool do not only reach low magnitude differences across these three example graphs, they also show comparatively low spectral distances supporting our findings in \Cref{sec:structure}. Further, we observe  that methods that show worse visual preservation of graph structures, such as NMF, TopK, DiffPool, and MinCut, also reach consistently higher magnitude differences and spectral distances supporting our claim that these methods fail to faithfully preserve graph structures during pooling to varying extents.


\subsection{Graph Classification}
\label{app:extended_classification}

\Cref{extended-class} further reports extended classification results on additional datasets extending on the results shown in \Cref{classification}. We chose not to report on these datasets in the main text because they showed fewer and less notable differences between pooling methods. Note that for the open graph benchmark MolHIV dataset, instead of using stratified cross-validation, we evaluate each model across predefined training, test and validation splits and evaluate their performance across 5 random seeds. Further, we report AUROC as the performance metric for MolHIV because is the suggested evaluation metric for this very imbalanced dataset. All other results are reported as in \Cref{classification} via the mean and standard deviation of the test accuracy across 10-fold stratified cross-validation. In agreement with our main results, we observe that MagEdgePool and SpreadEdgePool constitute high performing general-purpose pooling methods that reach top performance across these extended datasets. In particular, for smaller biological datasets, such as MUTAG, BZR, or COX2, pooling via magnitude or spread notably improves on the GNN that uses no pooling layer, which indicates the beneficial effects of structure-aware pooling for graph learning.

In order to further clarify the classification performance comparison between methods we visualise the ranking of each model choice via a critical difference diagram. This visualisation uses the Friedman test, a non-parametric test for the performance difference between multiple classifiers with repeat measurements, followed by the Nemenyi post-hoc test that facilitates the comparison across all classifiers \citep{demvsar2006statistical}. In our case, we use the Python package \texttt{scikit-posthocs} \citep{terpilowski2019scikit} to apply this test to the mean performance scores across datasets. The critical difference plot then links all methods that are found to not be statistically different by a horizontal bar. The results reported in \Cref{fig:critical} confirm that our methods are amongst the best-performing group of pooling layers for the datasets reported in \Cref{classification}. Similarly, \Cref{fig:critical_all} shows the 
critical difference diagram and mean ranks across all tasks included in \Cref{classification} or \Cref{extended-class}. This extended ranking further strengthens our finding that MagEdgePool and SpreadEdgePool reach top performance across classification tasks.


\begin{table}[t]
  \caption{Classification performance of different pooling layers across multiple datasets. For each dataset, the best performing model is marked in bold and models that do not perform significantly different from the best performing model are coloured green.}
  \label{extended-class}
  \centering
  \resizebox{\columnwidth}{!}{
  \begin{tabular}{lccccccccc}
    \toprule
    \textbf{Method} & \textbf{MolHIV (AUROC)} & \textbf{MUTAG} & \textbf{COX2} & \textbf{BZR} & \textbf{BZR\_MD} & \textbf{COX2\_MD} & \textbf{DHFR\_MD} & \textbf{ER\_MD} & \textbf{AIDS} \\
    \midrule
    \textbf{No Pooling} & {\color[HTML]{747474} 74.4 ± 0.7} & {\color[HTML]{747474} 81.6 ± 6.3} & {\color[HTML]{747474}83.8 ± 3.8} & {\color[HTML]{747474}76.1 ± 9.1} & {\color[HTML]{747474} 73.7 ± 5.3} & {\color[HTML]{747474}70.2 ± 8.1 } & {\color[HTML]{747474}71.3 ± 1.9} & {\color[HTML]{747474}74.9 ± 0.8} & {\color[HTML]{747474}99.0 ± 0.1} \\
    \midrule
    \textbf{MagEdge} & {\color[HTML]{00B050} 64.6 ± 8.7} & {\color[HTML]{00B050} 84.0 ± 7.4} & {\color[HTML]{00B050} 86.0 ± 7.3} & {\color[HTML]{00B050} \textbf{88.1 ± 10.0}} & 65.9 ± 2.2 & {\color[HTML]{00B050} 76.5 ± 9.2} & {\color[HTML]{00B050} \textbf{77.9 ± 9.4}} & {\color[HTML]{00B050} 79.9 ± 7.1} & {\color[HTML]{00B050} \textbf{99.7 ± 0.1}} \\
    \textbf{SpreadEdge} & {\color[HTML]{00B050} 70.1 ± 2.9} & {\color[HTML]{00B050} 85.9 ± 7.4} & {\color[HTML]{00B050} 85.1 ± 5.6} & {\color[HTML]{00B050} 87.4 ± 9.5} & {\color[HTML]{00B050} 69.7 ± 10.0} & {\color[HTML]{00B050} \textbf{77.1 ± 9.0}} & {\color[HTML]{00B050} 74.6 ± 11.1} & {\color[HTML]{00B050} 83.1 ± 4.0} & {\color[HTML]{00B050} \textbf{99.7 ± 0.1}} \\
    \midrule
    \textbf{NDP} & {\color[HTML]{00B050} 65.2 ± 7.3} & {\color[HTML]{00B050} \textbf{91.6 ± 2.4}} & {\color[HTML]{00B050} \textbf{86.1 ± 7.1}} & {\color[HTML]{00B050} 86.3 ± 9.4} & {\color[HTML]{00B050} 72.3 ± 11.3} & {\color[HTML]{00B050} 73.9 ± 10.6} & {\color[HTML]{00B050} 72.2 ± 11.4} & {\color[HTML]{00B050} \textbf{84.2 ± 1.7}} & 99.6 ± 0.1 \\
    \textbf{Graclus} & {\color[HTML]{00B050} 70.1 ± 5.3} & {\color[HTML]{00B050} 87.4 ± 9.5} & {\color[HTML]{00B050} 79.2 ± 12.5} & {\color[HTML]{00B050} 80.1 ± 7.1} & {\color[HTML]{00B050} 72.0 ± 9.6} & {\color[HTML]{00B050} 75.4 ± 9.9} & {\color[HTML]{00B050} 71.1 ± 12.3} & {\color[HTML]{00B050} 81.7 ± 3.9} & 99.5 ± 0.1 \\
    \textbf{NMF} & {\color[HTML]{00B050} 73.2 ± 1.3} & {\color[HTML]{00B050} 84.0 ± 9.4} & {\color[HTML]{00B050} 83.8 ± 8.7} & {\color[HTML]{00B050} 80.8 ± 8.5} & {\color[HTML]{00B050} 69.0 ± 10.2} & {\color[HTML]{00B050} 70.4 ± 6.6} & {\color[HTML]{00B050} 70.8 ± 8.6} & 80.9 ± 2.0 & 97.0 ± 1.7 \\
    \textbf{TopK} & {\color[HTML]{00B050} 72.7 ± 1.8} & 81.9 ± 3.9 & {\color[HTML]{00B050} 76.4 ± 7.2} & {\color[HTML]{00B050} 75.8 ± 8.0} & {\color[HTML]{00B050} 68.4 ± 8.5} & 65.9 ± 7.0 & {\color[HTML]{00B050} 69.5 ± 3.3} & 74.0 ± 1.0 & 99.3 ± 0.1 \\
    \textbf{SAGPool} & {\color[HTML]{00B050} \textbf{74.3 ± 3.2}} & 82.9 ± 2.3 & 76.8 ± 7.8 & {\color[HTML]{00B050} 77.8 ± 5.0} & {\color[HTML]{00B050} 66.8 ± 7.0} & {\color[HTML]{00B050} 67.0 ± 6.4} & {\color[HTML]{00B050} 70.5 ± 2.6} & 75.6 ± 1.2 & 99.0 ± 0.1 \\
    \textbf{DiffPool} & {\color[HTML]{00B050} 72.1 ± 1.0} & 83.3 ± 2.8 & {\color[HTML]{00B050} 76.9 ± 6.5} & {\color[HTML]{00B050} 76.8 ± 10.1} & {\color[HTML]{00B050} 72.3 ± 3.0} & {\color[HTML]{00B050} 69.2 ± 2.1} & {\color[HTML]{00B050} 67.9 ± 2.6} & 73.4 ± 1.1 & 98.8 ± 0.2 \\
    \textbf{MinCut} & {\color[HTML]{00B050} 70.3 ± 3.0} & 80.6 ± 3.7 & {\color[HTML]{00B050} 79.1 ± 4.4} & 70.8 ± 8.6 & {\color[HTML]{00B050} 69.0 ± 5.0} & {\color[HTML]{00B050} 70.0 ± 3.7} & {\color[HTML]{00B050} 69.7 ± 1.9} & 73.3 ± 1.6 & 99.2 ± 0.1 \\
    \bottomrule
  \end{tabular}}
\end{table}

\begin{figure}[h!]
  \centering
  \includegraphics[width=0.9\textwidth]{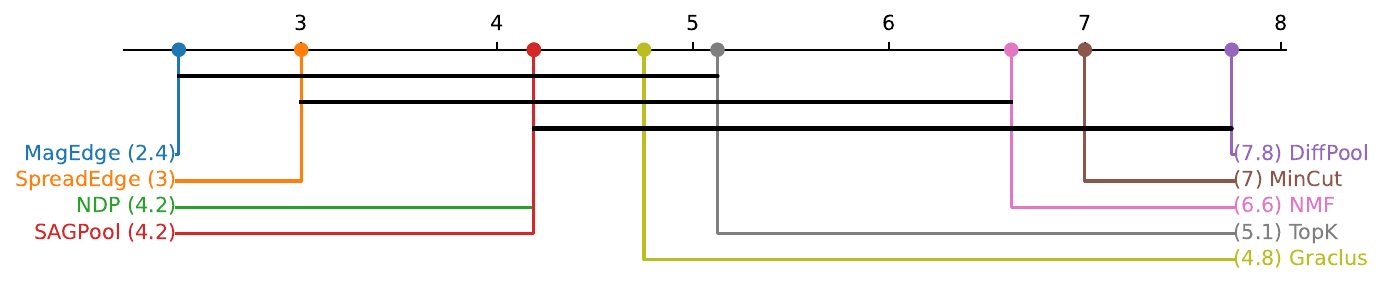}
  \caption{Critical difference diagram for the classification results from \Cref{classification}. Each label corresponds to a choice of pooling method and the method's mean rank across classification tasks. Groups of methods that are found to not be statistically different are linked by horizontal bars.} \label{fig:critical}
\end{figure}

\begin{figure}[h!]
  \centering
  \includegraphics[width=0.9\textwidth]{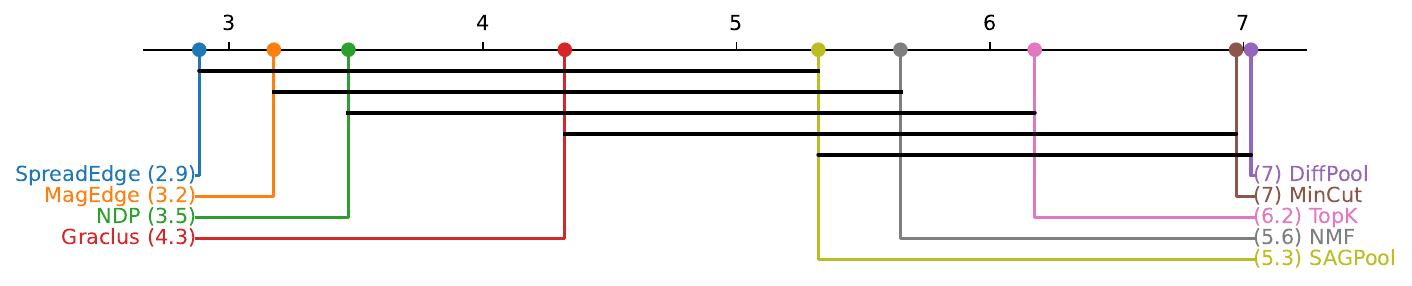}
  \caption{Critical difference diagram for the classification results from \Cref{classification} and \Cref{extended-class}. Each label corresponds to a choice of pooling method and the method's mean rank across classification tasks. Groups of methods that are found to not be statistically different are linked by horizontal bars.} \label{fig:critical_all}
\end{figure}

\subsection{Comparison with EdgePool}
\label{app:comparison_EdgePool}


To extend our evaluation, we compare with the \texttt{pytorch\_geometric} implementation of EdgePool~\citep{diehl2019edge, diehl2019towards}. To do so, we implement the GNN architecture specified in \Cref{sec:classification} in \texttt{PyTorch} and repeat our main experiment reported in \Cref{classification}. We find that there is no significant difference in performance between our methods and EdgePool when computed on e.g. IMDB-M, where EdgePool reaches an accuracy of 47.6  ± 3.3, or on IMDB-B, where EdgePool reaches an accuracy of 70.9  ± 4.5. 
As further reported in \Cref{classification_edge_ablation}, similar patterns hold across datasets with our proposed pooling methods reaching comparable accuracies to EdgePool across datasets. 
As a major advantage, our methods have notably lower computational costs during training than EdgePool as reported in \Cref{app:empirical_efficiency} and permit choosing flexible pooling ratios. Our methods thus make edge-contraction pooling scalable to larger datasets and enable significantly faster GNN training. On IMDB-B for example, EdgePool takes 362.2 seconds and 666.6 MB for 200 epochs during training, but our method SpreadEdgePool only requires 62.7 seconds and 283.4 MB. 
We thus find that learning feature-based edge scores as done by EdgePool, is not necessary to ensure classification performance, but rather adds a notable computational burden.

\begin{table}[t]
  \caption{Mean and standard deviation of the graph classification accuracy of different pooling methods across datasets.}
  \label{classification_edge_ablation}
  \centering
  \resizebox{0.9\columnwidth}{!}{
  \begin{tabular}{lccccccccc}
    \toprule
    \textbf{Method} & \textbf{ENZYMES} & \textbf{PROTEINS} & \textbf{Mutagenicity} & \textbf{DHFR} & \textbf{IMDB-B} & \textbf{IMDB-M} & \textbf{NCI1} & \textbf{NCI109}  \\
    \midrule
    \textbf{No Pooling} & {\color[HTML]{747474} 87.3 ± 2.5} & {\color[HTML]{747474} 73.8 ± 0.8}& {\color[HTML]{747474} 80.1 ± 1.3} & {\color[HTML]{747474} 71.4 ± 1.9} & {\color[HTML]{747474} 69.7 ± 0.7} & {\color[HTML]{747474} 46.0 ± 0.7} & {\color[HTML]{747474} 76.5 ± 1.8} & {\color[HTML]{747474} 74.3 ± 2.0} \\
    \midrule
    \textbf{MagEdge} & 91.5 ± 3.2 & 76.4 ± 3.9 & 77.5 ± 2.7 & 88.0 ± 3.8 & 72.4 ± 1.7 & 47.4 ± 1.7 & 72.7 ± 2.4 & 73.0 ± 3.3 \\
    \textbf{SpreadEdge} & 92.8 ± 1.6 & 75.1 ± 3.1 & 76.0 ± 4.0 & 90.7 ± 3.8 & 71.8 ± 1.5 & 47.3 ± 1.7 & 73.4 ± 2.5 & 71.8 ± 1.8 \\
    \textbf{EdgePool} & 92.1 ± 1.4 & 
    75.8 ± 4.6 & 75.3 ± 3.1 & 83.2 ± 1.0
    & 70.9 ± 4.5 & 47.6 ± 3.3 & 71.9 ± 3.0 & 72.7 ± 2.6 \\
    \textbf{Random} & 88.1 ± 2.2 & 73.7 ± 1.1 &  73.1 ± 2.1 & 82.1 ± 3.3 & 70.1 ± 0.7 & 45.6 ± 0.8 & 71.1 ± 2.2 &  68.0 ± 3.0 
    \\
    \bottomrule
  \end{tabular}}
\end{table}

\subsection{Comparison with Randomised Pooling}
\label{app:random}

Research by \citet{mesquita2020rethinking} has demonstrated that randomised ablations of Graclus and DiffPool using random cluster assignment can reach competitive performance to established pooling methods on frequently used graph classification datasets, such as IMDB-B, PROTEINS, NCI109, DD and MOLHIV. Questioning standard assumptions of hierarchical pooling methods,  \citet{mesquita2020rethinking} thus question the utility of localised graph pooling.  
We find that this discussion and the observations by \citet{mesquita2020rethinking} are closely connected to the following challenges: 
\begin{itemize}[noitemsep, leftmargin=1em, topsep=0pt]
    \item Graph learning tasks do not necessarily need the graph structure to reach high performance~\citep{coupette2025no, speicher2024graph}. 
\item Evaluating the performance of pooling layers is dependent on the specific GNN architecture and benchmarking practices used \citep{errica2020fair, mesquita2020rethinking}. 
\item Randomised modifications of expressive pooling operators can retain the
expressivity of the underlying pooling operation 
\citep{bianchi2024expressive}. 
\end{itemize}
Hence, if all task-relevant information is already captured by the learnt features before pooling, randomised baselines can reach competitive performance. Nevertheless, destroying the structure of graphs during pooling can reduce the effectiveness of preceding message-passing layers \citep{bianchi2024expressive}. Our pooling approach aims to mitigate this risk by presenting
geometry-aware and expressive edge-pooling methods. 

As an ablation study, we add randomised edge pooling to our main experiment~(cf.\ \Cref{classification}) by randomly merging pairs of nodes into super-nodes to pre-compute the pooling assignment and averaging their features during training. 
We find that for NCI109 randomised pooling reaches an accuracy of 68.0 ± 3.0 as compared to 71.8 ± 1.8 for SpreadEdgePool. On NCI1 random pooling reaches an accuracy of 71.1 ± 2.2 compared to 73.4 ± 2.5 for SpreadEdgePool. Thus, the 
random baseline performs worse than any alternative pooling method on NCI109
and decreases the accuracy on NCI1 slightly. We hypothesise that this decrease is due to graph structure being relevant for these learning tasks \citep{coupette2025no} and our pooling method better preserves structural properties. 
In contrast, we observe fewer performance differences for other standard datasets, such as ENZYMES, PROTEINS, IMDB-B, or IMDB-M. This confirms the findings by \citet{coupette2025no} that on these datasets GNNs can reach competitive accuracies even when using completely randomised adjacencies as inputs. 
We thus find that an expressive aggregation of the node features during pooling can suffice~\citep{bianchi2024expressive} and preserving graph structure might not be necessary 
for solving these tasks to begin with. 


\subsection{Preserving Graph Structure}
\label{app:struct}
Extending on the results reported in \Cref{fig:pooling_ratio_structure_main} and \Cref{sec:structure}, we further report the distribution of structure preservation measures across pooling ratios. Specifically, \Cref{fig:pooling_ratio_structure} shows line plots that summarise the mean magnitude difference relative to the original graph and the normalised spectral distance between all original and pooled graphs from the NCI1 dataset in the leftmost column. The remaining plots illustrate the quantiles of the same measures split up per pooling method. Overall, these individual plots support our assessment that MagEdgePool and SpreadEdgePool consistently reach low magnitude differences and comparably low spectral distances, with these trends being more pronounced in terms of the relative difference in magnitude after pooling.
Further, we repeat the experiment reported in \Cref{fig:pooling_ratio_structure_main} for further datasets, specifically for DHFR, PROTEINS, and ENZYMES and summarise the results in \Cref{fig:pooling_ratio_structure_ablation}. We observe consistent trends across these examples that support the results reported for NCI1 in \Cref{fig:pooling_ratio_structure_main}. Specifically we find that our proposed pooling methods, MagEdgePool and SPreadEdgePool, reach the lowest magnitude differences across pooling ratios and datasets, and that this corresponds to low spectral distances between the pooled and original graphs.

\begin{figure}[tbh]
  \centering
  \includegraphics[width=0.22\linewidth]{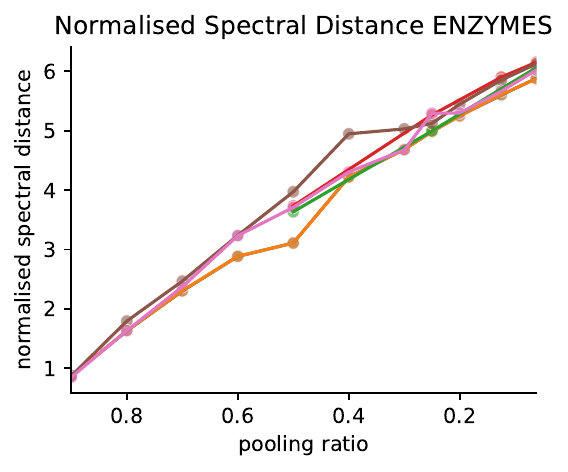} 
   \raisebox{6pt}{\includegraphics[width=.24\linewidth]{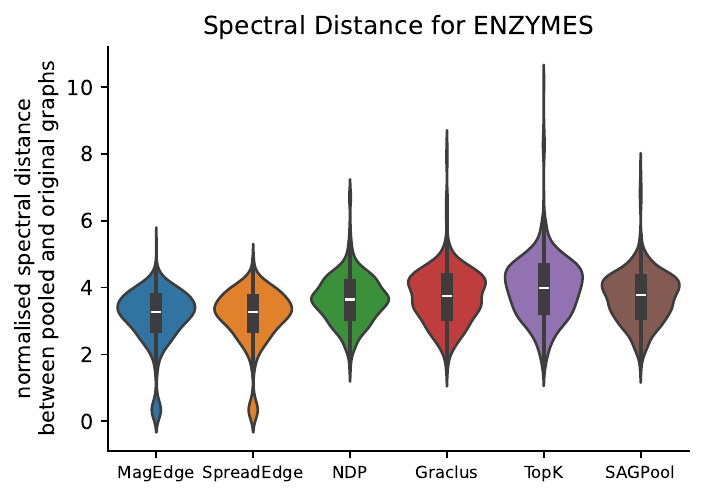}}
   \includegraphics[width=0.22\linewidth]{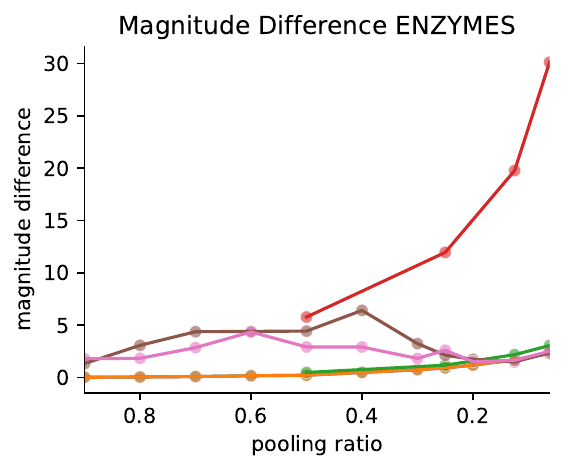}
   \raisebox{6pt}{\includegraphics[width=0.24\linewidth]{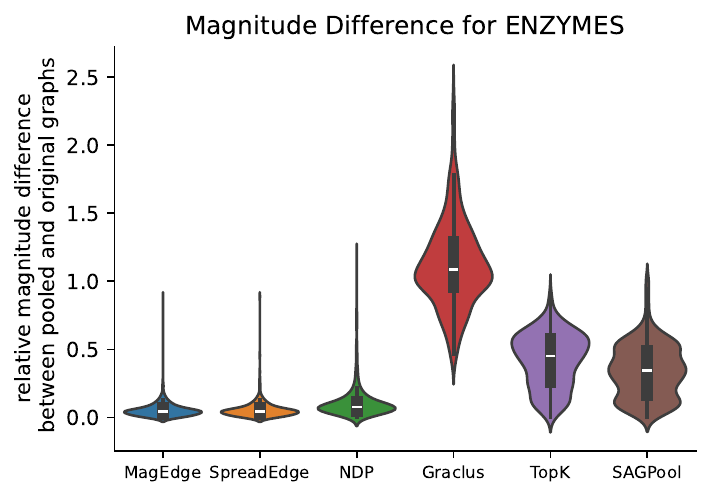}}
   
   \includegraphics[width=0.22\linewidth]{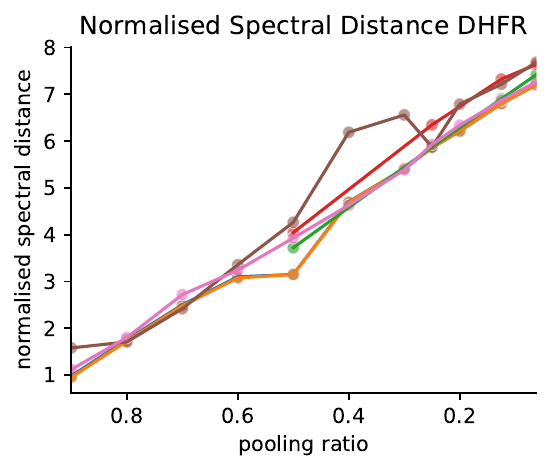} 
   \raisebox{6pt}{\includegraphics[width=.24\linewidth]{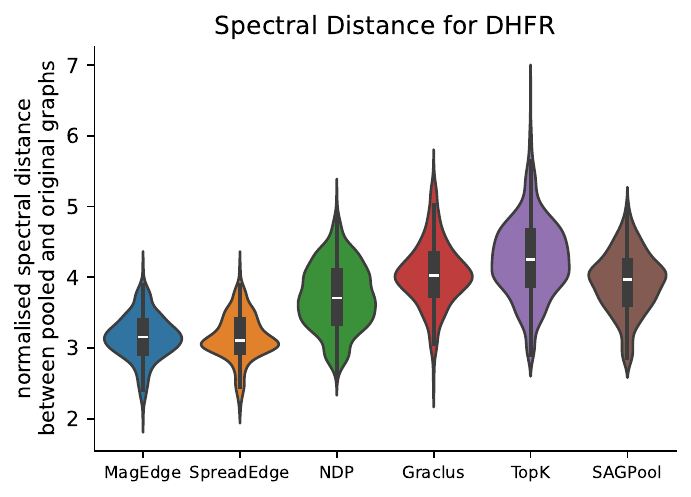}}
   \includegraphics[width=0.22\linewidth]{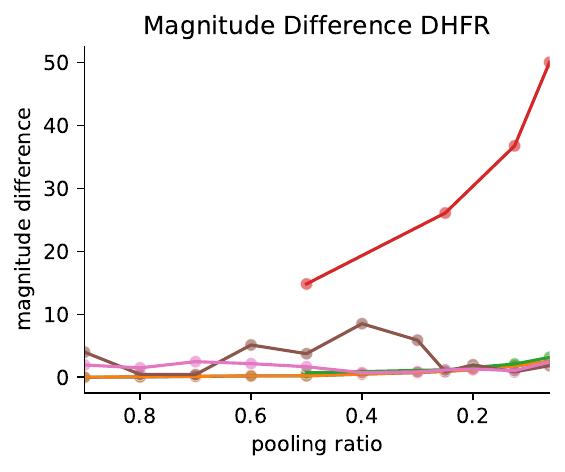}
   \raisebox{6pt}{\includegraphics[width=0.24\linewidth]{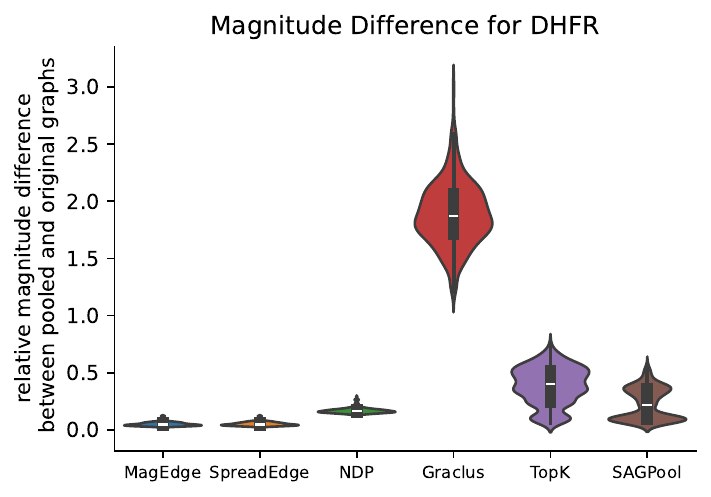}}

   \includegraphics[width=0.22\linewidth]{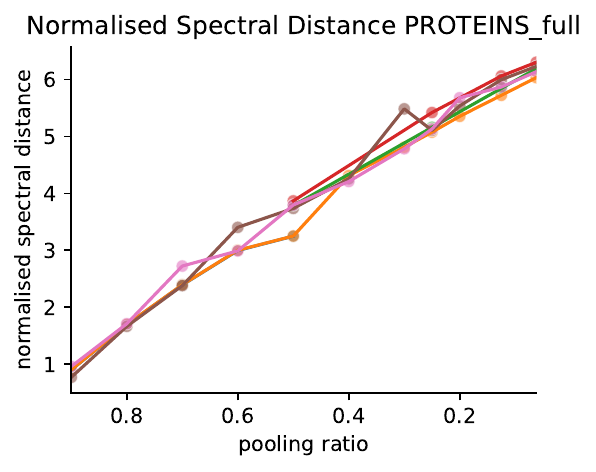} 
   \raisebox{6pt}{\includegraphics[width=.24\linewidth]{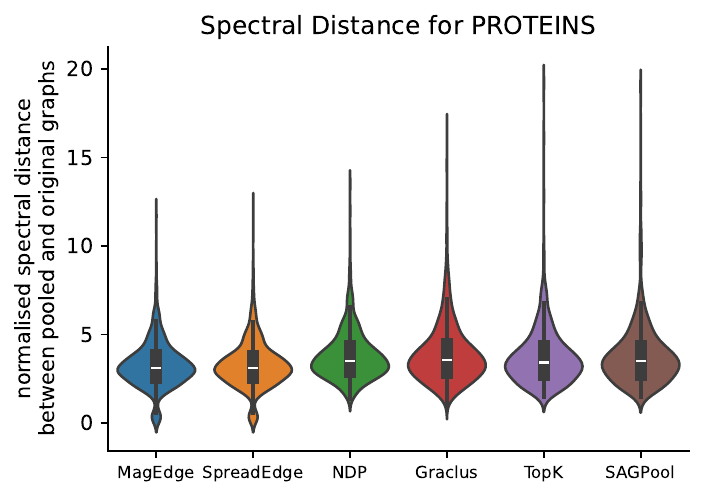}}
   \includegraphics[width=0.22\linewidth]{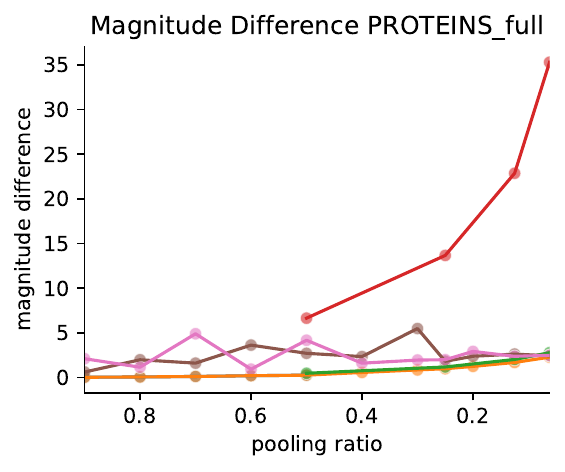}
   \raisebox{6pt}{\includegraphics[width=0.24\linewidth]{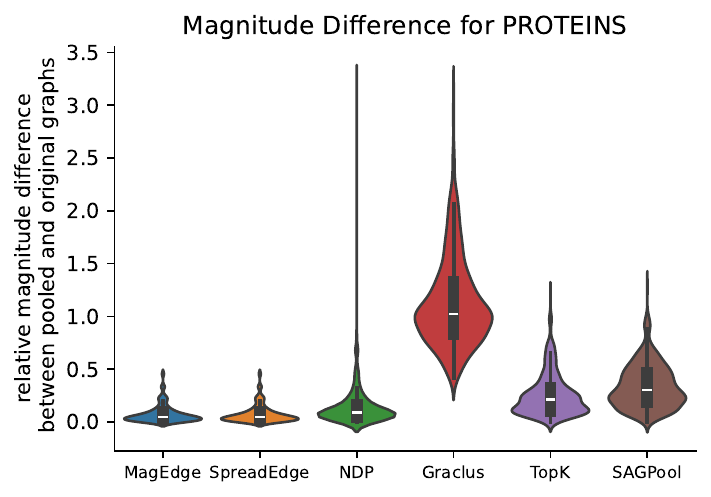}}

    \caption{Structure preservation for all graphs in the ENZYMES, DHFR, and PROTEINS datasets across pooling ratios. Left: The spectral distance between the normalised Laplacians of the original and the pooled graphs. Right: The relative difference in magnitude, summarising proportional differences in structural diversity after pooling. Violin plots show the variability 
    across graphs at pooling ratio 0.5. 
  }\label{fig:pooling_ratio_structure_ablation}
\end{figure}

\begin{figure}[tbp]
  \centering
  \includegraphics[width=0.9\linewidth]{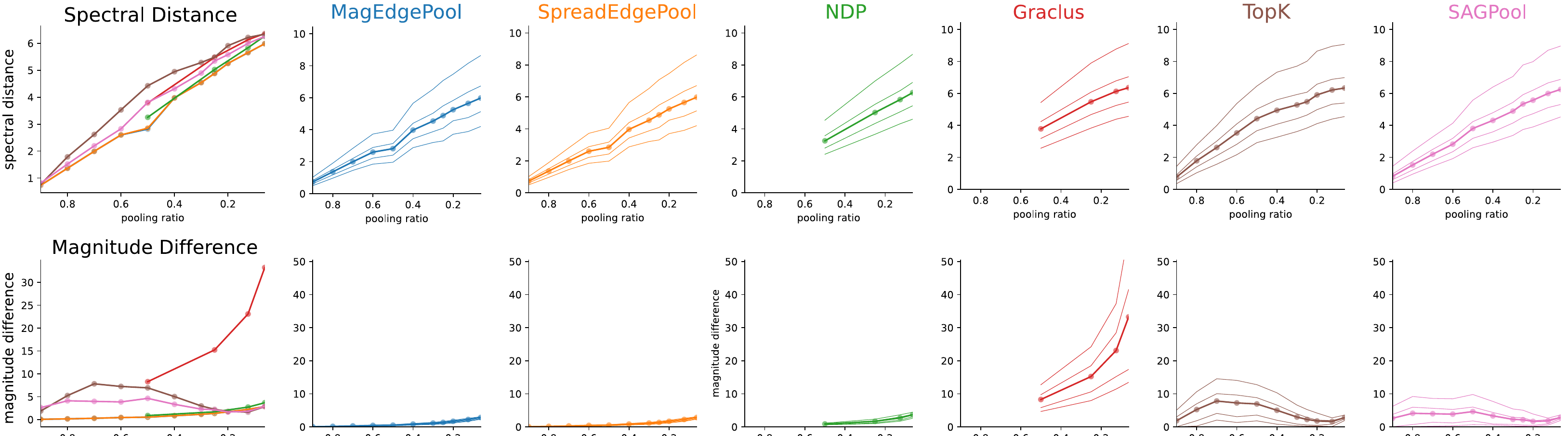}
  \caption{Structure preservation for the NCI1 dataset across pooling ratios. Top row: The spectral distance between the original and pooled graphs. Bottom row: The relative difference in magnitude. Bold lines show the mean values of each score across graphs and thin lines the 10\%, 25\%, 75\% and 90\% quantiles.}\label{fig:pooling_ratio_structure}
\end{figure}

\end{document}